\documentclass[main]{article}

\usepackage[preprint]{neurips_2026}
\usepackage[utf8]{inputenc} 
\usepackage[T1]{fontenc}    
\usepackage[hidelinks]{hyperref}       
\usepackage{url}            
\usepackage{booktabs}       
\usepackage{amsfonts}       
\usepackage{nicefrac}       
\usepackage{microtype}      
\usepackage{xcolor}         

\usepackage{amsmath}
\usepackage[scr]{rsfso} 

\newcommand{\1}{\mathbf{1}} 
\newcommand{\amb}{\mathcal{K}} 
\newcommand{\borel}[1]{\mathscr{B}(#1)} 
\newcommand{\cdotx}{\,\cdot\,}
\newcommand{\conf}{\delta} 
\DeclareMathOperator{\diag}{diag}
\newcommand{\data}{\mathcal{D}}
\renewcommand{\d}{\mathrm{d}} 

\let\E\relax
\newcommand{\E}{\mathbb{E}} 

\newcommand{\Hilbert}{\mathscr{H}} 
\newcommand{\innerH}[3]{\langle #1, #2 \rangle_{#3}} 
\newcommand{\inv}{^{-1}} 
\newcommand{\law}{{\textcolor{red}{\boldsymbol{\mu}}}}
\newcommand{\lawseq}{{\textcolor{accent}{\boldsymbol{\mu}_T}}}
\newcommand{\me}{{\Psi}} 
\newcommand{\N}{\mathbb{N}} 
\newcommand{\norm}[1]{\left\lVert#1\right\rVert} 

\renewcommand{\P}{\mathbb{P}} 
\newcommand{\Probs}{\mathscr{P}} 
\newcommand{\p}{\mathbf{p}} 

\newcommand{\R}{\mathbb{R}} 

\newcommand{\SSR}[4]{#1\!\preceq^{#4}_{#3}\!#2} 
\newcommand{\T}{^{\top\!}} 

\newcommand{\x}{\mathbf{x}} 
\newcommand{\X}{\mathbf{X}}
\newcommand{\Xdomain}{\mathbb{X}} 

\newcommand{\y}{\mathbf{y}}
\newcommand{\Ydomain}{\mathbb{Y}} 

\usepackage{scalerel}

\usepackage{enumitem}\setlist{itemsep=6pt, topsep=0pt, parsep=0pt, leftmargin=*}       
\usepackage{subcaption}

\usepackage{tikz}
\usetikzlibrary{calc, automata, arrows.meta, positioning, shadows, fit, shapes.misc, decorations.pathreplacing, backgrounds, math, pgfplots.groupplots}
\tikzstyle{epibox} = [
    fill=grayfilling,
    drop shadow=grayshadow,
    rectangle, 
    text centered,
    rounded corners,
    inner sep=8pt,
    inner ysep=8pt
    ]
\tikzstyle{_epibox} = [
    fill=grayfilling,
    rectangle, 
    text centered,
    rounded corners,
    inner sep=8pt,
    inner ysep=8pt
    ]
    
\definecolor{accent}{rgb}{0.78431373,0,1}
\colorlet{accentwashed}{accent!10}
\definecolor{grayfilling}{gray}{0.95} 
\definecolor{grayshadow}{gray}{0.5} 

\usepackage[framemethod=tikz]{mdframed}

\newmdenv[
  backgroundcolor=accentwashed,
  shadow=false,
  hidealllines=true,
  leftline=true,
  linecolor=accent,
  linewidth=3pt,
  skipabove=3pt,
  skipbelow=-5pt,
  innerleftmargin=10pt,
  innerrightmargin=10pt,
  innertopmargin=7pt,
  innerbottommargin=5pt
]{accentbox}

\newmdenv[
  backgroundcolor=red!10,
  shadow=false,
  hidealllines=true,
  leftline=true,
  linecolor=red,
  linewidth=3pt,
  skipabove=4pt,
  skipbelow=-4pt,
  innerleftmargin=10pt,
  innerrightmargin=10pt,
  innertopmargin=10pt,
  innerbottommargin=10pt
]{redbox}

\newmdenv[
  backgroundcolor=gray!10,
  shadow=false,
  hidealllines=true,
  leftline=true,
  linecolor=gray,
  linewidth=3pt,
  skipabove=4pt,
  skipbelow=-4pt,
  innerleftmargin=10pt,
  innerrightmargin=10pt,
  innertopmargin=10pt,
  innerbottommargin=10pt
]{graybox}

\usepackage{amsthm}
\theoremstyle{plain}
\newtheorem{assum_}{Assumption}
\newtheorem{prop_}{Proposition}
\newtheorem{thm_}{Theorem}
\newtheorem{cor_}{Corollary}

\newtheorem{remark}{Remark}

\theoremstyle{definition}
\newtheorem{rem_}{Remark}
\newtheorem{prob_}{Problem}
\newenvironment{problem}
{\begin{accentbox}\begin{prob_}}
{\end{prob_}\end{accentbox}}
\newenvironment{assumption}
{\begin{assum_}}
{\end{assum_}}
\newenvironment{proposition}
{\begin{accentbox}\begin{prop_}}
{\end{prop_}\end{accentbox}}
\newenvironment{theorem}
{\begin{accentbox}\begin{thm_}}
{\end{thm_}\end{accentbox}}
\newenvironment{corollary}
{\begin{accentbox}\begin{cor_}}
{\end{cor_}\end{accentbox}}

\usepackage{import}
\usepackage{xifthen}
\usepackage{pdfpages}
\usepackage{transparent}

\title{Safety Certification is Classification}

\author{%
  Oliver Schön\thanks{Corresponding author}\\
  ETH Zürich\\
  Zürich, Switzerland \\
  \texttt{oschoen@ethz.ch} \\
\And
  Licio Romao\\
  Technical University of Denmark \\
  Copenhagen, Denmark \\
  \texttt{licio@dtu.dk} \\
\And
  Sadegh Soudjani\\
  Max Planck Institute for Software Systems\\
  Kaiserslautern, Germany \\
  \texttt{sadegh@mpi-sws.org} \\
}

\begin{document}

\maketitle

\begin{abstract}
    The goal of this paper is certifying safety of dynamical systems subject to uncertainty. Existing approaches use trajectory data to estimate transition probabilities, and compute safety probabilities recursively via dynamic programming (DP). This recursion may lead to compounding errors in the certified safety probability, thus collapsing to a vacuous lower bound for growing horizons $T$. 
    We propose a kernel embedding framework that treats safety certification as a classification problem on trajectory data, directly estimating the $T$-step safety probability without recursion.
    We show that the framework subsumes well-established approaches from the literature (e.g., barrier certificates, robust Markov models) as special cases, and allows us to go beyond their limitations. 
    As the main consequence, it bypasses compounding error across the horizon and enables certification for systems with non-Markovian dynamics.
    We demonstrate that direct estimators remain stable independent of the certification horizon and in the non-Markovian setting, whilst DP-based certificates silently go unsound---confirmed in simulation on a neural-controlled quadrotor.
\end{abstract}

\section{Introduction}\label{sec:intro}
The pace of adoption of learning-based control into safety-critical applications has reached an unprecedented level, evidenced by rapidly expanding real-world deployments of autonomous vehicles, medical devices, and robotics across various domains.
With this surge in deployment, the need for rigorous safety certification has become paramount~\citep{brunke2022safe,seshia2022toward}. 
Consider, as a concrete example, certifying that a quadrotor under a learned controller will safely navigate to its goal among obstacles and under stochastic disturbances.
In practice, asserting such safety problems is still approached almost exclusively through costly testing \citep{kalra2016driving}, including recent efforts to make testing more efficient through adversarial perturbations~\citep{zhang2022adversarial} or scenario generation~\citep{rempe2022generating}. 
In a world governed by non-determinism, however, no finite test set can cover every conceivable edge case, rendering testing---however cleverly sampled---unfit as a rigorous certification mechanism for large-scale deployment.

Formal certification methods, by contrast, \textbf{deliver mathematical guarantees} rather than empirical evidence---but most certification methods were designed for systems with known dynamics and rarely scale to the complexities of real-world deployments. For instance, in robotics, the leading approaches based on \emph{Hamilton--Jacobi reachability} suffer from exponential complexity in the system's dimensionality, often remaining limited to low-dimensional, single-agent scenarios~\citep{bansal2017hamilton}. 
Popular certification methods based on \emph{finite-state abstractions} approximate dynamics by grid-world models (e.g., robust \emph{Markov decision processes} (MDPs)) and suffer from exponential blow-up in the state dimension, limiting formal certification to low-dimensional systems~\citep{abate2023arch,suilen2025robust}.
Adaptations attempt to bridge this gap by learning certificates or abstractions from samples (e.g., kernel-based barrier learning~\citep{casablanca2026lucid}, scenario-based PAC abstractions~\citep{badings2022sampling}, and neural abstractions~\citep{abate2022neural}) but inherit similar exponential scaling and require sample-complexity costs of their own.
Evidently, the gap between systems that can be certified and those being deployed is widening, and the cause is structural.

Essentially every existing certification method, despite surface differences, follows the same \textbf{ill-posed structure} (Figure~\ref{fig:overview}, top panel): (1) estimate a one-step transition model from data, then (2) recursively propagate safety probability backward in time via (robust) \emph{dynamic programming} (DP). Certifying safety over a horizon $T$ thus requires $T$ evaluations of a learned estimator. To obtain a certified safety probability, bounds must be inflated at each step---collapsing toward a vacuous lower bound as $T$ grows. In addition, the DP characterisation requires a Markov property that realistic closed-loop systems with neural controllers or temporally correlated disturbances routinely violate, in which case the inflated bound is no longer even sound, and the \textbf{certificate may silently fail at deployment} (as will be demonstrated in the experiments in Section~\ref{sec:experiments}).

\emph{``When solving a given problem, try to avoid solving a more general problem as an intermediate step''}---a principle attributed to \cite{vapnik1995nature} that names exactly the structure existing methods follow. A transition kernel is a high-dimensional object describing every possible future of a dynamical system; its safety probability is a single scalar. Existing approaches estimate the system's transition kernel in order to obtain a scalar safety probability, and with finite data this detour is precisely what forces uncertainty to compound across the horizon. We call such methods \emph{indirect}, in contrast to the \emph{direct} alternative we propose (Figure~\ref{fig:overview}, bottom).

\textbf{Core Claim.}\;
We show that safety certification from trajectory data can be formulated as a classification problem on the trajectory space: given sampled trajectories, a kernel-based regressor from initial conditions yields a distribution-free certified lower bound on the safety probability. The bound remains stable across increasing horizons by construction, valid under non-Markovian dynamics, and emerges from a single embedding framework that recovers existing indirect methods as special cases.

\begin{figure}[t]
    \centering
    \def\svgwidth{1\linewidth}
    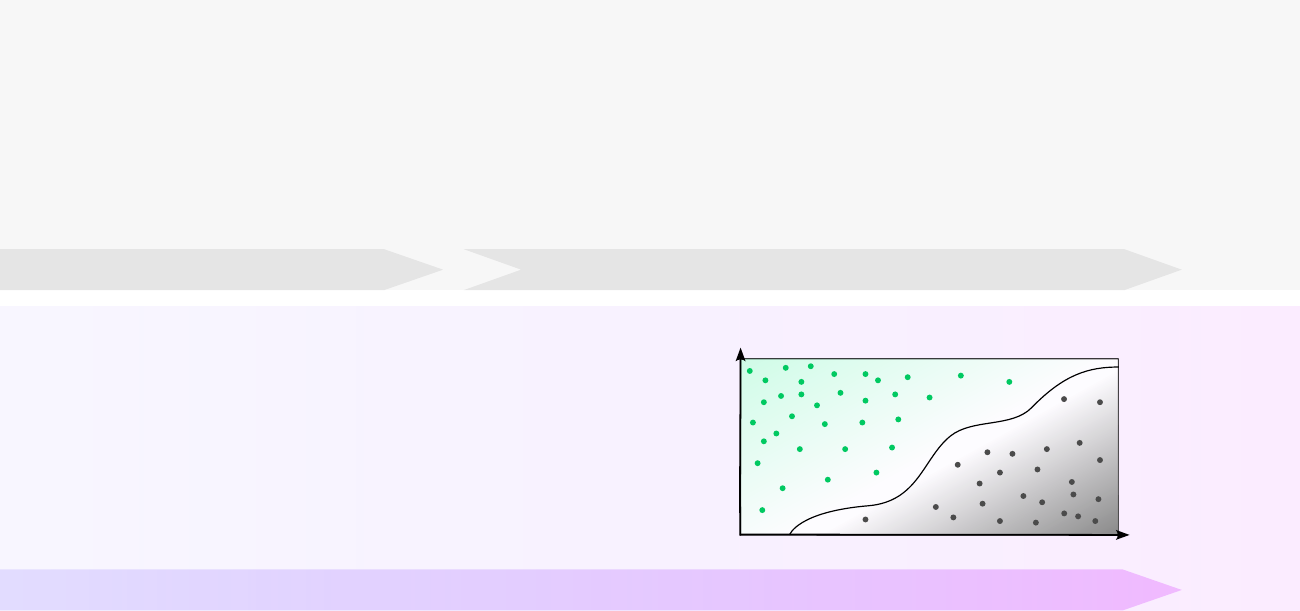

    \vspace{-1em}
    \caption{\textbf{Indirect} (top) vs.\ \textbf{direct} (bottom) safety certification on a quadrotor obstacle-avoidance task. In the learning-based setting, the DP recursion compounds estimation error at every step (UQ: uncert. quantification); for realistic horizons, direct certification is the only viable route.}
    \label{fig:overview}
\end{figure}

\subsection{Contributions}\label{sec:contributions}
\begin{enumerate}[label=\arabic*)]
  \item \textbf{Equivalence
    (Section~\ref{sec:beyond_dp}, Theorem~\ref{thm:empirical_safety_nonMarkov}):}
    A direct, non-recursive estimator that reduces probabilistic safety
    certification to kernel regression on trajectory data, yielding
    distribution-free certified lower bounds valid without Markov assumptions.
  \item \textbf{Unified framework
    (Sec.~\ref{sec:framework}, Fig.~\ref{fig:unified}):}
    A kernel-embedding framework recovering barrier certificates~\citep{prajna2006barrier},
    finite-state abstractions~\citep{tabuada2009verification}, interval MDPs~\citep{suilen2025robust}, and prob.\
    simulation relations~\citep{haesaert2017verification} as indirect solvers
    of the same problem.
  \item \textbf{End-to-end certification of a neural-controlled system
    (Section~\ref{sec:experiments}):}
    On a quadrotor closed under a learned controller with internal adaptive state,
    our method produces calibrated, sound bounds where existing
    baselines silently go unsound.
\end{enumerate}

\subsection{Related Work}\label{sec:related_work}

\textbf{Safety Certification.}\;
We refer the reader to~\citet{Lavaei_Survey,yin2024formal} for surveys; here we focus on the work most directly relevant.
Post-training approaches split into (1)~\emph{intervention methods}---safety filters, shielding, and control barrier functions---which modify the controller at runtime~\cite[see, e.g.,][Section~3.3]{brunke2022safe}, and (2)~\emph{certification methods}---barrier certificates, Hamilton--Jacobi reachability, and finite-state abstractions---which bound safety probabilities without altering behavior~\citep{dawson2022safe,Lavaei_Survey,yin2024formal}.
Our focus is the latter.
Learning-based adaptations include kernel-based barrier learning~\citep{casablanca2026lucid}, PAC scenario abstractions~\citep{badings2022sampling,suilen2022robust}, neural abstractions~\citep{abate2022neural}, and CME-based value iteration~\citep{romao2023distributionally}---all following the indirect, DP-based structure we identify as the source of compounding conservatism.
No learning-enabled toolbox for PAC-IMC currently exists; existing implementations assume a given model~\citep{mathiesen2024intervalmdp}.

\textbf{Safe Reinforcement Learning (RL).}\;
Safe RL and constrained MDP methods~\citep{garcia2015comprehensive,berkenkamp2017safe,ni2023safe} synthesize policies satisfying safety constraints during or at convergence of training---their output is a \emph{policy}; ours takes a given (possibly learned) policy and outputs a certified bound on its safety probability.

\textbf{Conformal Prediction.}\;
Conformal methods offer marginal coverage guarantees over the test distribution~\citep{vovk2005algorithmic}; our bound is \emph{spatially discriminative}: when the predictor is well-calibrated, different initial states receive different certified bounds, using the distribution-free binning procedure of \citet{gupta2020distribution}.
Standard split-conformal prediction yields only a marginal prediction-set guarantee for binary outcomes, not a confidence interval for $P^S_{\lawseq}(x_0)$~\citep{gupta2020distribution}.

\section{Safety Certification of Dynamical Systems}
\textbf{Notation.}\;
All sets and maps are assumed measurable; we defer the underlying measure-theoretic   
setup (probability spaces $(\Xdomain,\borel{\Xdomain},\P)$, Borel $\sigma$-algebras $\borel{\Xdomain}$, probability kernels) to Appendix~\ref{app:notation}.
Furthermore, let $X_N:=[x_i]_{i=1}^N$ be a column vector with $x_i\in\Xdomain$.
We denote the element-wise evaluation of a function $f\colon\Xdomain\rightarrow\R$ on $X_N$ as $f(X_N):=\smash{[f(x_i)]_{i=1}^N}$.
Similarly, we may write $A = \smash{[a_{ij}]_{i,j=1}^N}$ to denote a matrix with its elements.

\subsection{Problem Formulation}
We briefly formalize the class of systems considered in this paper and the safety certification problem.

\textbf{System Dynamics \& Data.}\;
We write a discrete-time stochastic process $\X_{0:T}:=\{X_t\}_{t=0}^T\in\Xdomain^{T+1}$ with time horizon $T\in\N$ and stochastic law \smash{$\lawseq\colon\Xdomain\times\borel{\Xdomain^{T}}\to[0,1]$} with realizations $\x_{1:T}\sim\lawseq(\cdotx| X_0 =x_0)$ for some initial state $x_0\in\Xdomain_0\subset\Xdomain\subset\R^d$.%
\footnote{Random variables are uppercase (\smash{$X_t\in\Xdomain$}), instances lowercase (\smash{$x_t\in\Xdomain$}), and trajectories bold (\smash{$\x_{0:T}\in\Xdomain^{T+1}$}).}
We assume that the system dynamics are unknown but that we have access to a sample of $N\in\N$ i.i.d.\ sequences $\data_N:=\smash{\{\x_{0:T}^{(i)}\}_{i=1}^N}$, obtained by running the system from initial states drawn uniformly from $\Xdomain_0$.

\textbf{Safety Problem.}\;
We want to compute the probability of the system initialized at some $x_0\in\Xdomain_0$ being \emph{safe}---i.e., $x_t$ remaining in a safe area $S\subset\Xdomain$---for $T\in\N$ time steps.
The \emph{safety probability} is 
\begin{equation}
     P^S_{\lawseq}(x_0) := \E_{\lawseq}\!\!\left[\min_{t\in\{0,\ldots,T\}} \1_{S}(X_{t})\mid X_0=x_0\right],\label{eq:safety_probability}
\end{equation} 
with $\1_S$ the indicator function of $S$, evaluating to $1$ if $X_t\in S$ and $0$ otherwise.
As the dynamics are unknown, we are interested in obtaining a certified lower bound on $P^S_{\lawseq}$ from trajectory data $\data_N$:

\begin{problem}\label{prob:data_driven_safety_verification}
    Given data $\data_N\!:=\!\smash{\{\x_{0:T}^{(i)}\}_{i=1}^N}$,
    find a lower bound $\hat{p}^{\mathrm{lb}}\colon\Xdomain_0\to[0,1]$ such that with probability at least $1\!-\!\conf\in[0,1]$ we have
    $P^S_{\lawseq}(x_0)\geq \hat{p}^{\mathrm{lb}}(x_0)$ for any given initial state $x_0\!\in\!\Xdomain_0$.
\end{problem}

\section{Kernel Mean Embedding Framework}\label{sec:CMEs}
We include a brief introduction to \emph{reproducing kernel Hilbert spaces} (RKHSs) and \emph{(conditional) kernel mean embeddings} (CMEs)~\citep{Berlinet2004RKHSProbStat,Song2009CondEmbed}, which are the mathematical tools underlying our framework, in Appendix~\ref{app:RKHS}. Throughout, $k$ denotes a positive-definite kernel with RKHS $\Hilbert$, feature map $\phi$, and reproducing property $f(x)=\smash{\innerH{f}{\phi(x)}{\Hilbert}}$.

\textbf{Conditional Mean Embeddings.}\;
Given a kernel $k\colon\Xdomain\times\Xdomain\to\R$ with an associated RKHS $\Hilbert$, a probability law $\law$ can be encoded as a \emph{(kernel) conditional mean embedding} (CME) $\smash{\me_{\Hilbert|\Hilbert}^{\law}}\colon\Xdomain\to\Hilbert$:
\begin{equation}
    \me_{\Hilbert|\Hilbert}^{\law} := \int_{\Xdomain} \phi(x_+) \,\d\law(x_+\mid \cdotx)\label{eq:CME}
\end{equation}
If the kernel $k$ is characteristic, and thus $\phi$ is injective, the corresponding CME $\smash{\me_{\Hilbert|\Hilbert}^{\law}}$ encodes $\law$ uniquely~\citep{Park2020MeasureTheoretic}.

The reproducing property of the kernel carries on to the CME, facilitating the computation of the conditional expected value of a function $f\in\Hilbert$ under $\law$ via the inner product with the CME, that is
\begin{equation}
	\E_{\law}[f(X_+)\mid X=x] = \innerH{f}{\me_{\Hilbert|\Hilbert}^{\law}(x)}{\Hilbert}\quad\text{almost surely}.\label{eq:innerProdCond}
\end{equation}

\textbf{Empirical Estimator.}\;
In practice, the CME $\smash{\me_{\Hilbert|\Hilbert}^{\law}}$ is generally unknown (and potentially infinite dimensional); it can be approximated via an empirical estimator from a finite set of data $\data_{\hat{N}}$:
\begin{equation}
    \hat\me_{\Hilbert|\Hilbert}^{\data_{\hat{N}}}(x) := \sum_{i=1}^{\hat{N}} w_i(x) \phi(x_+^{(i)}), \quad \text{with} \quad w(x) := k_{\hat{N}}(x)\T[K_{\hat{N}}+\hat{N}\lambda I_{\hat{N}}]\inv,\label{eq:empirical_CME}
\end{equation}
where \smash{$K_{\hat{N}}$} is the Gram matrix given data \smash{$\data_{\hat{N}}$} and $\lambda>0$ is a regularization constant.
The empirical CME converges to the true CME as $\smash{\hat{N}\to\infty}$ under standard assumptions~\citep{Park2020MeasureTheoretic}.

\textbf{Robustness.}\;
To obtain statistically robust estimates, it is common practice to introduce an ambiguity set of candidate embeddings centered at the empirical CME \eqref{eq:empirical_CME} 
that includes the true dynamics with probability at least $1-\conf$.
For this, let $\mathcal{G}$ be a vector-valued RKHS of mappings $\Xdomain\to\Hilbert$ \cite[Section~2.3]{Park2020MeasureTheoretic}.
\begin{assumption}\label{asm:ambiguity_set}
    We assume access to an ambiguity set $\amb_{\hat{N}}\subset\mathcal{G}$ with radius $\varepsilon\geq0$, defined by
    \begin{equation}
        \amb_{\hat{N}} := \big\lbrace \bar\mu\colon\Xdomain\times\borel{\Xdomain}\to[0,1] \,\big\vert\, \big\Vert \hat\me_{\Hilbert|\Hilbert}^{\data_{\hat{N}}} - \me_{\Hilbert|\Hilbert}^{\bar\mu} \big\Vert_{\mathcal{G}}\leq\varepsilon
        \big\rbrace,\label{eq:amb_set}
    \end{equation}
    such that $\me_{\Hilbert|\Hilbert}^{\law}\in\amb_{\hat{N}}$ with probability at least $1-\conf$.
\end{assumption}
There exist results relating the radius $\varepsilon$ of the ambiguity set to the probability that the actual CME of the unknown dynamics $\law$ lies within the set with a minimax rate of $\smash{\mathcal{O}(\log(\hat{N})/\hat{N})}$~\citep{li2022optimal,mollenhauer2022learning}.
We will instead use the distribution-free procedure described in App.~\ref{app:conformal}.

\section{A Unifying Embedding Framework}\label{sec:framework}

Many existing methods restrict to \emph{Markovian} systems, where the following assumption holds.

\begin{assumption}[Markovianity]\label{asm:markovianity}
    There exists a stochastic law $\law\colon\Xdomain\times\borel{\Xdomain}\to[0,1]$ s.t.\ \smash{$\lawseq\equiv\law^T$}.
\end{assumption}
Learning-based approaches from the Markovian regime commonly assume access to i.i.d.\ one-step transition data $\data_{\hat{N}}:=\smash{\{(x^{(i)},x^{(i)}_+)\}_{i=1}^{\hat{N}}}$ sampled via
$x_+^{(i)}\sim\law(\cdotx\mid X_0 = x^{(i)})\,\mathcal{U}_{\Xdomain_0}(dx^{(i)})$.
Weaker conditions (ergodicity, mixing, burn-in time, or risk-based notions) can replace the i.i.d.\ assumption at the cost of a lower effective sample size~\citep{massiani2024consistency,steinwart2009fast,ziemann2022learning,zhang2024guarantees}.

\textbf{Dynamic Programming Reformulation.}\;
For Markovian systems, \emph{dynamic programming} (DP) allows to compute the safety probability \smash{$P^S_{\lawseq}$} in \eqref{eq:safety_probability} by unfolding its computation into recursive updates of a latent value function (backwards in time) \citep{abate2008probabilistic}:
\begin{proposition}[Model-Based Dynamic Programming]\label{prop:value_iteration}
    Under Assumption~\ref{asm:markovianity}, consider the value function $V_l\colon\Xdomain\to[0,1]$, with $l=T,\ldots,0$, where
    \begin{equation}
        V_T(x) := \1_{S}(x)\quad\text{and}\quad
        V_l(x) := \1_{S}(x) \, \E_{\law}[V_{l+1}(X_+)\mid X=x].
    \label{eq:value_iteration}
    \end{equation}
    Then, the safety probability \eqref{eq:safety_probability} is $P^S_{\lawseq}(x_0) = V_0(x_0)$, for any initial state $x_0\in\Xdomain_0$.
\end{proposition}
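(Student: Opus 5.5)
We will prove the stronger, time-indexed statement that for every $l\in\{0,\ldots,T\}$ and every $x\in\Xdomain$,
\begin{equation*}
    V_l(x) = \E_{\law}\!\left[\min_{t\in\{l,\ldots,T\}} \1_{S}(X_t)\;\middle|\; X_l = x\right].
\end{equation*}
The conditional expectation on the right is taken under the Markov law $\law$ of Assumption~\ref{asm:markovianity}, so it does not depend on the history before time $l$. The case $l=0$ is exactly $P^S_{\lawseq}(x_0)=V_0(x_0)$ for $x_0\in\Xdomain_0$. We argue by backward induction on $l$.

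\textbf{Base case and product form.}\; For $l=T$ the minimum runs over a single index, so the right-hand side equals $\1_S(x)=V_T(x)$. Because indicators take values in $\{0,1\}$, we have $\min_{t\in\{l,\ldots,T\}}\1_S(X_t)=\prod_{t=l}^T \1_S(X_t)$. This gives the factorization
\begin{equation*}
    \min_{t\in\{l,\ldots,T\}}\1_S(X_t)=\1_S(X_l)\cdot \min_{t\in\{l+1,\ldots,T\}}\1_S(X_t),
\end{equation*}
which is what lets the recursion \eqref{eq:value_iteration} peel off one time step.

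\textbf{Inductive step.}\; Assume the claim holds for $l+1$. Conditioning on $X_l=x$, the factor $\1_S(X_l)=\1_S(x)$ is deterministic and comes out of the expectation. For the remaining factor we use the tower property, conditioning on $(X_l,X_{l+1})$, or more carefully on the whole history $X_{0:l+1}$. By Assumption~\ref{asm:markovianity}, $\lawseq\equiv\law^T$ means the path measure is the iterated composition of the kernel $\law$. Hence the conditional law of $X_{l+2:T}$ given $X_{0:l+1}$ depends only on $X_{l+1}$, and it coincides with the law of the process restarted at $X_{l+1}$. It follows that
\begin{equation*}
    \E_{\law}\!\left[\min_{t\geq l+1}\1_S(X_t)\,\middle|\,X_{0:l+1}\right]=V_{l+1}(X_{l+1}).
\end{equation*}
Taking the conditional expectation given $X_l=x$ then yields $\1_S(x)\,\E_{\law}[V_{l+1}(X_+)\mid X=x]$, which is $V_l(x)$ by \eqref{eq:value_iteration}.

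\textbf{Main obstacle.}\; The only delicate step is the rigorous justification of the Markov/time-homogeneity identification, namely that the conditional expectation of a functional of the future, given the past, equals the same functional evaluated under the kernel started at the current state. We would handle it concretely. First, write $\law^T$ via Ionescu-Tulcea as the iterated integral
\begin{equation*}
    \int \cdots\int \prod_t \1_S(x_t)\,\law(\d x_T\mid x_{T-1})\cdots\law(\d x_1\mid x_0).
\end{equation*}
Then perform the innermost integrations first; this is Fubini for nonnegative bounded integrands. Each integration reproduces $V_{T-1},V_{T-2},\ldots$ exactly, so the whole induction becomes a direct computation that sidesteps regular conditional probabilities. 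Measurability of each $V_l$ follows because integrating a bounded measurable function against a probability kernel yields a measurable function.
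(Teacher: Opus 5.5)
Your backward induction is correct and is essentially the standard argument behind this result. The paper gives no proof of its own and defers to \citet{abate2008probabilistic}, where the safety probability is likewise written as the expectation of $\prod_{t}\1_S(X_t)$ under the kernel-composed path measure and evaluated by stripping off one kernel integration at a time. Your iterated-integral formulation is the right way to make this rigorous, since it gives the identity pointwise for every $x_0$, as the kernel-based definition of $P^S_{\lawseq}(x_0)$ requires, rather than only almost everywhere.
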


Substituting the empirical CME \eqref{eq:empirical_CME} into \eqref{eq:value_iteration} and robustifying against the ambiguity set $\amb_{\hat{N}}$ (Asm.~\ref{asm:ambiguity_set}) gives a learning-based safety bound (recovering the results of \cite{romao2023distributionally}):
\begin{proposition}[Empirical DP]\label{prop:empirical_value_update}
    Suppose Asms.~\ref{asm:ambiguity_set} \& \ref{asm:markovianity} hold.
    Let $\underline V_T(x) := \1_{S}(x)$ and
    \begin{equation*}
        \underline V_l(x) := \1_{S}(x) \left[ \sum_{i=1}^{N} w_i(x)\, \underline V_{l+1}(x_+^{(i)}) - \varepsilon \kappa \norm{\underline V_{l+1}}_\Hilbert \right]_0^1, \quad \kappa\geq\sup_{x\in\Xdomain}\sqrt{k(x,x)}.
    \end{equation*}
    Then $P^S_\lawseq(x_0)\geq\underline V_0(x_0)$ for all $x_0\in\Xdomain_0$ with probability at least $1-\conf$.
\end{proposition}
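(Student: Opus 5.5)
The argument is a backward induction on $l=T,\dots,0$. It is carried out on the single event $E:=\{\me_{\Hilbert|\Hilbert}^{\law}\in\amb_{\hat N}\}$, which by Assumption~\ref{asm:ambiguity_set} has probability at least $1-\conf$. The claim to prove on $E$ is $\underline V_l(x)\le V_l(x)$ for all $x\in\Xdomain$ and all $l$, where $V_l$ is the exact value function of Proposition~\ref{prop:value_iteration}. Taking $l=0$ and invoking Proposition~\ref{prop:value_iteration} then gives $P^S_\lawseq(x_0)=V_0(x_0)\ge\underline V_0(x_0)$ for all $x_0\in\Xdomain_0$ simultaneously. This matters for the probability bookkeeping: we use only the one event $E$, which does not depend on $l$ or $x$, so no union bound over the horizon is needed.

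\textbf{Base case and induction hypothesis.} The base case is immediate, since $\underline V_T=\1_S=V_T$. For the inductive step, assume $\underline V_{l+1}\le V_{l+1}$ pointwise. If $x\notin S$, both $V_l(x)$ and $\underline V_l(x)$ vanish, so we may take $x\in S$.

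\textbf{Monotonicity step.} Conditional expectation is monotone, so $\E_{\law}[V_{l+1}(X_+)\mid X=x]\ge \E_{\law}[\underline V_{l+1}(X_+)\mid X=x]$.

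\textbf{Embedding step.} Here we need $\underline V_{l+1}\in\Hilbert$, which we assume as the proposition implicitly does by writing $\norm{\underline V_{l+1}}_\Hilbert$. By \eqref{eq:innerProdCond},
\[
\E_{\law}[\underline V_{l+1}(X_+)\mid X=x]=\innerH{\underline V_{l+1}}{\me^{\law}_{\Hilbert|\Hilbert}(x)}{\Hilbert}.
\]
We add and subtract the empirical CME \eqref{eq:empirical_CME}:
\[
\innerH{\underline V_{l+1}}{\me^{\law}_{\Hilbert|\Hilbert}(x)}{\Hilbert}
=\sum_i w_i(x)\underline V_{l+1}(x_+^{(i)})+\innerH{\underline V_{l+1}}{\me^{\law}_{\Hilbert|\Hilbert}(x)-\hat\me^{\data_{\hat N}}_{\Hilbert|\Hilbert}(x)}{\Hilbert}.
\]
The first term follows from the reproducing property. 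By Cauchy--Schwarz, the error term is at least $-\norm{\underline V_{l+1}}_\Hilbert\,\norm{\me^{\law}(x)-\hat\me(x)}_\Hilbert$.

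\textbf{Main obstacle.} The main obstacle is converting the $\mathcal G$-norm bound $\varepsilon$ into a pointwise $\Hilbert$-norm bound. For a vector-valued RKHS $\mathcal G$ with operator-valued kernel $K(x,x')=k(x,x')\,\mathrm{Id}_\Hilbert$, as in \citet{Park2020MeasureTheoretic}, the reproducing property gives $\norm{F(x)}_\Hilbert\le\norm{K(x,x)}^{1/2}\norm{F}_{\mathcal G}\le\kappa\norm{F}_{\mathcal G}$. On $E$ this yields $\norm{\me^{\law}(x)-\hat\me(x)}_\Hilbert\le\kappa\varepsilon$. I would state the choice of $\mathcal G$ explicitly, since the constant $\kappa\ge\sup_x\sqrt{k(x,x)}$ in the proposition matches exactly this bound.

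\textbf{Conclusion of the inductive step.} Combining the pieces,
\[
\E_{\law}[V_{l+1}(X_+)\mid X=x]\ \ge\ \sum_i w_i(x)\underline V_{l+1}(x_+^{(i)})-\varepsilon\kappa\norm{\underline V_{l+1}}_\Hilbert.
\]
It remains to handle the clipping $[\cdot]_0^1$. The left-hand side lies in $[0,1]$ because $0\le V_{l+1}\le1$. Clipping the right-hand side to $[0,1]$ therefore preserves the inequality: clipping at $1$ is harmless since the left side is at most $1$, and clipping at $0$ is harmless since the left side is nonnegative. Multiplying by $\1_S(x)$ gives $V_l(x)\ge\underline V_l(x)$, which closes the induction.
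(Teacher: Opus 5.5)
Your proof is correct and follows the same route as the paper's: rewrite the conditional expectation as an inner product with the true CME, add and subtract the empirical CME, and bound the difference by Cauchy--Schwarz on the ambiguity-set event before clipping to $[0,1]$. Your version is more careful than the paper's sketch, which applies the argument to $V_{l+1}$ and leaves two things implicit: the backward induction via monotonicity that replaces $V_{l+1}$ by $\underline V_{l+1}$, and the derivation of the factor $\kappa$ from the $\mathcal{G}$-norm via the vector-valued reproducing property.
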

The proof is given in Appendix~\ref{app:proof_empirical_value_update}, which also discusses convergence.

\begin{figure}[b]
    \centering
    \begin{tikzpicture}[
            node distance=.5cm,
            every node/.style={font=\footnotesize},
            box/.style={draw, epibox, align=center, minimum height=1cm, minimum width=2.2cm, fill=white},
            _box/.style={_epibox, align=center, minimum height=1cm, minimum width=2.2cm, fill=white},
            accentbox/.style={draw, epibox, align=center, minimum height=1cm, minimum width=2.2cm, accent, fill=accentwashed, text=black, inner xsep=5pt, inner ysep=5pt},
            arrow/.style={-latex},
        ]
        \pgfdeclarelayer{layer_framework}
        \pgfdeclarelayer{layer_dp}
        \pgfdeclarelayer{layer_barrierorabstractions}
        \pgfsetlayers{layer_framework,layer_dp,layer_barrierorabstractions,main} 

        \node[align=center, inner sep=0pt] (input) {Data $\data_N$};

        \node[accentbox, right=.5cm of input] (safety) {\textbf{Certified Safety Probability (Problem~\ref{prob:data_driven_safety_verification}):}\; 
        $\P^S_{\lawseq}\!(x_0) \!:=\! \E_{\lawseq}\!\!\left[\min_{t\in\{0,\ldots,T\}} \1_{S}(X_{t})\mid X_0=x_0\right]$};

        \node[box, below=2.7cm of safety, fill=black!5, xshift=-4.41cm, minimum width=5.5cm, inner xsep=5pt] (barriers) {\textbf{Barrier Certificates (Proposition~\ref{prop:barrier}):}\\[.2em]
        $\beta \geq \sup_{x\in S} \left(\E_{\law}[B(X_+)\mid X=x]-B(x)\right)$};

        \node[_box, right=.5cm of barriers, inner ysep=0pt, yshift=35pt, inner xsep=0pt, fill=black!5] (abstractions) {\textbf{Finite-State Abstractions (App.~\ref{app:additional_material_abstractions}, Eqs.~\ref{eq:robust_finite_value_update}--\ref{eq:amb_set_finite}):}\\[.2em]
        $ V_l(\hat{x}_i) \!\geq\! \inf_{\bar\mu\in\amb^f_{\hat N},\,x\in A_i} \! \1_{S}(x) \innerH{V_{l+1}}{\me_{\Hilbert_f|\Hilbert_f}^{\bar\mu}\!(\hat{x}_i)}{\!\Hilbert_f}\!$};
        \node[box, below=.2cm of abstractions, inner xsep=5pt] (imps) {\textbf{Interval Markov Processes (Proposition~\ref{prop:imp}):}\\[.2em]
        Interval-based $\amb^{\text{IMP}}_{\hat{N}}$ in \eqref{eq:amb_set_IMP}};
        \node[box, below=.2cm of imps, inner xsep=5pt] (simrels) {\textbf{Probab. Simulation Relations (Theorem~\ref{thm:ssr}):}\\[.2em]
        Total-variation-based $\amb^{\text{SSR}}_{\hat{N}}$ in \eqref{eq:amb_set_SSR}};
        \begin{pgfonlayer}{layer_barrierorabstractions}
            \node[box, fit=(abstractions)(imps)(simrels), fill=black!5, minimum width=7.0cm, inner xsep=5pt, inner ysep=5pt] (abstractions_box) {};
        \end{pgfonlayer}

        \node[below=1.9cm of safety] (_dp_text) {};
        \begin{pgfonlayer}{layer_dp}
            \node[box, fit=(_dp_text)(barriers)(abstractions_box), inner xsep=5pt, inner ysep=5pt] (dp) {};
        \end{pgfonlayer}
        \node[anchor=north east, black, align=left] at ($(dp.north)+(-1.2,-0.1)$) {\textbf{Dynamic Programming (Proposition~\ref{prop:value_iteration}):}\\[.2em]
        {$\begin{aligned}
        V_T(x) &:= \1_{S}(x),\\
        V_l(x) &:= \1_{S}(x) \, \E_{\law}[V_{l+1}(X_+)\mid X=x]
        \end{aligned}$}};

        \node[below=.6cm of safety, inner xsep=5pt] (_embedding_text) {};
        \begin{pgfonlayer}{layer_framework}
            \node[accentbox, fit=(_embedding_text)(dp), inner xsep=5pt, inner ysep=5pt] (embedding_framework) {};
        \end{pgfonlayer}
        \node[anchor=north east, black, align=left] at ($(embedding_framework.north)+(5.4,-0.1)$) {\textbf{Embedding Framework (Theorem~\ref{thm:empirical_safety_nonMarkov}):}\;
        $P^S_{\lawseq}\!(x) 
        \!\geq\!  \sum_{i\colon \x^{(i)}_t\in S,\,\forall t\in\{0,\ldots,T\}}  w_i(x) \!-\! (\varepsilon_1(x) \!+\! \varepsilon_2 \!+\! \varepsilon_3)$};        
        
        \draw[arrow] (input) -- (safety);
        \draw[arrow] (safety) -- ($(embedding_framework.north)+(0.75,0.0)$);
        
    \end{tikzpicture}
    
    \caption{Overarching embedding framework recovering existing approaches and going beyond}
    \label{fig:unified}
\end{figure}

\subsection{Recovering Existing Approaches from the Embedding Framework}

The embedding framework introduced above is general enough to capture existing certification approaches, e.g., through specific choices of the kernel and the ambiguity set (see Figure~\ref{fig:unified} for an overview).
In particular, we recover both barrier certificates and well-known finite-state abstraction methods as special cases (full details in Appendix~\ref{app:embedding_framework_recoveries}).

\subsubsection{Barrier Certificates}\label{sec:safety_via_barriers}
In contrast to the point-wise safety probability $\smash{P^S_{\lawseq}\colon}\Xdomain_0\to[0,1]$ queried in Problem~\ref{prob:data_driven_safety_verification}, barrier methods usually aim to obtain a uniform robust bound $p\in[0,1]$ of the form
\[
P^S_{\lawseq}(\Xdomain_0):=\inf_{x_0\in\Xdomain_0} P^S_{\lawseq}(x_0)\geq p,
\]
i.e., a robust safety probability that holds uniformly for all initial states $x_0\in\Xdomain_0$~\citep{prajna2006barrier}.
This is achieved by finding a barrier function $B$ approximating the value function solving the ``unsafety'' probability DP $\Lambda_l(x):=1-V_l(x)$~\citep{laurenti2025unifying}; condition~(c) below bounds the maximum growth of the probability of exiting the safe set $S$ in a single time step, giving $P^S_{\lawseq}(\Xdomain_0)\geq 1-\sup_{x\in\Xdomain_0}\Lambda_0(x)$.
We defer the DP derivation to Appendix~\ref{app:barriers}.

Given a barrier $B\in\Hilbert$ with $\norm{B}_\Hilbert<\infty$ satisfying the standard level-set conditions ($B\leq\eta$ on $\Xdomain_0$, $B\geq\gamma$ on $\Xdomain\setminus S$, with level sets $\gamma>\eta\geq 0$), the learning-based one-step bound requires
\begin{equation}
    \beta \geq \sup_{x\in S} \sum_{i=1}^N w_i(x) B(x^{(i)}) - B(x) + \varepsilon\kappa\norm{B}_\Hilbert, \label{eq:barrier_condition}
\end{equation}
yielding \smash{$P^S_{\lawseq}(\Xdomain_0) \geq 1 - (\eta+\beta T)/\gamma$} with probability at least $1-\conf$ (Proposition~\ref{prop:barrier}, App.~\ref{app:barriers}).
Alternative approaches (scenario-based, sample-averaged~\citep{salamati2024data,mathiesen2024data}) have been proposed for bounding $\beta$, yielding analogous forms of \eqref{eq:barrier_condition}, all inheriting the same error-compounding structure: the $\beta T$ term in $1-(\eta+\beta T)/\gamma$ grows linearly with $T$.

\subsubsection{Finite-State Abstractions}\label{sec:safety_via_finite_abstractions}
Many data-driven verification approaches build on \emph{finite-state abstractions}, obtained by partitioning $\Xdomain$ into finitely many disjoint cells $A_i\subset\Xdomain$, $i=1,\ldots,n$, each represented by a symbol $\smash{\hat{x}_i\in\Xdomain}$ (see Fig.~\ref{fig:finite_abstractions}, App.~\ref{app:additional_material_abstractions}), and inducing a projection $\Pi_{\Xdomain_f}\!(x):=\{\hat{x}_i\in\Xdomain_f\colon x\in A_i\}$.
Abstracting continuous-state systems into finite-state counterparts offers computational tractability and enables the use of established model-checking techniques~\citep{clarke1997model}.
Guarantees for the original continuous-state system require accounting for both \emph{statistical error} (finite data) and \emph{abstraction error} (discretization).

\textbf{Finite-State Embedding.}\;
In the embedding view adopted throughout this paper, finite-state abstractions correspond to a finite-dimensional, non-characteristic RKHS $\Hilbert_f$ of piecewise-constant functions, with feature map $\phi_f\colon\Xdomain\to\Hilbert_f$ mapping each $x_t$ to the representative $\hat{x}_i$ of its cell $A_i$.
The kernel $\smash{k_f(x,x')} := \smash{\innerH{\phi_f(x)}{\phi_f(x')}{\Hilbert_f}}$ is partition-wise constant, and the trajectory $\x_{0:T}$ is encoded as a tensor \smash{$\phi_f(\x_0)\otimes\cdots\otimes\phi_f(\x_T)\in\Hilbert_f^{T+1}$} (see App.~\ref{app:additional_material_abstractions} for an illustration).
The corresponding ambiguity set \smash{$\amb^f_{\hat N}$} must account for both \emph{statistical error} (finite data) and \emph{abstraction error} (discretization), both of which enter at every recursion step; see App.~\ref{app:additional_material_abstractions} for the explicit construction.

Two canonical constructions of \smash{$\amb^f_{\hat N}$} recover well-known abstraction frameworks: \emph{probabilistic simulation relations} \citep{haesaert2017verification}, inducing a total-variation ambiguity set \smash{$\amb^{\text{SSR}}_{\hat N}$} (Theorem~\ref{thm:ssr}), and \emph{interval Markov processes} \citep{suilen2025robust}, inducing an interval ambiguity set \smash{$\amb^{\text{IMP}}_{\hat N}$} (Proposition~\ref{prop:imp}); derivations are deferred to Appendix~\ref{app:sim_rel_imp}.

Both barrier- and abstraction-based approaches implement the same recursive DP structure---one in continuous function space, the other in finite symbolic space. In both cases, uncertainty enters at every recursion step, causing conservatism to compound with the horizon length. 
This shared limitation exposes a deeper question: can we avoid recursion altogether and estimate the multi-step safety probability directly?

\section{Breaking Free from Dynamic Programming: Beyond Markovianity}\label{sec:beyond_dp}

The previous sections highlighted that DP---and thus its derivative methods based on barriers or finite abstractions---is structurally ill-suited for learning-based certification: the recursive invocation of the empirical CME compounds uncertainty across time, so that even small estimation errors are repeatedly amplified (as will be demonstrated in Section~\ref{sec:experiments}).
To break this recursive dependency, we instead estimate the safety probability in \eqref{eq:safety_probability} as a single-step inference task directly from data.

\textbf{Multi-Step Embedding.}\;
To estimate \eqref{eq:safety_probability} directly, we embed the $T$-step dynamics $\lawseq$ via a product kernel $k^{T+1}\colon\Xdomain^{T+1}\!\times\Xdomain^{T+1}\!\to\R$, defined by $k^{T+1}(\x_{0:T},\y_{0:T}):={\prod_{t=0}^T k(\x_t,\y_t)}$:
\begin{equation}
    \me_{\Hilbert|\Hilbert^{T+1}}^{\lawseq}(x_0) := \int_{\Xdomain^{T}} \phi^{T+1}(\x_{0:T}) \,\d\lawseq(\x_{1:T}\mid x_0),\label{eq:CME_Tstep}
\end{equation}
where $\phi^{T+1}(\x_{0:T}) := \otimes_{t=0}^T \phi(\x_t) \in \Hilbert^{T+1}$ maps the full trajectory $\x_{0:T}:=(x_0,\x_{1:T})\in\Xdomain^{T+1}$ into RKHS.
Given trajectory data $\data_N=\smash{\{\x_{0:T}^{(i)}\}_{i=1}^N}$, the empirical estimator is
\begin{equation}
    \hat\me_{\Hilbert|\Hilbert^{T+1}}^{\data_N}(x) := \sum_{i=1}^N w_i(x)\, \phi^{T+1}(\x_{0:T}^{(i)}),\quad\text{with}\quad w(x) := k_N(x)\T[K_N + N\lambda I_N]\inv, \label{eq:empirical_CME_Tstep}
\end{equation}
where $k_N(x):=[k(x,\x_{0}^{(i)})]_{i=1}^N$ and $K_N:=[k(\x_{0}^{(i)},\x_{0}^{(j)})]_{i,j=1}^N$.

\textbf{Robustness.}\;
Based on the $T$-step CME estimator in \eqref{eq:CME_Tstep}, we can construct an ambiguity set in the vector-valued RKHS $\mathcal{G}^{T+1}$ of functions $\Xdomain\to\smash{\Hilbert^{T+1}}$ centered at the empirical CME in \eqref{eq:empirical_CME_Tstep}:
\begin{equation}
        \amb_{N}^{T} := \big\lbrace \bar\mu\colon\Xdomain\times\borel{\Xdomain^{T}}\to[0,1] \,\big\vert\, \big\Vert \hat\me_{\Hilbert|\Hilbert^{T+1}}^{\data_{N}} - \me_{\Hilbert|\Hilbert^{T+1}}^{\bar\mu} \big\Vert_{\mathcal{G}^{T+1}}\leq\varepsilon
        \big\rbrace.\label{eq:amb_set_nonMarkov}
\end{equation}
However, since $\1_S$ does not generally lie in the RKHS of common kernels such as the Gaussian kernel, \eqref{eq:innerProdCond} cannot be applied directly; following~\cite{kanagawa2014recovering}, who extend Gaussian embeddings to functions with fractional smoothness $s>0$, i.e, living in the greater Besov space $B_{2,\infty}^s(\Xdomain)$, we arrive at a direct safety estimator valid beyond Markovian systems (Assumption~\ref{asm:markovianity}).

\begin{theorem}[Direct Approach]\label{thm:empirical_safety_nonMarkov}
    Define the safety functional $\smash{\rho^{S}}:=\smash{\min_{t\in\{0,\ldots,T\}} \1_{S}(\cdotx_t)}\in \smash{B_{2,\infty}^s(\Xdomain)}$ for some $s>0$.
    Consider the Gaussian RKHS $\smash{\Hilbert\equiv\Hilbert_\gamma}$ with lengthscale $\sigma_l = \gamma$ and let $\Hilbert_{\gamma_N}$ be the Gaussian RKHS of the kernel $k_{\gamma_N}$ with $\gamma_N:=N^{-\beta}\gamma$, for some $\beta>0$. Define the smoothed safety functional $\tilde\rho^S_N(\x_{0:T}) \!:=\! \int_{\Xdomain^{T+1}}\! K_{\gamma_N}(\x_{0:T}\!-\!\y)\rho^S(\y)d\y$ via convolution with
    \begin{equation*}
        K_{\gamma_N}(\x_{0:T}):=\sum_{j=1}^r\Big(\begin{smallmatrix}r\\j\end{smallmatrix}\Big)(-1)^{1-j}\frac{1}{j^d}\left(\frac{2}{\gamma_N^2\pi}\right)^{\frac{d(T+1)}{2}}k_{j\gamma_N/\sqrt{2}}(\x_{0:T}), \quad r:= \lfloor s\rfloor + 1.
    \end{equation*}
    Given data $\data_N$ and under Asm.~\ref{asm:ambiguity_set} with ambiguity set $\amb_N^{T}$ in \eqref{eq:amb_set_nonMarkov}, we have
    \begin{equation}
         P^S_{\lawseq}(x) 
        \geq \sum_{i\colon \x^{(i)}_t\in S,\, \forall t\in\{0,\ldots,T\}}  w_i(x) - \big(\varepsilon_1(x) + \varepsilon_2 + \varepsilon_3\big),\label{eq:safety_estimate}
    \end{equation}
    with probability at least $1-\conf$, with the error terms
    $\varepsilon_1(x) := \vert \smash{\sum_{i=1}^N w_i(x) (\rho^S(\x_{0:T}^{(i)})} -  \tilde\rho^S_N(\x_{0:T}^{(i)})) \vert$, $\varepsilon_2 := \varepsilon\kappa\Vert\tilde\rho^S_N\Vert_{\Hilbert_{\gamma_N}}(\gamma/\gamma_N)^{\frac{d(T+1)}{2}}$, and $\varepsilon_3 := \Vert\tilde\rho^S_N - \rho^S\Vert_{L_2(\lawseq)}$.
\end{theorem}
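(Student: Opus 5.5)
The plan is a telescoping decomposition of $P^S_{\lawseq}(x)-\sum_i w_i(x)\rho^S(\x^{(i)}_{0:T})$. First observe that $P^S_{\lawseq}(x)=\E_{\lawseq}[\rho^S(\X_{0:T})\mid X_0=x]$ by \eqref{eq:safety_probability}. Moreover, $\rho^S(\x^{(i)}_{0:T})\in\{0,1\}$ equals $1$ exactly when $\x^{(i)}_t\in S$ for all $t$. Hence the leading sum in \eqref{eq:safety_estimate} is $\sum_i w_i(x)\rho^S(\x^{(i)}_{0:T})$. Inserting the smoothed functional $\tilde\rho^S_N$ yields three differences:
\begin{align*}
P^S_{\lawseq}(x)-\textstyle\sum_i w_i\rho^S(\x^{(i)}) ={}& \big(\E_{\lawseq}[\rho^S-\tilde\rho^S_N\mid x]\big) + \big(\E_{\lawseq}[\tilde\rho^S_N\mid x]-\textstyle\sum_i w_i\tilde\rho^S_N(\x^{(i)})\big)\\
&+ \textstyle\sum_i w_i\big(\tilde\rho^S_N(\x^{(i)})-\rho^S(\x^{(i)})\big).
\end{align*}
The last term is bounded below by $-\varepsilon_1(x)$ by definition, and the first by $-\varepsilon_3$ via Jensen/Cauchy--Schwarz, $|\E[f]|\le\|f\|_{L_2(\lawseq)}$. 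I will read the $L_2(\lawseq)$ norm conditionally on $x$, or take the sup over initial states; this identification should be stated explicitly.

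The middle term carries the probabilistic content. The key fact, following \cite{kanagawa2014recovering}, is that $K_{\gamma_N}$ is a finite linear combination of Gaussian kernels with lengthscales $j\gamma_N/\sqrt2$. Convolving the $L_2$ (bounded, compactly relevant) function $\rho^S$ with these kernels therefore produces an element of the Gaussian RKHS $\Hilbert_{\gamma_N}^{T+1}$ with finite norm: a Gaussian convolution of an $L_2$ function lies in the RKHS of the Gaussian with lengthscale $\gamma_N$. Once $\tilde\rho^S_N\in\Hilbert_{\gamma_N}$, I would embed it into $\Hilbert_\gamma$ for $\gamma\ge\gamma_N$ using the standard inclusion of Gaussian RKHSs: $\|f\|_{\Hilbert_{\gamma}}\le(\gamma/\gamma_N)^{d(T+1)/2}\|f\|_{\Hilbert_{\gamma_N}}$ (Steinwart--Christmann, Prop.~4.46, in dimension $d(T+1)$). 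With $\tilde\rho^S_N$ in the RKHS of the product kernel $k^{T+1}$, the reproducing identity \eqref{eq:innerProdCond} applied to the $T$-step CME \eqref{eq:CME_Tstep} gives $\E_{\lawseq}[\tilde\rho^S_N\mid x]=\langle\tilde\rho^S_N,\me^{\lawseq}(x)\rangle$. It also gives $\sum_i w_i\tilde\rho^S_N(\x^{(i)})=\langle\tilde\rho^S_N,\hat\me^{\data_N}(x)\rangle$.

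Under Asm.~\ref{asm:ambiguity_set} with $\amb_N^T$, the true embedding lies within $\varepsilon$ of the empirical one with probability $1-\conf$. Then Cauchy--Schwarz, together with the bound $\|\Psi(x)\|_{\Hilbert}\le\kappa\|\Psi\|_{\mathcal G}$ for the evaluation operator, gives $|\text{middle}|\le\varepsilon\kappa\|\tilde\rho^S_N\|_{\Hilbert_\gamma}\le\varepsilon_2$, exactly as in the proof of Proposition~\ref{prop:empirical_value_update}. Summing the three bounds on this single event yields \eqref{eq:safety_estimate}.

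The main obstacle is the RKHS membership and norm bookkeeping for $\tilde\rho^S_N$. One must check that the signed combination $K_{\gamma_N}$ with its normalization constants produces something in $\Hilbert_{\gamma_N}$ rather than only in $\Hilbert_{\gamma_N/\sqrt2}$ or $\Hilbert_{r\gamma_N/\sqrt 2}$. One must also confirm that the inclusion constant has the right direction and exponent for the product space $\Xdomain^{T+1}\subset\R^{d(T+1)}$. I would first verify this in Fourier terms, using the fact that Gaussian-RKHS norms are weighted $L_2$ norms of the Fourier transform, before assembling the three terms. The Besov assumption on $\rho^S$ is needed only to make $\varepsilon_3\to0$, not for validity of the bound.
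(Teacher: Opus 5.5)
Your decomposition, the bounds by $\varepsilon_1(x)$ and $\varepsilon_3$, and the membership $\tilde\rho^S_N\in\Hilbert_{\gamma_N}$ all agree with the paper's proof. The step that fails is the one you flagged yourself: the inclusion of Gaussian RKHSs runs the other way.

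For $\gamma_N<\gamma$ (i.e.\ every $N\geq 2$), wider Gaussians give \emph{smaller} spaces. Steinwart--Christmann, Prop.~4.46, gives $\Hilbert_\gamma\subset\Hilbert_{\gamma_N}$ with $\norm{f}_{\Hilbert_{\gamma_N}}\le(\gamma/\gamma_N)^{d(T+1)/2}\norm{f}_{\Hilbert_\gamma}$ for $f\in\Hilbert_\gamma$. The inequality you wrote would place the rougher space inside the smoother one, and it fails for exactly the function you need. For large $\norm{\omega}$, $|\widehat{\tilde\rho^S_N}(\omega)|^2$ behaves like $e^{-c\gamma_N^2\norm{\omega}^2}|\widehat{\rho^S}(\omega)|^2$. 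The $\Hilbert_\gamma$-norm integrates $|\hat f(\omega)|^2$ against $e^{c\gamma^2\norm{\omega}^2}$ with the same $c>0$. The resulting integrand $e^{c(\gamma^2-\gamma_N^2)\norm{\omega}^2}|\widehat{\rho^S}(\omega)|^2$ is integrable only if $\rho^S$ is real-analytic, which the indicator of $S^{T+1}$ is not. Restricting to $\Xdomain^{T+1}$ does not help, by analytic continuation. So $\norm{\tilde\rho^S_N}_{\Hilbert_\gamma}=\infty$, and the reproducing identity with the $\gamma$-bandwidth $T$-step CME is unavailable. Your Cauchy--Schwarz step in $\Hilbert_\gamma$ therefore yields no finite bound.

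The paper never moves $\tilde\rho^S_N$ into $\Hilbert_\gamma$. It pairs $\tilde\rho^S_N$ with the CME error inside $\Hilbert_{\gamma_N}$ and applies the inclusion in its valid direction to the error: $\norm{\hat\me(x)-\me(x)}_{\Hilbert_{\gamma_N}}\le(\gamma/\gamma_N)^{d(T+1)/2}\norm{\hat\me(x)-\me(x)}_{\Hilbert_\gamma}$. That is where the factor in $\varepsilon_2$ comes from.

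If you switch to this route, be aware that it has a problem of its own at the same spot. The identity $\sum_i w_i(x)\tilde\rho^S_N(\x^{(i)}_{0:T})-\E_{\lawseq}[\tilde\rho^S_N(\X_{0:T})\mid X_0=x]=\innerH{\tilde\rho^S_N}{\hat\me(x)-\me(x)}{\Hilbert_{\gamma_N}}$ requires embeddings built from the $k_{\gamma_N}$ feature map. In general $\innerH{f}{k_\gamma(\cdot,y)}{\Hilbert_{\gamma_N}}\neq f(y)$: for $f=k_{\gamma_N}(\cdot,0)$ it equals $k_\gamma(0,y)$, not $k_{\gamma_N}(y,0)$.

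More fundamentally, since $\tilde\rho^S_N\notin\Hilbert_\gamma$, a Riesz-duality argument shows that no estimate $|\int\tilde\rho^S_N\,\d\nu|\le C\,\norm{\int\phi_\gamma\,\d\nu}_{\Hilbert_\gamma}$ can hold for all finite signed measures $\nu$. Hence the $\Hilbert_\gamma$-radius $\varepsilon$ alone cannot control the middle term, whichever space you pair in. A sound version needs one of two things:
\begin{itemize}
\item extra structure on $\nu=\sum_i w_i(x)\delta_{\x^{(i)}_{0:T}}-\lawseq(\cdot\mid x)$, such as control of the weights; or
\item Assumption~\ref{asm:ambiguity_set} stated for the $k_{\gamma_N}$-embedding.
\end{itemize}
In the second case your three-term argument goes through verbatim, with middle term $\varepsilon\kappa\norm{\tilde\rho^S_N}_{\Hilbert_{\gamma_N}}\le\varepsilon_2$.
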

Theorem~\ref{thm:empirical_safety_nonMarkov} yields a non-recursive, direct learning-based lower bound on the safety probability that remains valid even for non-Markovian dynamics. Unlike barrier or abstraction-based approaches, it does not rely on iterative composition of empirical operators, thereby avoiding the cumulative conservatism intrinsic to DP.
The proof of Theorem~\ref{thm:empirical_safety_nonMarkov} is given in Appendix~\ref{app:proof_empirical_safety_nonMarkov}.

Here, the error terms $\varepsilon_1,\varepsilon_2,\varepsilon_3$ bound the empirical convolution error, the statistical error, and the approximation error inflicted by the convolution with \smash{$K_{\gamma_N}$}, respectively.
It is apparent that the discontinuity of the strict safety functional $\rho^S\colon\Xdomain^{T+1}\to\{0,1\}$ is leading to a more complex---and in fact slower---convergence than if $\rho^S$ was living in the RKHS $\Hilbert^{T+1}$ of the CME.
When $\rho^S\in\Hilbert^{T+1}$, the bound reduces to a single error term, with analogies to quantitative semantics~\citep{maler2004monitoring} (Appendix~\ref{app:proof_empirical_safety_nonMarkov}, Corollary~\ref{cor:quantitative}).

\begin{rem_}[Error Structure]\label{rem:error_structure}
In indirect methods, statistical and abstraction errors enter at every DP recursion step and compound over $T$ (see Appendix~\ref{app:additional_material_abstractions}).
By contrast, $\varepsilon_1$, $\varepsilon_2$, and $\varepsilon_3$ each arise from a single $T$-step estimation, not from $T$ recursive applications of an uncertain operator. While $\varepsilon_2$ grows with the trajectory-space dimension $d(T+1)$, this is a smoothing artifact controlled by the bandwidth ratio and $\beta$, not a consequence of compounding estimation error. In particular, $\varepsilon_1$ and $\varepsilon_3$ are horizon-independent, and the bound remains stable as $T$ grows for fixed bandwidth parameters.
\end{rem_}

\section{Experiments}\label{sec:experiments}
We support our theoretical analysis with empirical results comparing the direct safety estimator from Theorem~\ref{thm:empirical_safety_nonMarkov} with the indirect, DP-based estimator from Proposition~\ref{prop:empirical_value_update} ($\varepsilon=0)$.
Throughout, predictive quality is assessed via \emph{root mean square error} (RMSE$\downarrow$), excess RMSE ($\downarrow$, like RMSE but only counting overestimates), and the Brier score decomposition~\citep{murphy1973new} into \emph{reliability} (REL$\downarrow$) and \emph{resolution} (RES$\uparrow$); as a strictly proper scoring rule~\citep{gneiting2007strictly}, the Brier score is uniquely minimised by the true conditional probability $\smash{P^S_\lawseq}(x_0)$ in \eqref{eq:safety_probability}, making binary safety outcomes $\smash{\min_{t\in\{0,\ldots,T\}} \1_{S}(\x_{t})}\in\{0,1\}$ a principled evaluation target even in the absence of Monte Carlo (MC) ground truth $P^S_{\text{MC}}(x_0)\in[0,1]$ (see App.~\ref{app:real_data_quadrotor}).
Where MC is applicable, we report RMSE against $P^S_{\text{MC}}$; where it is not, we evaluate via Brier score against observed binary outcomes.
Throughout, we additionally report certified lower bounds $\hat{p}^{\mathrm{lb}}(x_0) \leq P^S_{\lawseq}(x_0)$ (Problem~\ref{prob:data_driven_safety_verification}) via the distribution-free histogram binning procedure of \citet{gupta2020distribution}, which gives lower confidence bounds on $P^S_{\lawseq}(x_0^{\mathrm{new}})$ for new test states from the evaluation distribution, without Markov or parametric assumptions; see Appendix~\ref{app:conformal} for details.

\subsection{Beyond Markovianity}
To demonstrate the effect of non-Markovianity and horizon length on the two approaches, we consider a synthetic nonlinear system that allows tuning the degree of non-Markovianity.
To this end, we consider the nonlinear discrete-time stochastic process
\[
    \begin{bmatrix}x_{1,t+1}\\x_{2,t+1}\end{bmatrix}
    = \begin{bmatrix}x_{1,t}\\x_{2,t}\end{bmatrix}
      + h\begin{bmatrix}x_{2,t}\\\tfrac{1}{3}x_{1,t}^3-x_{1,t}-x_{2,t}\end{bmatrix} + z_t,\quad
    z_{t+1} = \alpha\big(z_t+\beta\tanh(\gamma x_{1,t})\big) + w_t,
\]
with time discretization $h=0.1$, coupling parameters $\beta_c=0.12$, $\gamma_c=1.0$,
i.i.d.\ Gaussian noise $w_t\sim\mathcal{N}(\cdotx\mid0,\sigma^2{(1 - \alpha^2)}I_2)$ with variance $\sigma^2 = 0.15^2$, and latent initial distribution $z_0\sim\mathcal{N}(\cdotx\mid0,\sigma^2I_2)$.
The safe set is $S := \{x\in[-3,2.5]\times[-2,1]\} \setminus \mathcal{O}$, where $\mathcal{O}$ comprises three small rectangular obstacles (Table~\ref{tab:ar1_obstacles}, App.~\ref{app:additional_case_study_details}).
The parameter \(\alpha\in[0,1)\) interpolates between Markovian (\(\alpha=0\)) and strongly non-Markovian dynamics (\(\alpha\!\to\!1\)).

\textbf{Datasets.}\;
For varying time horizons $T\in\{5,10,15\}$, we draw $\hat{N}=1000T$ i.i.d.\ sample sequences $\smash{\{\x^{(i)}_{0:T}\}_{i=1}^{\hat{N}}}$ and compare the two different approaches:
\begin{itemize}
    \item \textbf{Empirical DP (Proposition~\ref{prop:empirical_value_update}):} we extract $\hat{N}$ i.i.d.\ one-step sample transitions $\data_{\hat{N}}:=\smash{\{(\x^{(i)}_0,\x^{(i)}_1)\}_{i=1}^{\hat{N}}}$ to train
    the one-step CME in \eqref{eq:CME}.
    \item \textbf{Direct (Theorem~\ref{thm:empirical_safety_nonMarkov}):} we take a subset of $N=\smash{\hat{N}}/T=1000$ i.i.d.\ sample sequences $\data_N:=\smash{\{\x^{(i)}_{0:T}\}_{i=1}^N}$ to train the $T$-step CME in \eqref{eq:CME_Tstep}.
\end{itemize}
Note that selecting $\hat{N}=NT$ gives the DP-based approach the most favorable data basis.

\textbf{Results.}\;
We evaluate both methods on a \(40\times40\) grid against a Monte-Carlo baseline (\(N_{\mathrm{MC}}=1000\) per grid point); Figure~\ref{fig:case_study_nonmarkov_effect} summarises the results across \(T\) and \(\alpha\).
The indirect estimator accumulates substantial error with growing $\alpha$ and $T$ whilst the direct estimator remains essentially stable; crucially, this error can be almost exclusively attributed to \emph{safety overestimates} (Figure~\ref{fig:case_study_nonmarkov_effect}b): high certified safety assigned to states that are in fact unsafe.
The reliability score (Figure~\ref{fig:case_study_nonmarkov_effect}c) corroborates this: the direct estimator's reliability remains near zero across all $\alpha$ and $T$, confirming consistent calibration, whilst the indirect estimator's reliability deteriorates with both---a signature of systematic miscalibration induced by the violated Markov assumption.
We provide additional details and results in
Appendix~\ref{app:additional_case_study_details}.

\begin{figure}
    \centering
    \begin{subfigure}{.32\linewidth}
        \centering
        \includegraphics[width=\linewidth]{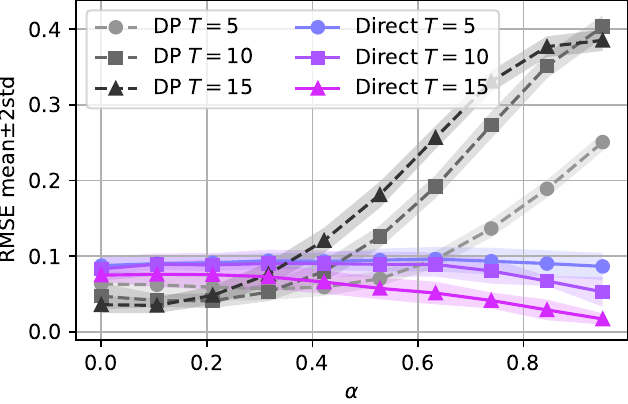}
        \caption{RMSE (lower is better)}
    \end{subfigure}
    \begin{subfigure}{.32\linewidth}
        \centering
        \includegraphics[width=\linewidth]{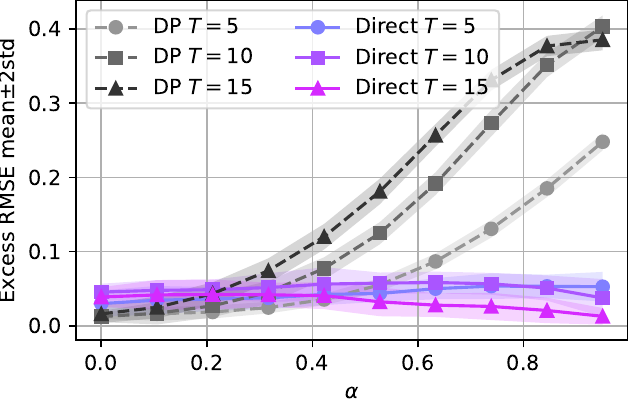}
        \caption{Excess RMSE (lower is better)}
    \end{subfigure}
    \begin{subfigure}{.32\linewidth}
        \centering
        \includegraphics[width=\linewidth]{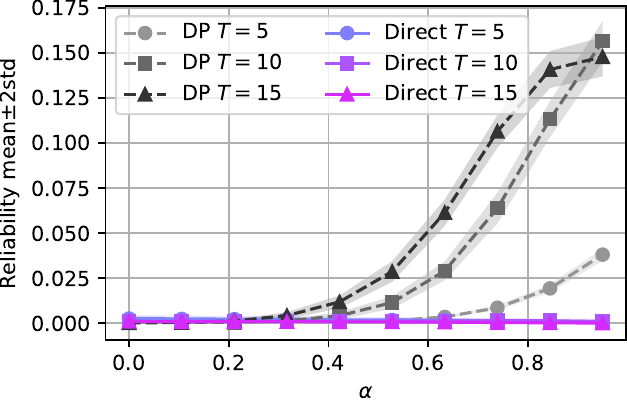}
        \caption{Reliability (lower is better)}
    \end{subfigure}
    \caption{The effect of increasingly non-Markovian dynamics ($\alpha\to1$) on the DP and direct methods.}
    \label{fig:case_study_nonmarkov_effect}
\end{figure}

\subsection{Synthetic Neural-Controlled Quadrotor}\label{sec:synthetic_quadrotor}

Next, we move to a more realistic system. We consider a simulated quadrotor system equipped with a neural adaptive controller \citep{zhang2025learning}\footnote{Code available at \url{https://github.com/muellerlab/xadapt\_ctrl}} which we perturb with i.i.d.\ Gaussian velocity disturbances ($\sigma{=}0.035\,\text{m/s}$).
The controller's internal adaptive state renders the closed-loop dynamics non-Markovian.
Starting at random initial positions $(p_x, p_y) \in [-2.5, 2.5]^2\,\text{m}$ and an initial altitude $p_z{=}0.8\,\text{m}$, the system must maintain $p_z > 0.65\,\text{m}$ and $|p_{x,y}| < 4\,\text{m}$ over a $T{=}2500$-step ($5\,\text{s}$) horizon.
We train on $N{=}1000$ trajectories (plus $n_{\mathrm{cal}}{=}1000$ held out for calibration) and evaluate on a $10{\times}10$ MC grid with $200$ rollouts per point.
Both methods use $\smash{x_0}{=}\smash{(p_x,p_y)}$ initial position as features.
The DP baseline subsamples $\smash{\hat{N}}{=}20{,}000$ one-step pairs from the $N{\cdot}T{=}2.5\,\text{M}$ available transitions.

\textbf{Results.}\;
Figure~\ref{fig:quadrotor_sim_heatmap} reveals that the indirect estimator fails to produce non-zero safety probabilities, while the direct estimator produces spatially varying, non-trivial estimates.
Table~\ref{tab:benchmark_metrics} confirms this quantitatively: the indirect estimator's RMSE${=}0.798$ equals that of the constant-zero predictor, with zero discriminative power (RES${=}0.000$).
In contrast, the direct estimator achieves a $13{\times}$ improvement in RMSE and a $300{\times}$ improvement in reliability (REL).
The collapse is explained by a spectral analysis of the DP backward operator: its spectral radius $\varrho{\approx}0.99976$ gives $\varrho^{2500}{\approx}0.55$, so 2500 backward recursions attenuate the estimate until all spatial structure is lost (RES${=}0.000$).

\textbf{Conformalization cannot compensate for a degenerate predictor:}\;
Figure~\ref{fig:quadrotor_sim_error} shows the error $\hat{p}^{\mathrm{lb}} - P^S_{\mathrm{MC}}$ in the certified lower bounds $\hat{p}^{\mathrm{lb}}$ ($1-\conf{=}90\%$ confidence, App.~\ref{app:conformal}): \textcolor{red!70!black}{red} means the certificate \emph{exceeds} the true safety probability (unsound); \textcolor{blue!70!black}{blue} means it is a sound lower bound.
The direct estimator's certificate is sound at $93\%$ of grid points and spatially discriminative (std $0.085$).
The indirect estimator's certificate appears tighter in terms of $\varepsilon$ ($0.034$ vs.\ $0.152$), yet is unsound at $44\%$ of grid points.
The structural reason: all indirect predictions collapse to a single numerical value, so the conformal procedure pools the entire calibration set into one bin and returns the calibration safety rate (${\approx}0.825$) minus a Hoeffding correction ($\varepsilon{\approx}0.034$), giving $\hat{p}^{\mathrm{lb}}{\approx}0.791$ everywhere.
This constant exceeds the true safety probability in the dangerous regions (red corner regions)---precisely where a sound certificate is needed most.

\begin{figure}[t]
    \centering
    \begin{subfigure}{.59\linewidth}
        \centering
        \includegraphics[width=1\linewidth]{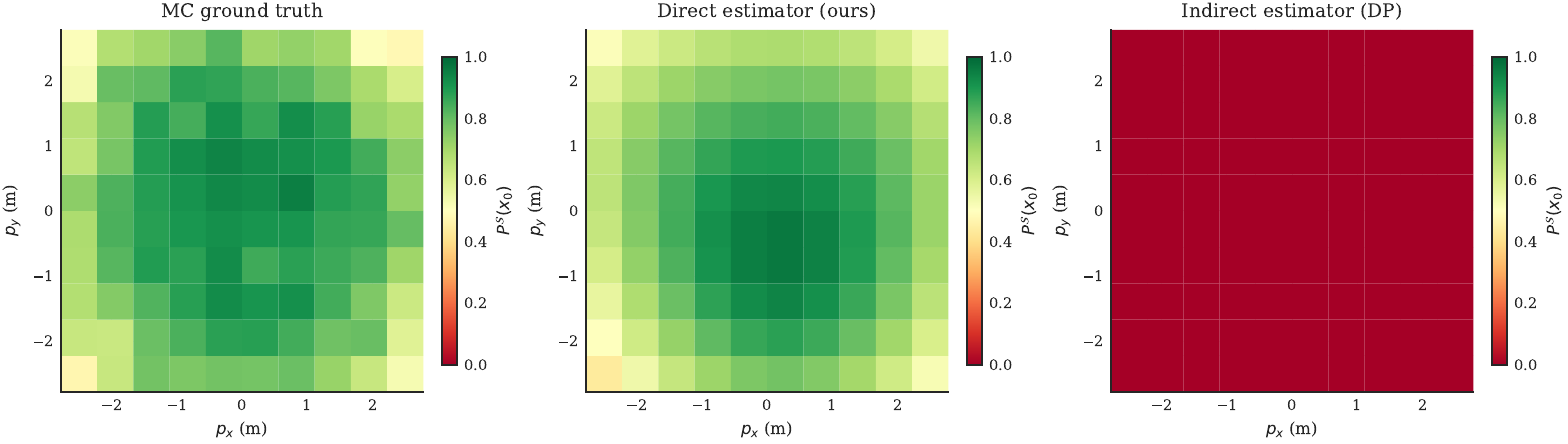}
        \caption{Safety probability (estimates) $P^S(x_0)$}
        \label{fig:quadrotor_sim_heatmap}
    \end{subfigure}
    \begin{subfigure}{.4\linewidth}
        \centering
        \includegraphics[width=1\linewidth]{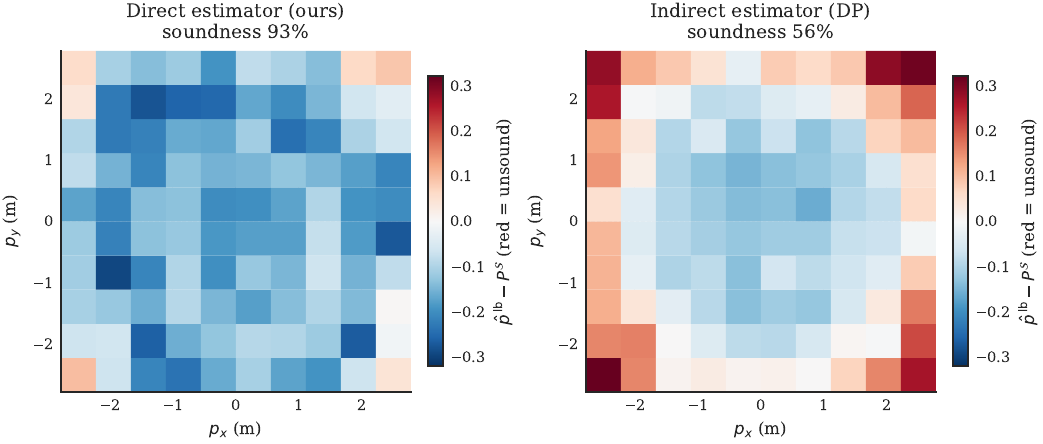}
        \caption{Error in conformalized estimates}
        \label{fig:quadrotor_sim_error}
    \end{subfigure}
    \caption{Synthetic neural-controlled quadrotor (Sec.~\ref{sec:synthetic_quadrotor}, $T{=}2500$).
    \textbf{(a)} Point estimates of $P^S(x_0)$: the indirect estimator collapses to zero everywhere (zero level set in red); the direct estimator recovers the spatial safety structure.
    \textbf{(b)} Certified lower bound error $\hat{p}^{\mathrm{lb}} - P^S_{\mathrm{MC}}$ ($\alpha{=}0.1$, App.~\ref{app:conformal}): \textcolor{red!70!black}{red} $=$ unsound ($\hat{p}^{\mathrm{lb}} > P^S_{\mathrm{MC}}$), \textcolor{blue!70!black}{blue} $=$ sound.
    The indirect certificate is a constant ${\approx}0.791$ (calibration safety rate ${\approx}0.825$ minus Hoeffding correction $\varepsilon{\approx}0.034$), unsound wherever the true safety probability falls below $0.791$.
    The direct certificate is spatially adaptive and sound at $93\%$ of grid points.
    }
\end{figure}

\begin{table}[t]
\centering
\caption{Quantitative results for Section~\ref{sec:synthetic_quadrotor} ($N{=}1000$ train).
\textbf{Top:} point estimates vs.\ MC ground truth; Brier decomposition into REL$\downarrow$ (calibration error) and RES$\uparrow$ (discriminativeness); $\varrho^T$ spectral decay of DP operator.
\textbf{Bottom:} certified lower bounds ($1-\conf{=}90\%$ confidence, App.~\ref{app:conformal}); Soundness = fraction of grid where $\hat{p}^{\mathrm{lb}} \leq P^S_{\mathrm{MC}}$; Disc.\ = std of $\hat{p}^{\mathrm{lb}}$ across grid (spatial variability).}
\label{tab:benchmark_metrics}
\setlength{\tabcolsep}{5pt}
\begin{tabular}{llccccc}
    \toprule
    Method & RMSE$\downarrow$ & REL$\downarrow$ & RES$\uparrow$ & $\varrho^T$ \\
    \midrule
    \textbf{Direct (ours)} & $\color{green!70!black}\mathbf{0.059}$ & $\color{green!70!black}\mathbf{0.002}$ & $\color{green!70!black}\mathbf{0.011}$ & $-$ \\
    Indirect               & $0.798$ & $0.623$ & $0.000$ & $5.5{\times}10^{-1}$ \\
    \midrule
    Method & Soundness$\uparrow$ & Disc.$\uparrow$ & $\varepsilon$ & \\
    \midrule
    \textbf{Direct + UQ (ours)} & $\color{green!70!black}\mathbf{93\%}$ & $\color{green!70!black}\mathbf{0.085}$ & $0.152$ & \\
    Indirect + UQ               & $56\%$ & $0.000$ & $0.034$ & \\
    \bottomrule
\end{tabular}
\end{table}

\section{Conclusion}
\label{sec:related_work_and_conclusion}
Our analysis shows that, in the learning setting, the DP paradigm underlying barriers and abstractions is no longer an advantage: it amplifies statistical and discretization error, and breaks down in the non-Markovian case.
Direct multi-step predictors avoid this recursion, operate naturally on sequenced data, and extend to non-Markovian systems.

We highlight two further directions in which the unified embedding view extends existing ideas.
Theorem~\ref{thm:empirical_safety_nonMarkov} shows that strict safety indicators introduce smoothing effects and slow convergence.
This connects directly to \textbf{quantitative semantics}~\citep{maler2004monitoring}, where smooth safety functionals $\tilde\rho^S$ quantify the margin of satisfaction, in which case the bound reduces to a single error term (see Corollary~\ref{cor:quantitative} in App.~\ref{app:quantitative_semantics}).
Notably, general quantitative semantics do not admit an equivalent DP recursion, highlighting that the direct inference approach is inherently more expressive.
Also, whilst discrete- and continuous-time techniques are usually treated separately, choosing a kernel on path space allows our framework to \textbf{extend directly to continuous-time systems}~\citep{chevyrev2022signature}, suggesting a potential route to subsuming Hamilton--Jacobi reachability within the same unified embedding view.
Kernel methods scale as $O(N^3)$, motivating the natural extension to calibrated neural classifiers, out-of-distribution conformal techniques, and deep learning-based verification pipelines~\citep[see also][]{thorpe2022socks,sturt2023nonparametric}.

\textbf{Impact.}\;
Although this work focuses on safety verification, we believe that its implications suggest a structural turning point in data-driven verification and control synthesis in general, yielding methods that combine statistical learning with formal reasoning without relying on Markovian structure.
We expect these insights to advance the individual methods discussed.

\bibliographystyle{plainnat}
\bibliography{references}


\appendix

\appendix

\section{Additional Case Study Details and Experimental Results}\label{app:additional_case_study_details}
This appendix provides more comprehensive numerical results and additional details on the case study in Section~\ref{sec:experiments}.

\paragraph{Safe set for the synthetic benchmark.}
The safe set is $S = [-3,2.5]\times[-2,1]\setminus\mathcal{O}$, where $\mathcal{O}$ is the union of three rectangular obstacles listed in Table~\ref{tab:ar1_obstacles}.

\begin{table}[h]
\centering
\caption{Obstacle coordinates for the non-Markovian AR(1) benchmark.}
\label{tab:ar1_obstacles}
\begin{tabular}{ccc}
\toprule
Obstacle & $x_1$ range & $x_2$ range \\
\midrule
1 & $[0.4,\ 0.6]$ & $[0.2,\ 0.6]$ \\
2 & $[0.6,\ 0.7]$ & $[0.2,\ 0.4]$ \\
3 & $[-1.5,\ -0.5]$ & $[-1.5,\ -1.0]$ \\
\bottomrule
\end{tabular}
\end{table}

\paragraph{Hyperparameters.}
For all presented experiments, we use the hyperparameters reported in Table~\ref{tab:HPs}, which have been obtained through optimization using \textsc{Optuna}~\citep{akiba2019optuna}.
The robustness parameter $\varepsilon$ was set to zero for all experiments to allow for comparability.  
Apart from the time horizon $T\in\{5,10,15\}$, we differentiate between two types of data used to train the one-step CME in~\eqref{eq:CME} for the DP method:
\begin{itemize}
    \item \textbf{I.i.d.\ Data:} as used in Section~\ref{sec:experiments}, where we extract $\hat{N}$ uniformly sampled one-step transitions; or
    \item \textbf{Non-I.i.d.\ Data:} obtained by breaking up the $N=1000$ full sequences $\data_N$ into $\hat{N}=NT$ dependent one-step transitions.
\end{itemize}
We note that the DP-based estimator is substantially more sensitive to hyperparameter changes than the direct approach.  
In addition, evaluating the direct estimator is orders of magnitude faster than the iterative DP scheme, with the gap growing linearly in the horizon~$T$.  
Because CME evaluation scales cubically in the number of samples, and the direct approach requires only a single CME evaluation --- compared to $T$ evaluations for the DP approach --- this difference becomes significant.  
For completeness, we also provide results for a matched sample amount $\hat{N}=N=1000$.

\medskip
For all reported results we repeat the experiments ---including the expensive MC evaluation --- $10$ times and report the mean and double standard deviation.  
An overview of all additional experimental results can be found in Table~\ref{tab:additional_results}.  
Apart from the \emph{root mean square error} (RMSE), we report the following additional metrics and the Brier decomposition:
\begin{itemize}
    \item \textbf{Excess RMSE:} RMSE computed only on grid points where the predicted safety probability exceeds the MC baseline.
    \item \textbf{Reliability:} deviation between predicted probabilities and empirical event frequencies across forecast bins (calibration error).
    \item \textbf{Resolution:} spread of empirical event frequencies across bins, quantifying how much the predictions separate safe from unsafe regions. We normalize the resolution by the uncertainty score to obtain a value between zero and one.
    \item \textbf{Uncertainty:} intrinsic difficulty of the prediction task determined by the marginal event frequency.
    \item \textbf{Brier score:} the \emph{mean squared error} (MSE), included for completeness.
\end{itemize}

Across all settings, the direct approach yields consistently stable reliability, with RMSE and excess RMSE remaining essentially unchanged for varying degrees of non-Markovianity ($\alpha\in[0,1)$).  
By contrast, the DP estimator shows strong sensitivity to hyperparameters and frequently over-predicts the safety probability, particularly for increasing $\alpha$ and larger horizons~$T$, where the system becomes largely unsafe.  
For the matched-sample case $\hat{N}=N=1000$, the initial DP advantage vanishes entirely.

\textbf{Hardware.}\;
All experiments were run on a MacBook Pro with an Apple M3 chip and 32\,GB of RAM.
Expected runtimes for each experiment configuration are provided in the repository.
The full sweep in Section~\ref{sec:experiments} (all $\alpha$, $T$, and seed combinations) takes on the order of several hours on this hardware.

\textbf{Computational Cost of the DP Baseline.}\;
The standard CME-DP baseline extracts all $N{\cdot}T$ one-step transition pairs from the training trajectories and builds an $\hat{N}{\times}\hat{N}$ Gram matrix over source states, incurring $O(\hat{N}^3 + \hat{N}^2 T)$ cost.
For $N{=}1000$ and $T{=}2500$ the full $\hat{N}{=}N{\cdot}T{=}2.5\,\text{M}$ matrix is infeasible (${\approx}50\,\text{TB}$); we therefore randomly subsample $\hat{N}{=}20{,}000$ pairs ($\hat{N}^2{\times}8\,\text{bytes}\approx3.2\,\text{GB}$ Gram matrix) and note that this already gives DP $20{\times}$ more data points than the direct method uses ($N{=}1000$ trajectories).
Even with this advantage, the spectral contraction over $T{=}2500$ steps causes the DP to collapse to near-zero (Section~\ref{sec:synthetic_quadrotor}).
By contrast, the direct estimator constructs a single $N{\times}N{=}1{,}000{\times}1{,}000$ Gram matrix and requires no backward recursion, giving an $O(N^3)$ total cost and a $(\hat{N}/N)^3{=}8{,}000{\times}$ reduction in Gram matrix cost relative to the subsampled DP.

\begin{table}[h]
    \centering
    \aboverulesep=0ex 
    \belowrulesep=0ex 
    \caption{Additional experimental results}
    \begin{tabular}{l|cc}
        \toprule
        \rule{0pt}{1.1EM}
         & $\hat N=NT$ & $\hat N=N$ \\
        \midrule
        \rule{0pt}{1.1EM}
        Uniform DP data & Figs.~\ref{fig:results_by_T_diffN_indeptrain}--\ref{fig:safetyprob_a0_00_diffN_indeptrain}--\ref{fig:safetyprob_a0_95_diffN_indeptrain} & Figs.~\ref{fig:results_by_T_sameN_indeptrain}--\ref{fig:safetyprob_a0_00_sameN_indeptrain}--\ref{fig:safetyprob_a0_95_sameN_indeptrain} \\
        Dependent DP data & Figs.~\ref{fig:results_by_T_diffN_deptrain}--\ref{fig:safetyprob_a0_00_diffN_deptrain}--\ref{fig:safetyprob_a0_95_diffN_deptrain} & Figs.~\ref{fig:results_by_T_sameN_deptrain}--\ref{fig:safetyprob_a0_00_sameN_deptrain}--\ref{fig:safetyprob_a0_95_sameN_deptrain} \\
        \bottomrule
    \end{tabular}
    \label{tab:additional_results}
\end{table}
\begin{table*}
    \centering
    \caption{Hyperparameters used in the experiments.}
    \resizebox{\textwidth}{!}{\begin{tabular}{lcccc}
        \toprule
        Configuration & \multicolumn{2}{c}{Direct Approach} & \multicolumn{2}{c}{DP Approach} \\
         & $\sigma_l^2$ & $\lambda$ & $\sigma_l^2$ & $\lambda$ \\
        \midrule
        i.i.d.\ DP, $T=5$ & $(0.772, 1.572)$ & $3.004\cdot 10^{-8}$ & $(0.596, 0.361)$ & $1.456\cdot 10^{-6}$ \\
        i.i.d.\ DP, $T=10$ & $(0.986, 0.914)$ & $4.615\cdot 10^{-8}$ & $(0.556, 0.652)$ & $2.038\cdot 10^{-6}$ \\
        i.i.d.\ DP, $T=15$ & $(1.282, 1.416)$ & $2.791\cdot 10^{-7}$ & $(0.472, 0.290)$ & $2.294\cdot 10^{-7}$ \\
        non-i.i.d.\ DP, $T=5$ & $(0.917, 1.187)$ & $1.645\cdot 10^{-8}$ & $(0.477, 0.444)$ & $9.892\cdot 10^{-6}$ \\
        non-i.i.d.\ DP, $T=10$ & $(1.189, 0.981)$ & $1.749\cdot 10^{-7}$ & $(0.408, 0.359)$ & $5.239\cdot 10^{-7}$ \\
        non-i.i.d.\ DP, $T=15$ & $(0.599, 0.401)$ & $0.001$ & $(0.638, 0.784)$ & $5.162\cdot 10^{-7}$ \\
        \midrule
        Synth.\ quadrotor (Sec.~\ref{sec:synthetic_quadrotor}) & $\sigma_l{=}2.0$ & $0.01$ & $\sigma_l{=}0.3$, $\hat{N}{=}20{,}000$ & $10^{-4}$ \\
        Real-data (App.~\ref{app:real_data_quadrotor}) & \multicolumn{4}{c}{5-fold CV: $\sigma_l\in\{2,4,8,12,20\}$, $\lambda\in\{10^{-4},\ldots,10^{-1}\}$, per pilot} \\
        \bottomrule
    \end{tabular}}
    \label{tab:HPs}
\end{table*}


\begin{figure*}[!ht]
    \centering
    \begin{subfigure}[t]{.32\linewidth}
        \includegraphics[width=\linewidth]{figures/newresults_by_T_5_10_15_diffN_indeptrain_rmse.pdf}
        \caption{RMSE (lower is better)}
    \end{subfigure}
    \begin{subfigure}[t]{.32\linewidth}
        \includegraphics[width=\linewidth]{figures/newresults_by_T_5_10_15_diffN_indeptrain_excess.pdf}
        \caption{Excess RMSE (lower is better)}
    \end{subfigure}
    \begin{subfigure}[t]{.32\linewidth}
        \includegraphics[width=\linewidth]{figures/newresults_by_T_5_10_15_diffN_indeptrain_rel.pdf}
        \caption{Reliability (lower is better)}
    \end{subfigure}
    \\[.6em]
    \begin{subfigure}[t]{.32\linewidth}
        \includegraphics[width=\linewidth]{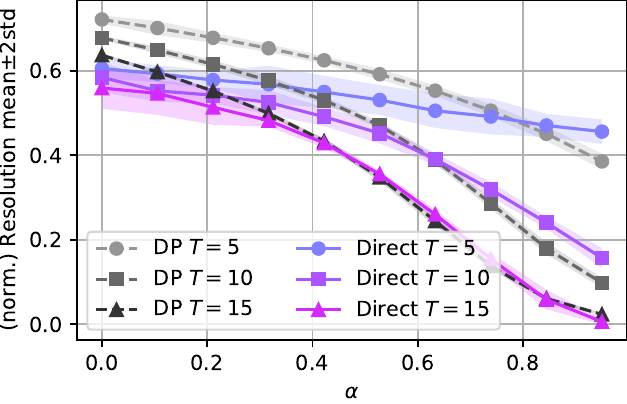}
        \caption{Resolution (normalized) (1 best/0 worst)}
    \end{subfigure}
    \begin{subfigure}[t]{.32\linewidth}
        \includegraphics[width=\linewidth]{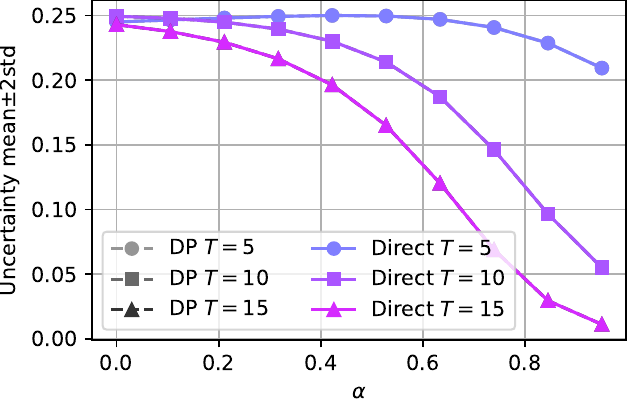}
        \caption{Uncertainty (DP behind Direct)}
    \end{subfigure}
    \begin{subfigure}[t]{.32\linewidth}
        \includegraphics[width=\linewidth]{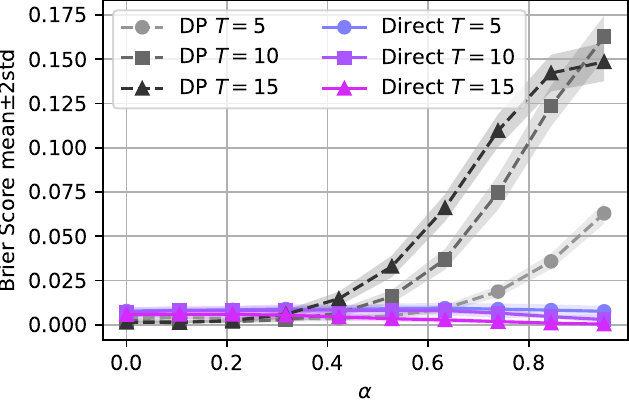}
        \caption{Brier (lower is better)}
    \end{subfigure}
    \caption{For uniform DP data and $\hat N=NT$: Results comparing DP and the direct approach for various $\alpha\in[0,1)$ and $T=5,10,15$ in terms of different metrics.}
    \label{fig:results_by_T_diffN_indeptrain}
\end{figure*}

\begin{figure*}[!ht]
    \centering
    \begin{subfigure}[t]{.32\linewidth}
        \includegraphics[width=\linewidth]{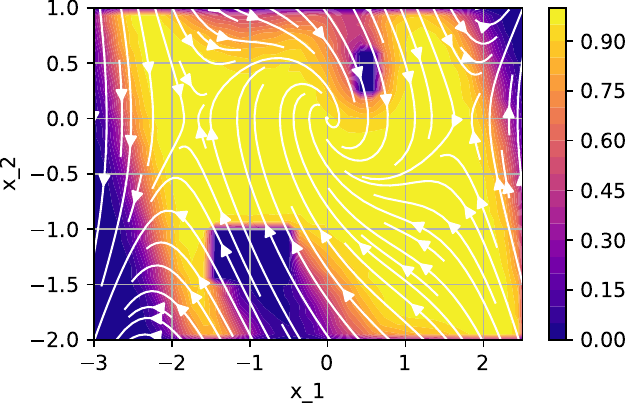}
        \caption{MC Ground Truth ($T=5$)}
    \end{subfigure}
    \begin{subfigure}[t]{.32\linewidth}
        \includegraphics[width=\linewidth]{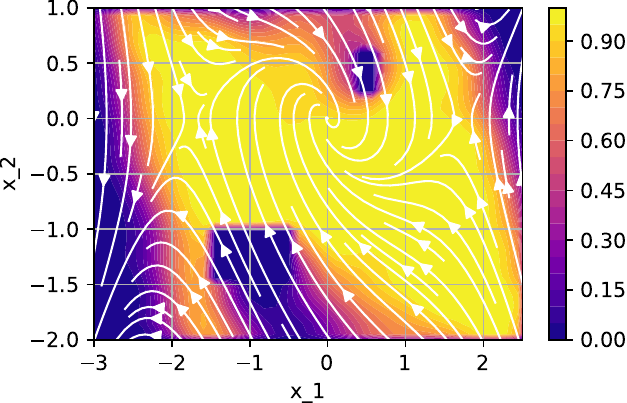}
        \caption{DP ($T=5$)}
    \end{subfigure}
    \begin{subfigure}[t]{.32\linewidth}
        \includegraphics[width=\linewidth]{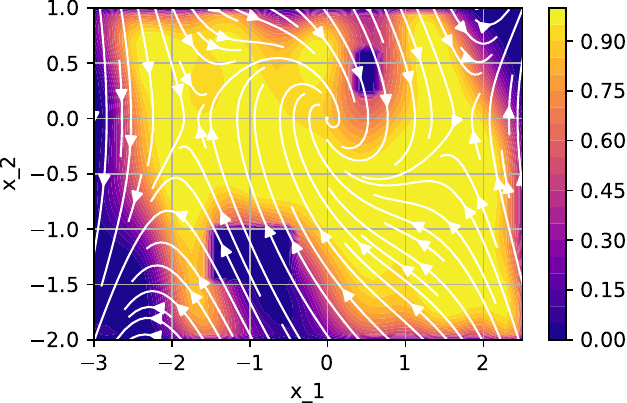}
        \caption{Direct ($T=5$)}
    \end{subfigure}
    \\[.6em]
    \begin{subfigure}[t]{.32\linewidth}
        \includegraphics[width=\linewidth]{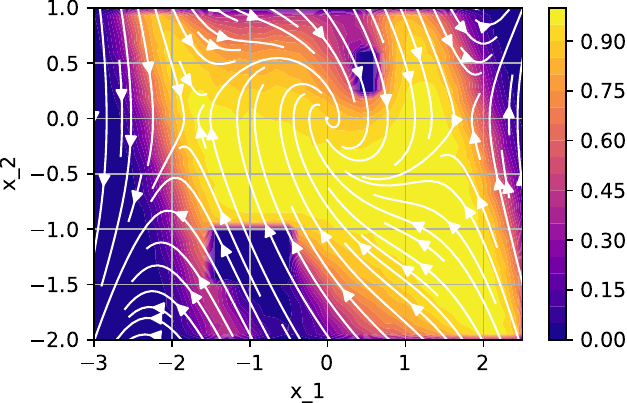}
        \caption{MC Ground Truth ($T=10$)}
    \end{subfigure}
    \begin{subfigure}[t]{.32\linewidth}
        \includegraphics[width=\linewidth]{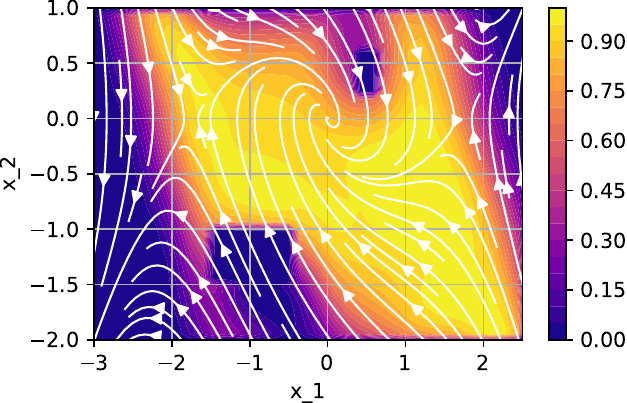}
        \caption{DP ($T=10$)}
    \end{subfigure}
    \begin{subfigure}[t]{.32\linewidth}
        \includegraphics[width=\linewidth]{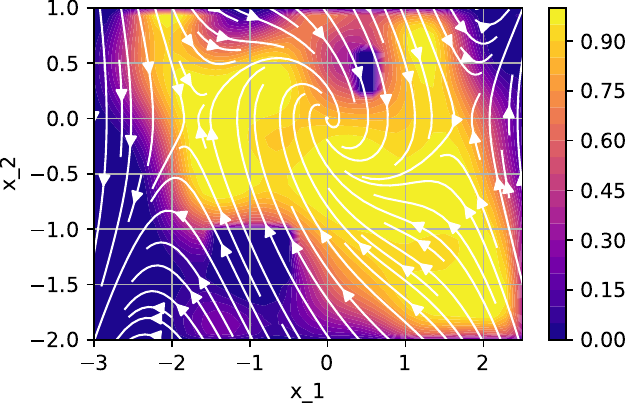}
        \caption{Direct ($T=10$)}
    \end{subfigure}
    \\[.6em]
    \begin{subfigure}[t]{.32\linewidth}
        \includegraphics[width=\linewidth]{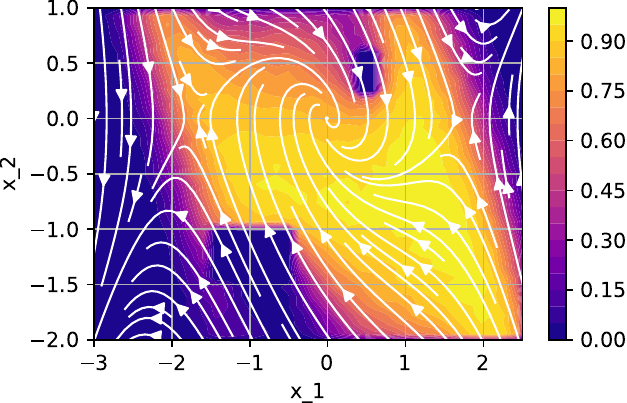}
        \caption{MC Ground Truth ($T=15$)}
    \end{subfigure}
    \begin{subfigure}[t]{.32\linewidth}
        \includegraphics[width=\linewidth]{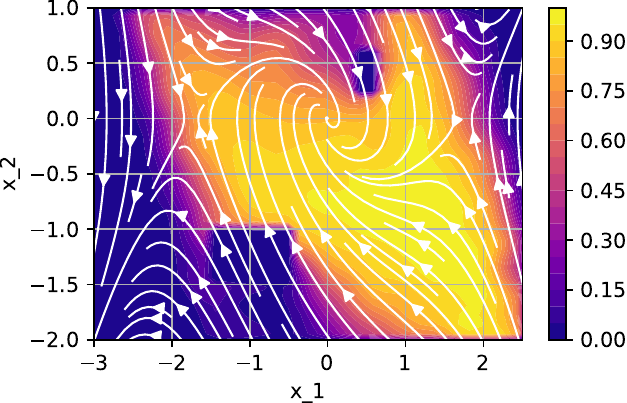}
        \caption{DP ($T=15$)}
    \end{subfigure}
    \begin{subfigure}[t]{.32\linewidth}
        \includegraphics[width=\linewidth]{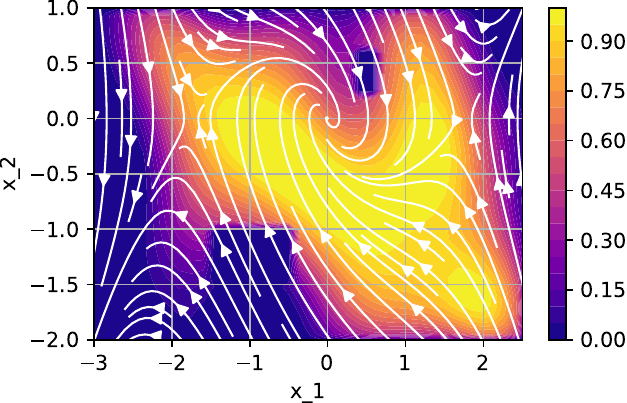}
        \caption{Direct ($T=15$)}
    \end{subfigure}
    \caption{For uniform DP data and $\hat N=NT$: Safety probability estimates (in color) and unknown true dynamics (mean) vector field (white arrows; all identical) for the fully Markovian dynamics ($\alpha=0$).}
    \label{fig:safetyprob_a0_00_diffN_indeptrain}
\end{figure*}

\begin{figure*}[!ht]
    \centering
    \begin{subfigure}[t]{.32\linewidth}
        \includegraphics[width=\linewidth]{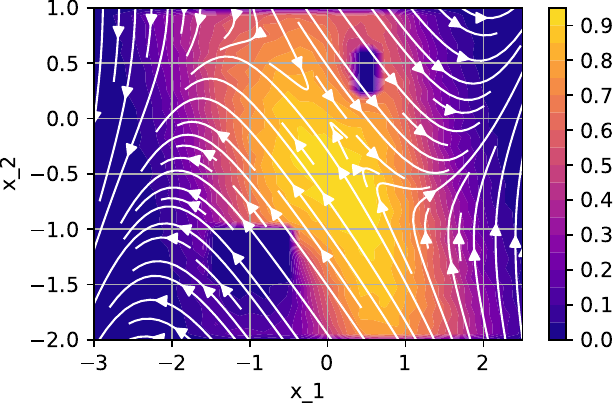}
        \caption{MC Ground Truth ($T=5$)}
    \end{subfigure}
    \begin{subfigure}[t]{.32\linewidth}
        \includegraphics[width=\linewidth]{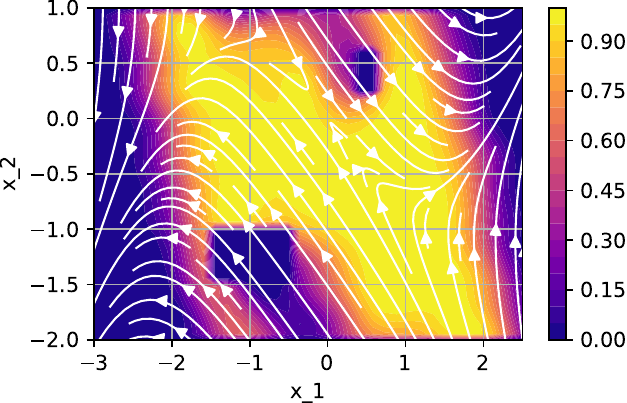}
        \caption{DP ($T=5$)}
    \end{subfigure}
    \begin{subfigure}[t]{.32\linewidth}
        \includegraphics[width=\linewidth]{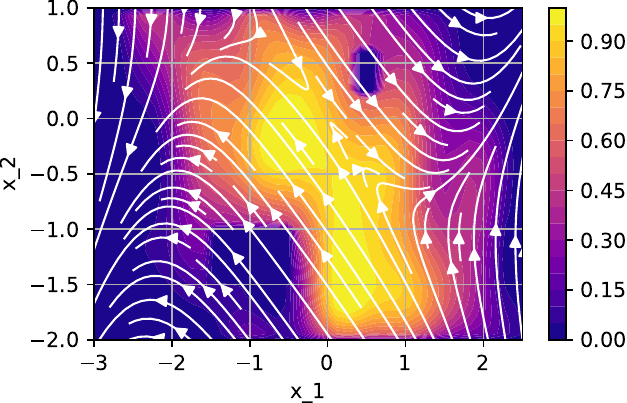}
        \caption{Direct ($T=5$)}
    \end{subfigure}
    \\[.6em]
    \begin{subfigure}[t]{.32\linewidth}
        \includegraphics[width=\linewidth]{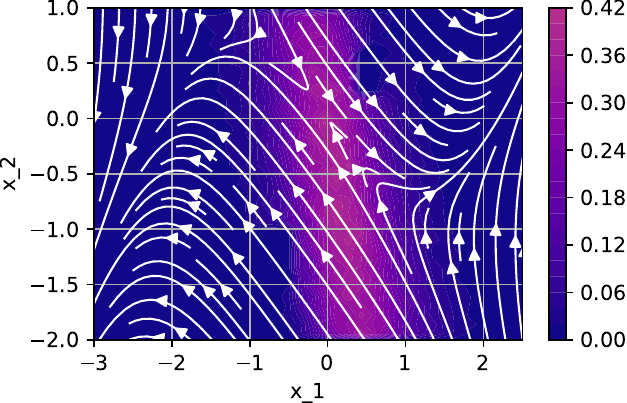}
        \caption{MC Ground Truth ($T=10$)}
    \end{subfigure}
    \begin{subfigure}[t]{.32\linewidth}
        \includegraphics[width=\linewidth]{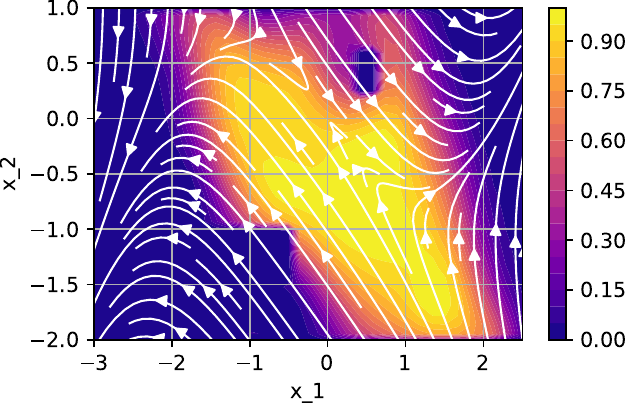}
        \caption{DP ($T=10$)}
    \end{subfigure}
    \begin{subfigure}[t]{.32\linewidth}
        \includegraphics[width=\linewidth]{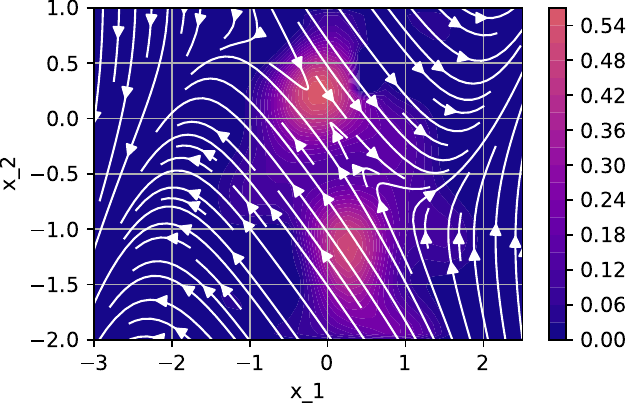}
        \caption{Direct ($T=10$)}
    \end{subfigure}
    \\[.6em]
    \begin{subfigure}[t]{.32\linewidth}
        \includegraphics[width=\linewidth]{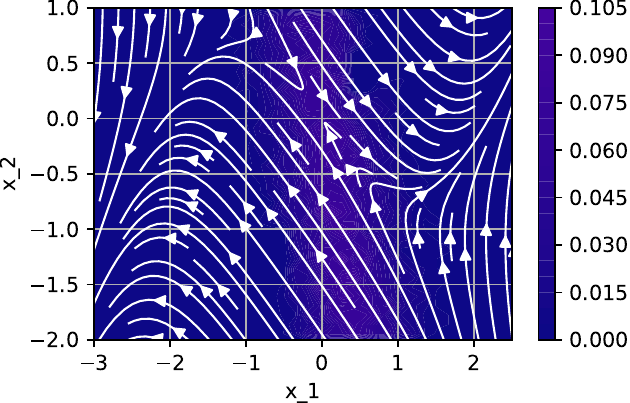}
        \caption{MC Ground Truth ($T=15$)}
    \end{subfigure}
    \begin{subfigure}[t]{.32\linewidth}
        \includegraphics[width=\linewidth]{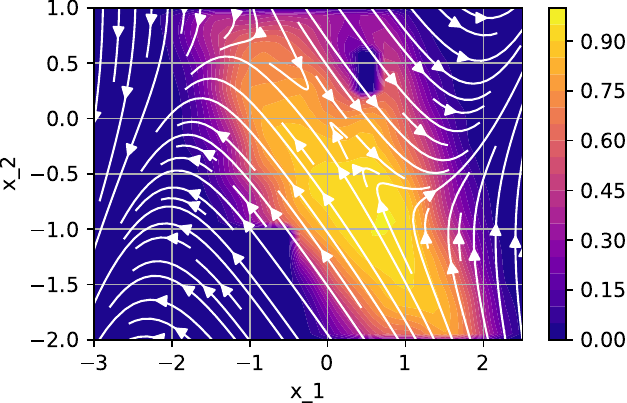}
        \caption{DP ($T=15$)}
    \end{subfigure}
    \begin{subfigure}[t]{.32\linewidth}
        \includegraphics[width=\linewidth]{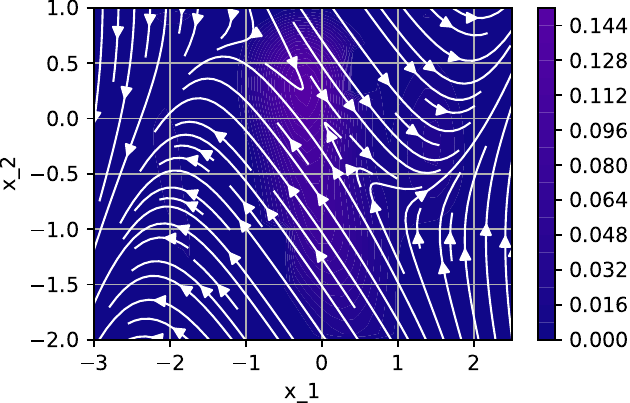}
        \caption{Direct ($T=15$)}
    \end{subfigure}
    \caption{For uniform DP data and $\hat N=NT$: Safety probability estimates (in color) and unknown true dynamics (mean) vector field (white arrows; all identical) for the fully Markovian dynamics ({\color{red}$\alpha=0.95$}).}
    \label{fig:safetyprob_a0_95_diffN_indeptrain}
\end{figure*}

\begin{figure*}[!ht]
    \centering
    \begin{subfigure}[t]{.32\linewidth}
        \includegraphics[width=\linewidth]{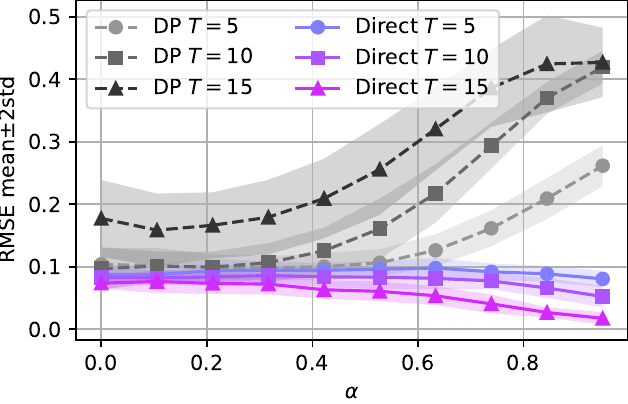}
        \caption{RMSE (lower is better)}
    \end{subfigure}
    \begin{subfigure}[t]{.32\linewidth}
        \includegraphics[width=\linewidth]{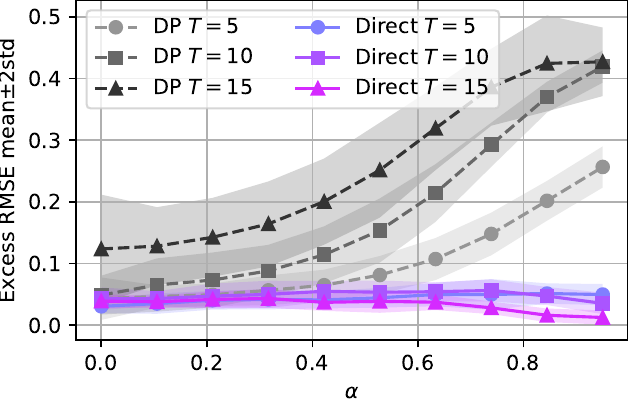}
        \caption{Excess RMSE (lower is better)}
    \end{subfigure}
    \begin{subfigure}[t]{.32\linewidth}
        \includegraphics[width=\linewidth]{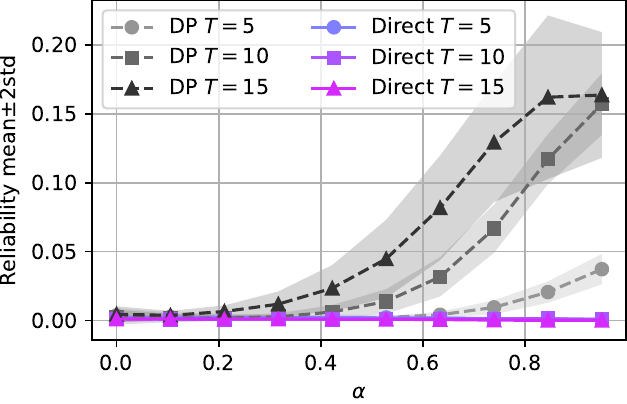}
        \caption{Reliability (lower is better)}
    \end{subfigure}
    \\[.6em]
    \begin{subfigure}[t]{.32\linewidth}
        \includegraphics[width=\linewidth]{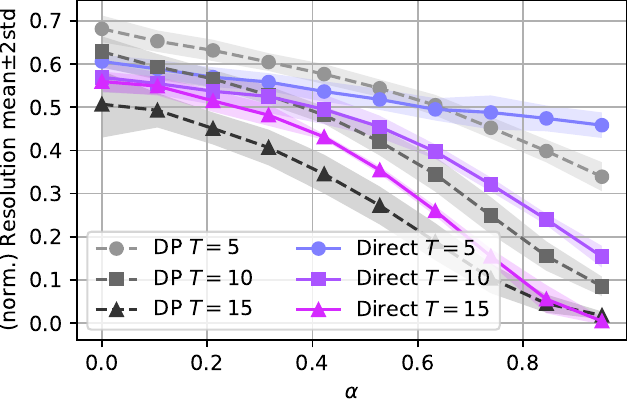}
        \caption{Resolution (normalized) (1 best/0 worst)}
    \end{subfigure}
    \begin{subfigure}[t]{.32\linewidth}
        \includegraphics[width=\linewidth]{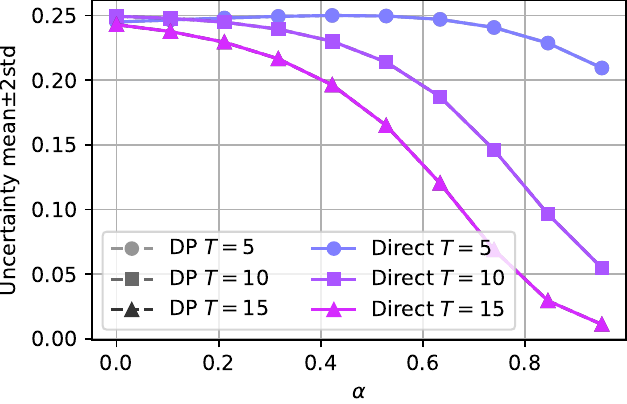}
        \caption{Uncertainty (DP behind Direct)}
    \end{subfigure}
    \begin{subfigure}[t]{.32\linewidth}
        \includegraphics[width=\linewidth]{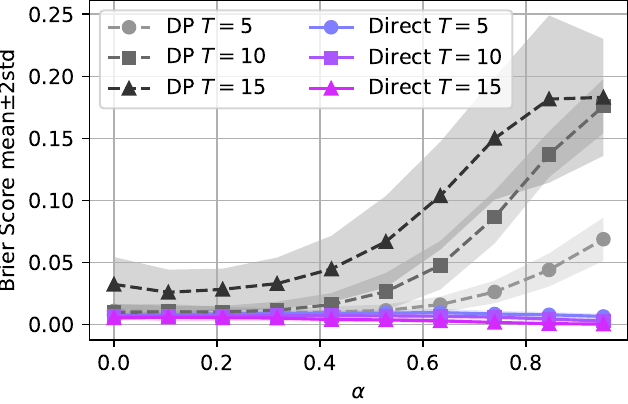}
        \caption{Brier (lower is better)}
    \end{subfigure}
    \caption{For uniform DP data and {\color{red}$\hat N=N$}: Results comparing DP and the direct approach for various $\alpha\in[0,1)$ and $T=5,10,15$ in terms of different metrics. Using same hyperparameters as for $\hat N=NT$.} 
    \label{fig:results_by_T_sameN_indeptrain}
\end{figure*}

\begin{figure*}[!ht]
    \centering
    \begin{subfigure}[t]{.32\linewidth}
        \includegraphics[width=\linewidth]{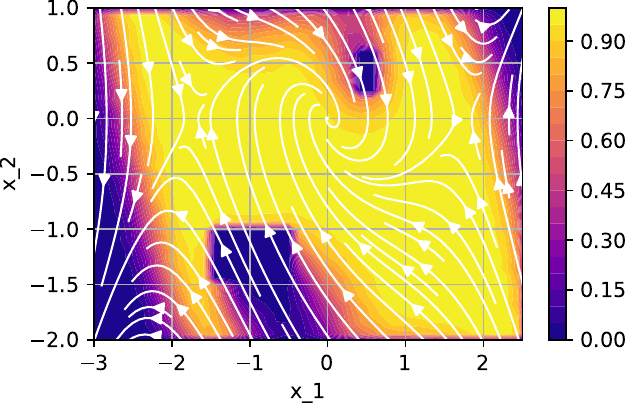}
        \caption{MC Ground Truth ($T=5$)}
    \end{subfigure}
    \begin{subfigure}[t]{.32\linewidth}
        \includegraphics[width=\linewidth]{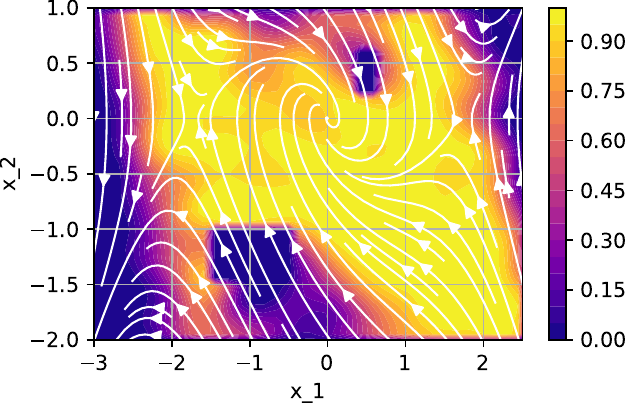}
        \caption{DP ($T=5$)}
    \end{subfigure}
    \begin{subfigure}[t]{.32\linewidth}
        \includegraphics[width=\linewidth]{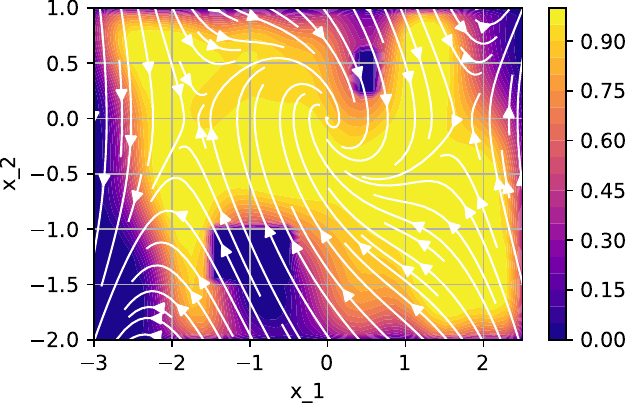}
        \caption{Direct ($T=5$)}
    \end{subfigure}
    \\[.6em]
    \begin{subfigure}[t]{.32\linewidth}
        \includegraphics[width=\linewidth]{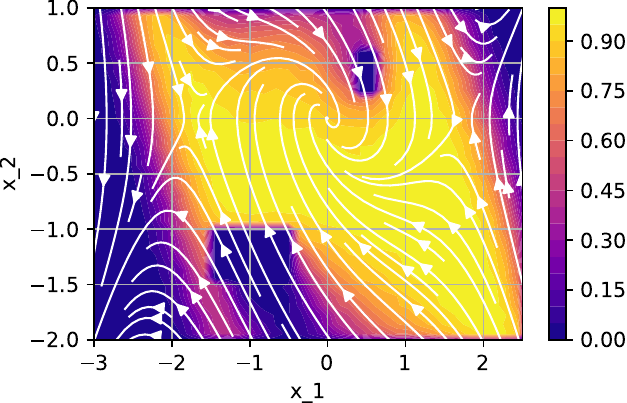}
        \caption{MC Ground Truth ($T=10$)}
    \end{subfigure}
    \begin{subfigure}[t]{.32\linewidth}
        \includegraphics[width=\linewidth]{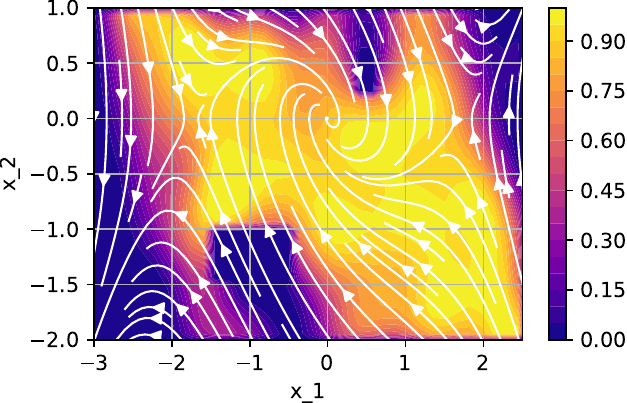}
        \caption{DP ($T=10$)}
    \end{subfigure}
    \begin{subfigure}[t]{.32\linewidth}
        \includegraphics[width=\linewidth]{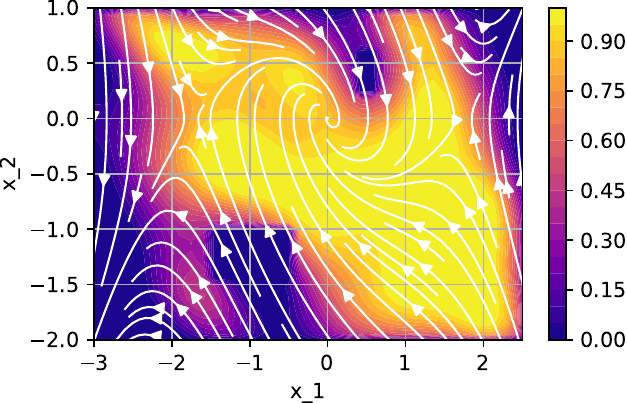}
        \caption{Direct ($T=10$)}
    \end{subfigure}
    \\[.6em]
    \begin{subfigure}[t]{.32\linewidth}
        \includegraphics[width=\linewidth]{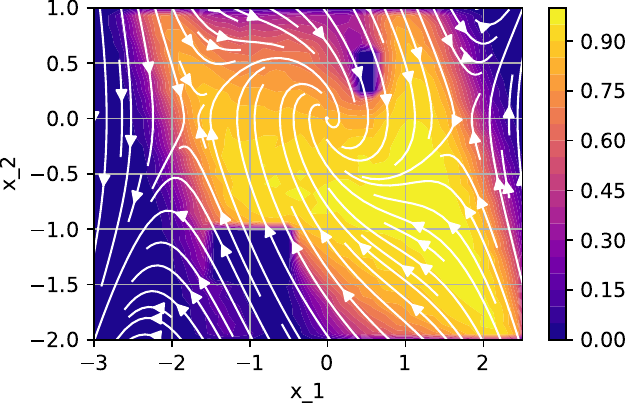}
        \caption{MC Ground Truth ($T=15$)}
    \end{subfigure}
    \begin{subfigure}[t]{.32\linewidth}
        \includegraphics[width=\linewidth]{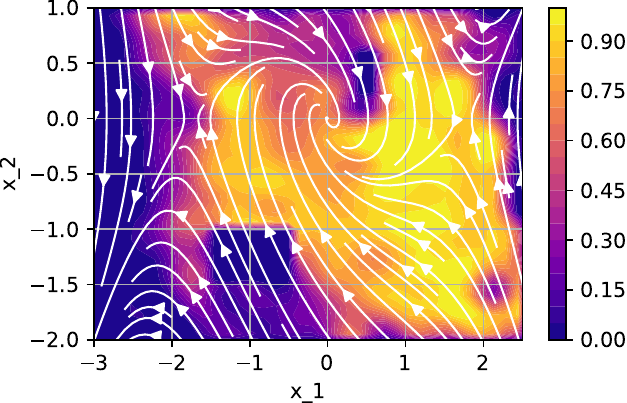}
        \caption{DP ($T=15$)}
    \end{subfigure}
    \begin{subfigure}[t]{.32\linewidth}
        \includegraphics[width=\linewidth]{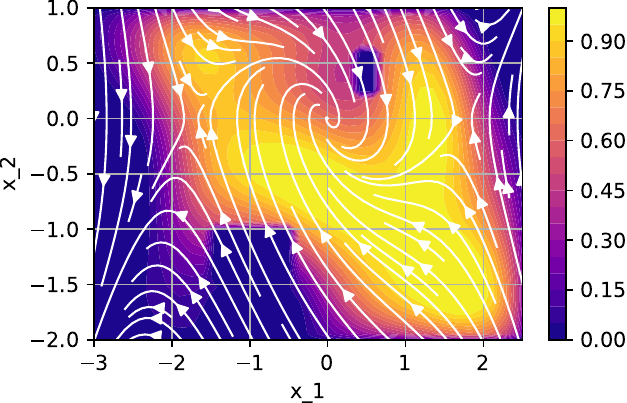}
        \caption{Direct ($T=15$)}
    \end{subfigure}
    \caption{For uniform DP data and {\color{red}$\hat N=N$}: Safety probability estimates (in color) and unknown true dynamics (mean) vector field (white arrows; all identical) for the fully Markovian dynamics ($\alpha=0$). Using same hyperparameters as for $\hat N=NT$.}
    \label{fig:safetyprob_a0_00_sameN_indeptrain}
\end{figure*}

\begin{figure*}[!ht]
    \centering
    \begin{subfigure}[t]{.32\linewidth}
        \includegraphics[width=\linewidth]{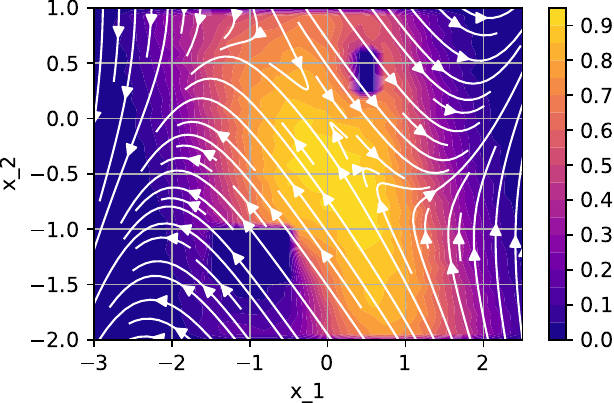}
        \caption{MC Ground Truth ($T=5$)}
    \end{subfigure}
    \begin{subfigure}[t]{.32\linewidth}
        \includegraphics[width=\linewidth]{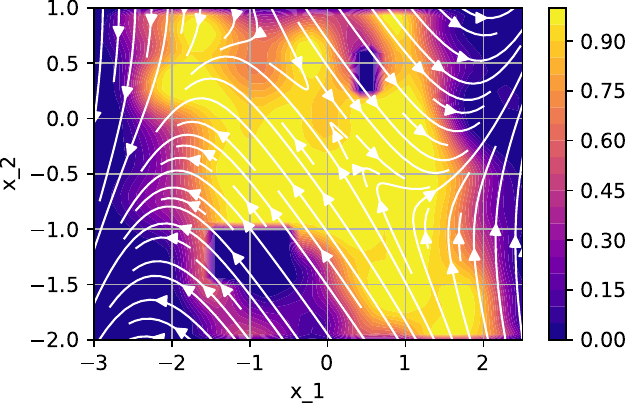}
        \caption{DP ($T=5$)}
    \end{subfigure}
    \begin{subfigure}[t]{.32\linewidth}
        \includegraphics[width=\linewidth]{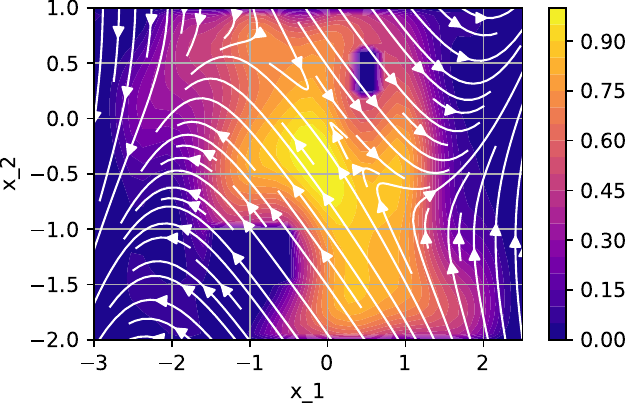}
        \caption{Direct ($T=5$)}
    \end{subfigure}
    \\[.6em]
    \begin{subfigure}[t]{.32\linewidth}
        \includegraphics[width=\linewidth]{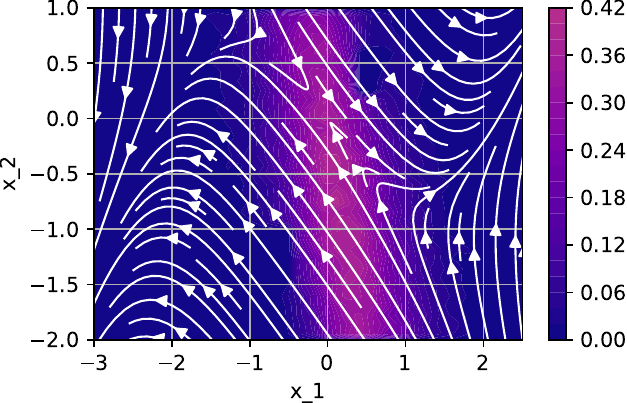}
        \caption{MC Ground Truth ($T=10$)}
    \end{subfigure}
    \begin{subfigure}[t]{.32\linewidth}
        \includegraphics[width=\linewidth]{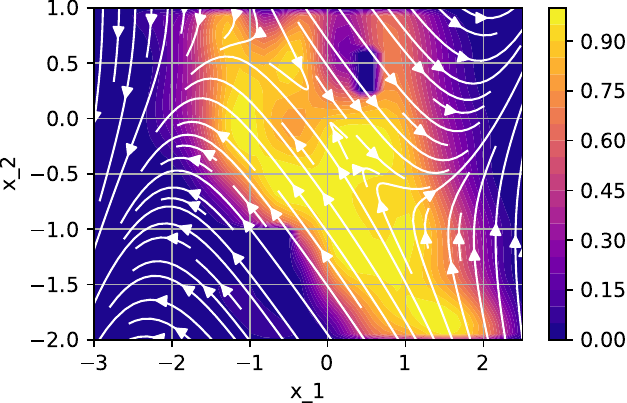}
        \caption{DP ($T=10$)}
    \end{subfigure}
    \begin{subfigure}[t]{.32\linewidth}
        \includegraphics[width=\linewidth]{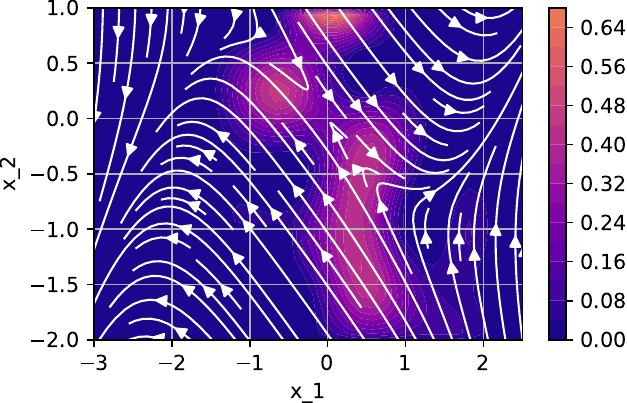}
        \caption{Direct ($T=10$)}
    \end{subfigure}
    \\[.6em]
    \begin{subfigure}[t]{.32\linewidth}
        \includegraphics[width=\linewidth]{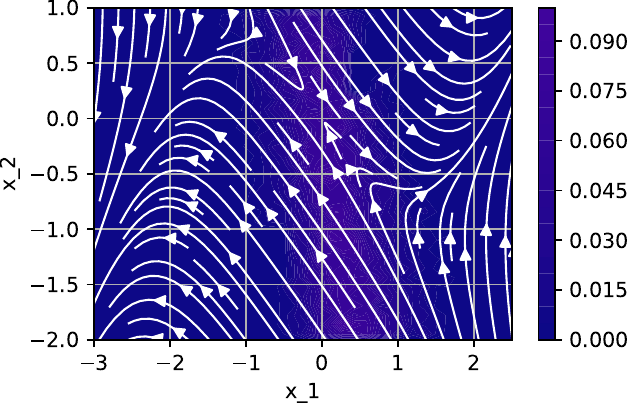}
        \caption{MC Ground Truth ($T=15$)}
    \end{subfigure}
    \begin{subfigure}[t]{.32\linewidth}
        \includegraphics[width=\linewidth]{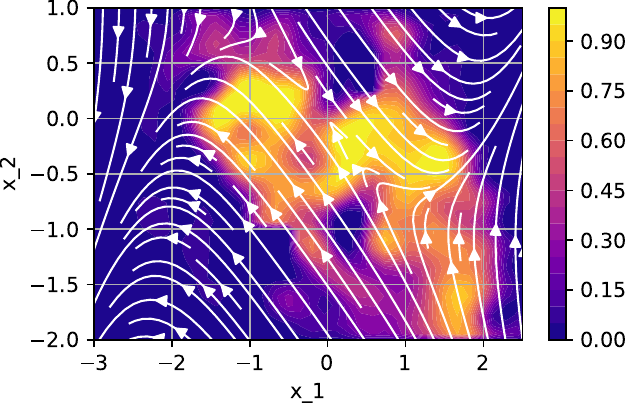}
        \caption{DP ($T=15$)}
    \end{subfigure}
    \begin{subfigure}[t]{.32\linewidth}
        \includegraphics[width=\linewidth]{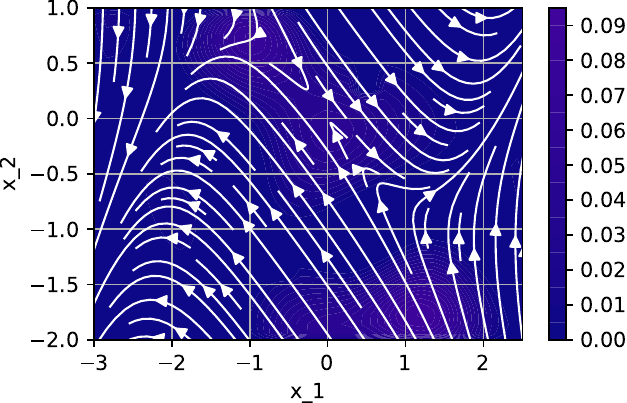}
        \caption{Direct ($T=15$)}
    \end{subfigure}
    \caption{For uniform DP data and {\color{red}$\hat N=N$}: Safety probability estimates (in color) and unknown true dynamics (mean) vector field (white arrows; all identical) for the fully Markovian dynamics ({\color{red}$\alpha=0.95$}). Using same hyperparameters as for $\hat N=NT$.}
    \label{fig:safetyprob_a0_95_sameN_indeptrain}
\end{figure*}


\begin{figure*}[!ht]
    \centering
    \begin{subfigure}[t]{.32\linewidth}
        \includegraphics[width=\linewidth]{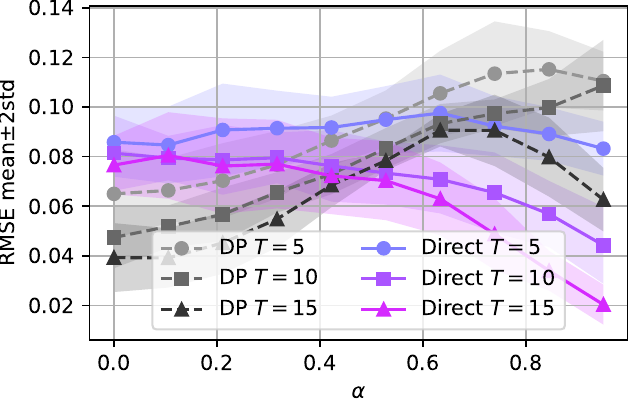}
        \caption{RMSE (lower is better)}
    \end{subfigure}
    \begin{subfigure}[t]{.32\linewidth}
        \includegraphics[width=\linewidth]{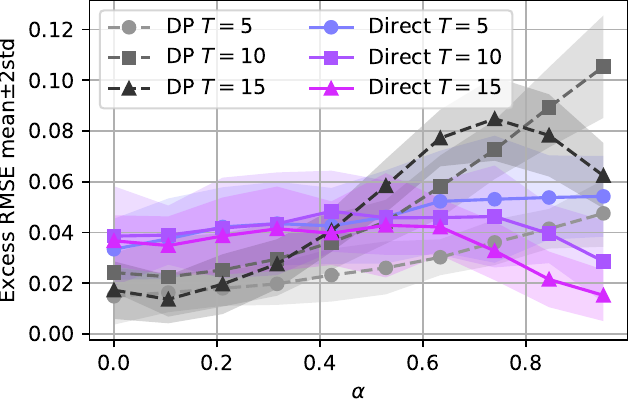}
        \caption{Excess RMSE (lower is better)}
    \end{subfigure}
    \begin{subfigure}[t]{.32\linewidth}
        \includegraphics[width=\linewidth]{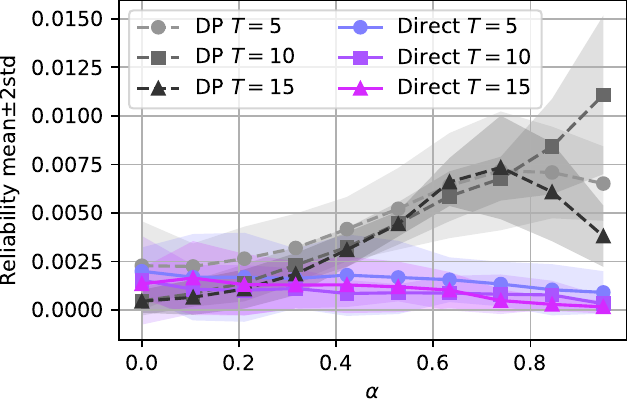}
        \caption{Reliability (lower is better)}
    \end{subfigure}
    \\[.6em]
    \begin{subfigure}[t]{.32\linewidth}
        \includegraphics[width=\linewidth]{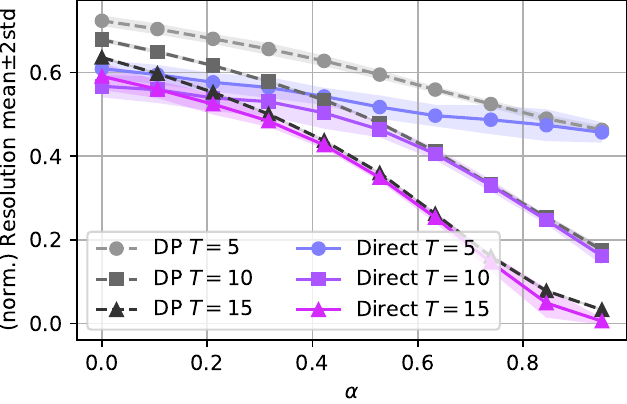}
        \caption{Resolution (normalized) (1 best/0 worst)}
    \end{subfigure}
    \begin{subfigure}[t]{.32\linewidth}
        \includegraphics[width=\linewidth]{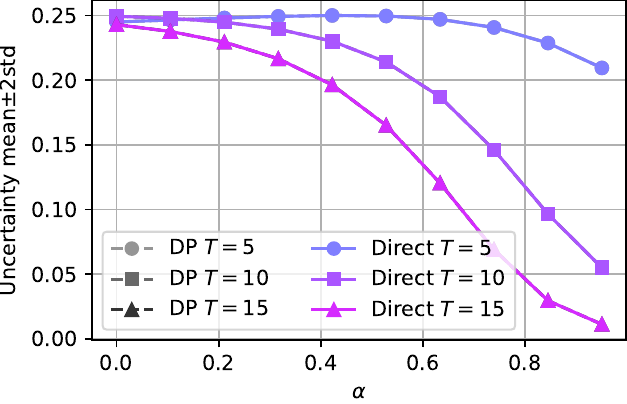}
        \caption{Uncertainty (DP behind Direct)}
    \end{subfigure}
    \begin{subfigure}[t]{.32\linewidth}
        \includegraphics[width=\linewidth]{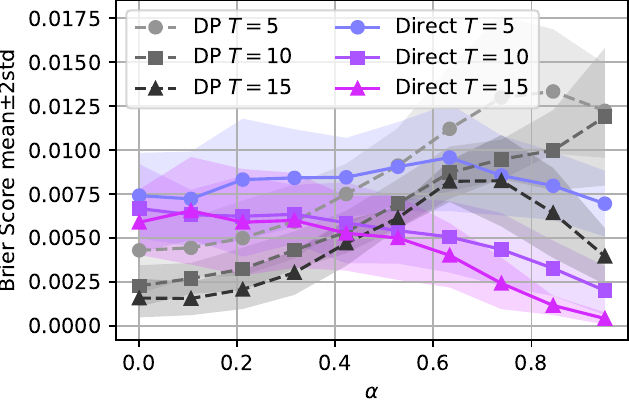}
        \caption{Brier (lower is better)}
    \end{subfigure}
    \caption{For {\color{red}dependent} DP data and $\hat N=NT$: Results comparing DP and the direct approach for various $\alpha\in[0,1)$ and $T=5,10,15$ in terms of different metrics.}
    \label{fig:results_by_T_diffN_deptrain}
\end{figure*}

\begin{figure*}[!ht]
    \centering
    \begin{subfigure}[t]{.32\linewidth}
        \includegraphics[width=\linewidth]{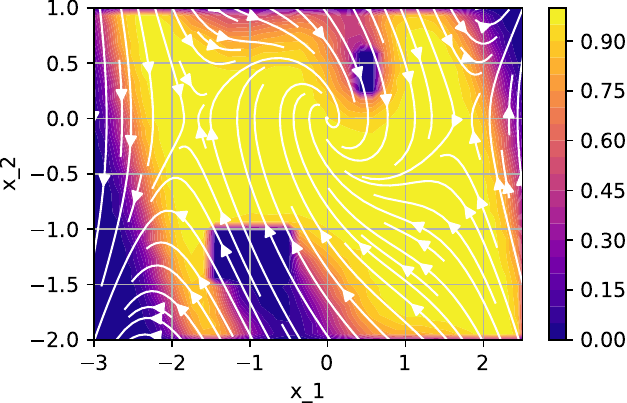}
        \caption{MC Ground Truth ($T=5$)}
    \end{subfigure}
    \begin{subfigure}[t]{.32\linewidth}
        \includegraphics[width=\linewidth]{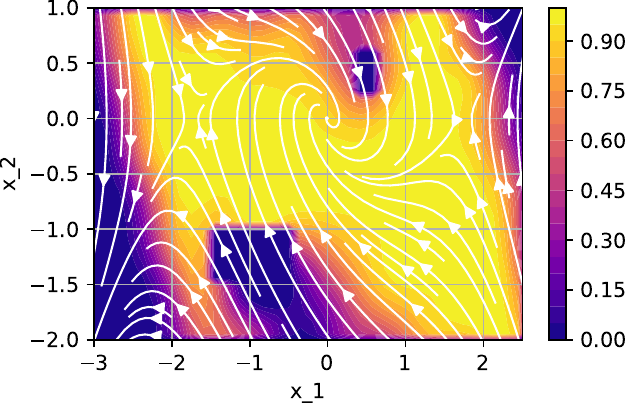}
        \caption{DP ($T=5$)}
    \end{subfigure}
    \begin{subfigure}[t]{.32\linewidth}
        \includegraphics[width=\linewidth]{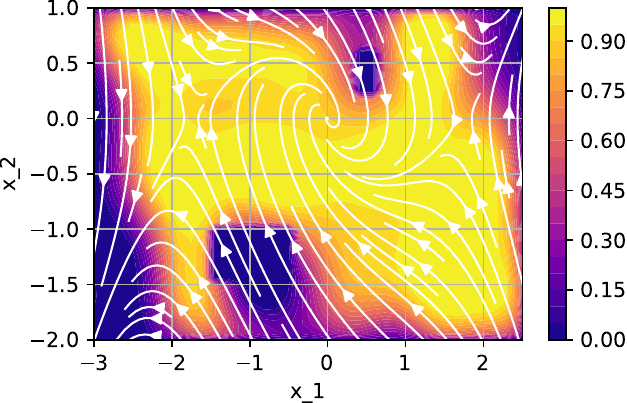}
        \caption{Direct ($T=5$)}
    \end{subfigure}
    \\[.6em]
    \begin{subfigure}[t]{.32\linewidth}
        \includegraphics[width=\linewidth]{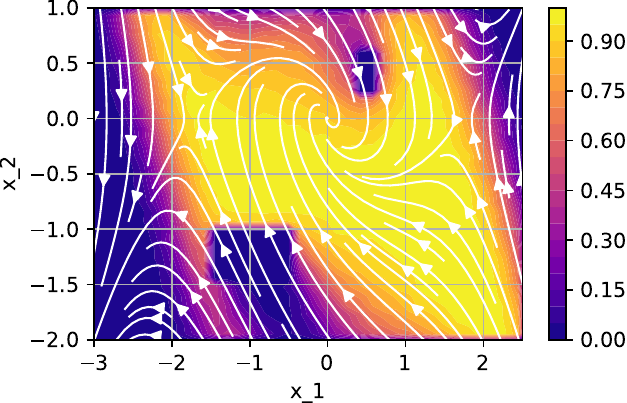}
        \caption{MC Ground Truth ($T=10$)}
    \end{subfigure}
    \begin{subfigure}[t]{.32\linewidth}
        \includegraphics[width=\linewidth]{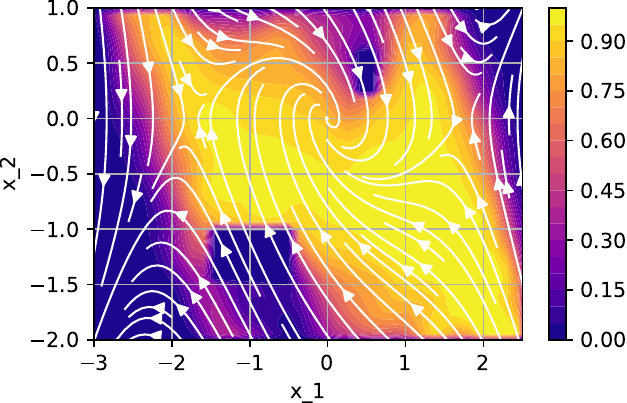}
        \caption{DP ($T=10$)}
    \end{subfigure}
    \begin{subfigure}[t]{.32\linewidth}
        \includegraphics[width=\linewidth]{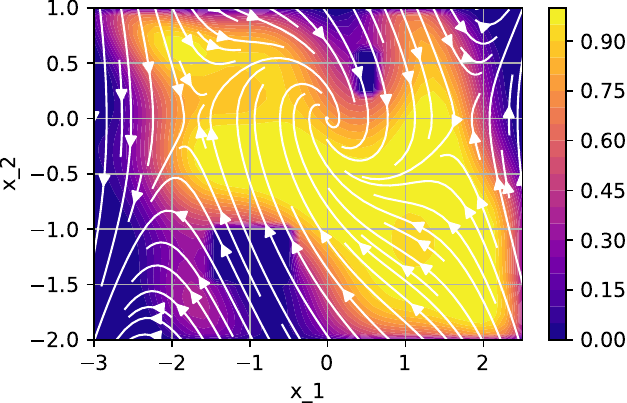}
        \caption{Direct ($T=10$)}
    \end{subfigure}
    \\[.6em]
    \begin{subfigure}[t]{.32\linewidth}
        \includegraphics[width=\linewidth]{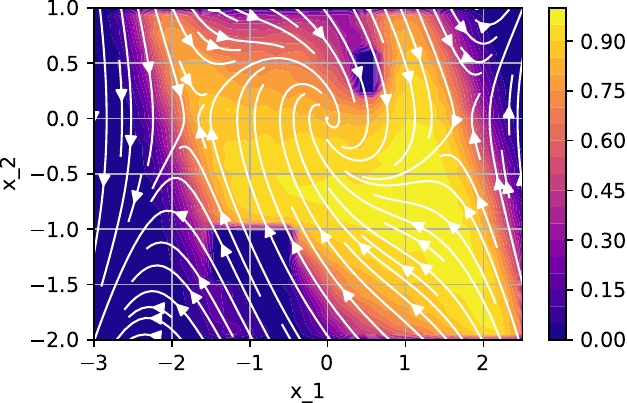}
        \caption{MC Ground Truth ($T=15$)}
    \end{subfigure}
    \begin{subfigure}[t]{.32\linewidth}
        \includegraphics[width=\linewidth]{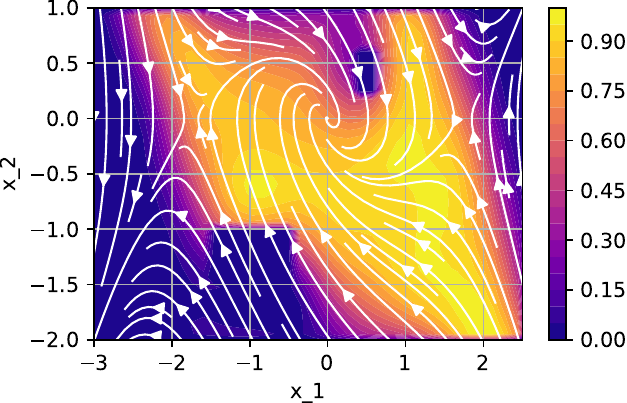}
        \caption{DP ($T=15$)}
    \end{subfigure}
    \begin{subfigure}[t]{.32\linewidth}
        \includegraphics[width=\linewidth]{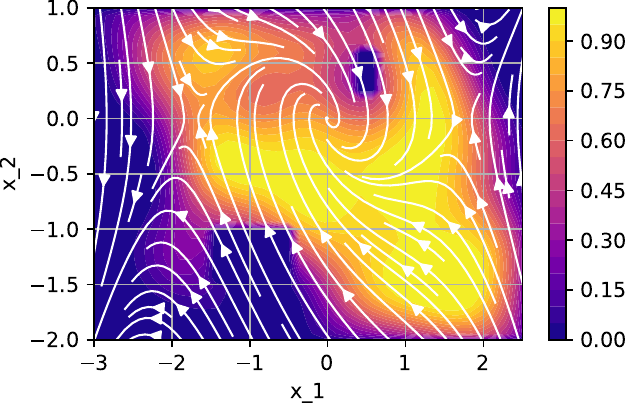}
        \caption{Direct ($T=15$)}
    \end{subfigure}
    \caption{For {\color{red}dependent} DP data and $\hat N=NT$: Safety probability estimates (in color) and unknown true dynamics (mean) vector field (white arrows; all identical) for the fully Markovian dynamics ($\alpha=0$).}
    \label{fig:safetyprob_a0_00_diffN_deptrain}
\end{figure*}

\begin{figure*}[!ht]
    \centering
    \begin{subfigure}[t]{.32\linewidth}
        \includegraphics[width=\linewidth]{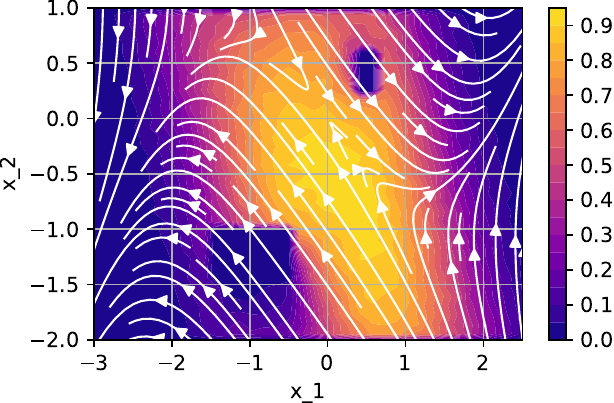}
        \caption{MC Ground Truth ($T=5$)}
    \end{subfigure}
    \begin{subfigure}[t]{.32\linewidth}
        \includegraphics[width=\linewidth]{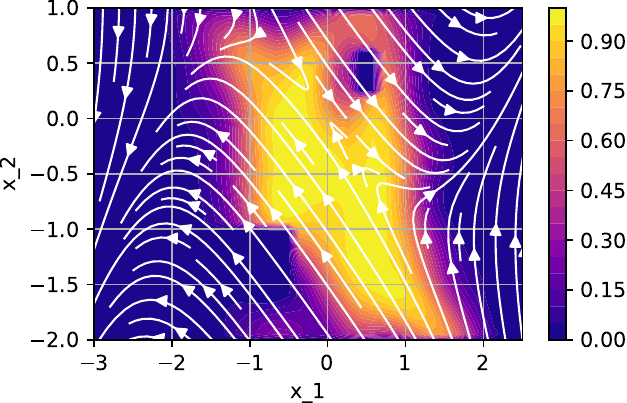}
        \caption{DP ($T=5$)}
    \end{subfigure}
    \begin{subfigure}[t]{.32\linewidth}
        \includegraphics[width=\linewidth]{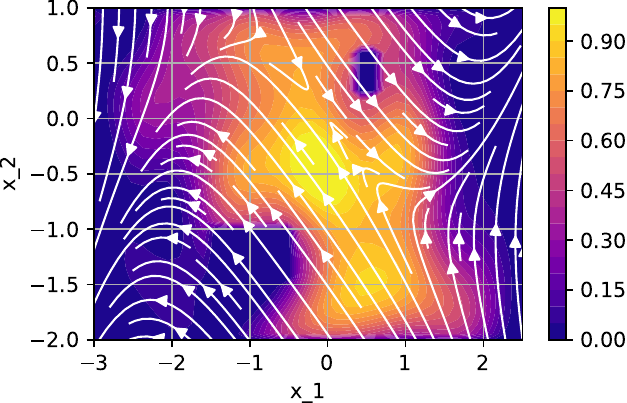}
        \caption{Direct ($T=5$)}
    \end{subfigure}
    \\[.6em]
    \begin{subfigure}[t]{.32\linewidth}
        \includegraphics[width=\linewidth]{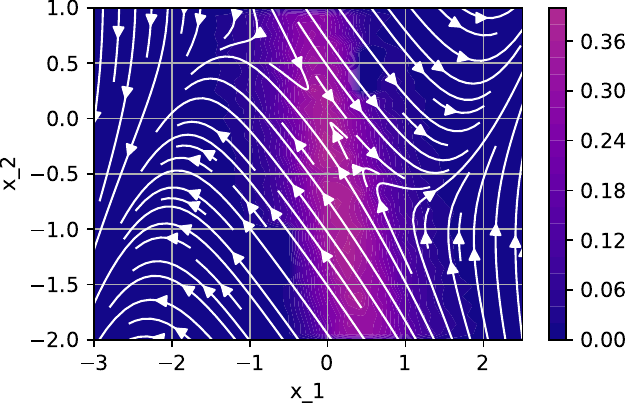}
        \caption{MC Ground Truth ($T=10$)}
    \end{subfigure}
    \begin{subfigure}[t]{.32\linewidth}
        \includegraphics[width=\linewidth]{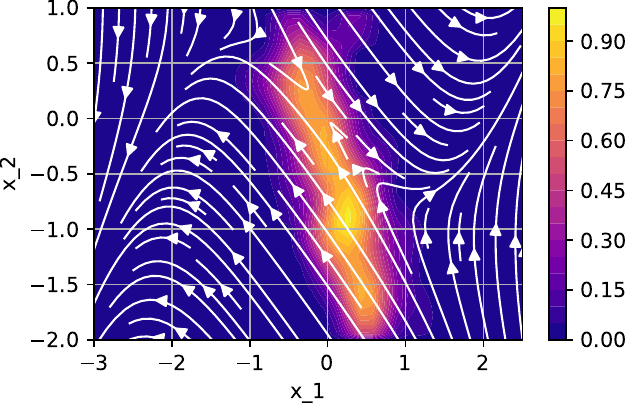}
        \caption{DP ($T=10$)}
    \end{subfigure}
    \begin{subfigure}[t]{.32\linewidth}
        \includegraphics[width=\linewidth]{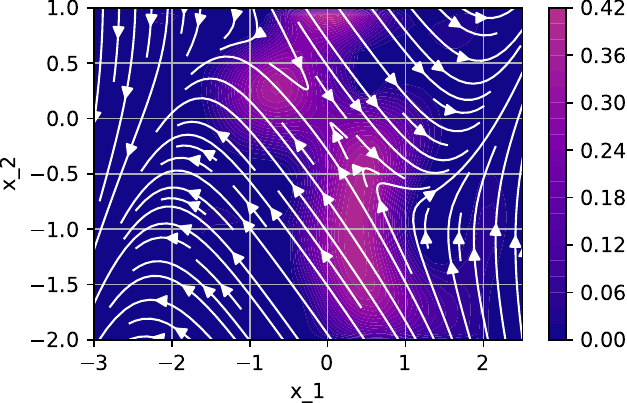}
        \caption{Direct ($T=10$)}
    \end{subfigure}
    \\[.6em]
    \begin{subfigure}[t]{.32\linewidth}
        \includegraphics[width=\linewidth]{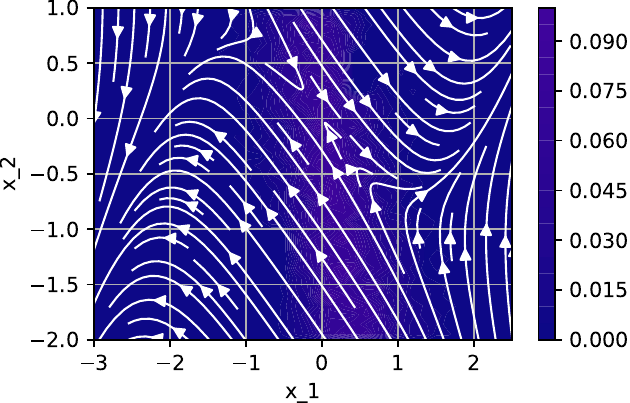}
        \caption{MC Ground Truth ($T=15$)}
    \end{subfigure}
    \begin{subfigure}[t]{.32\linewidth}
        \includegraphics[width=\linewidth]{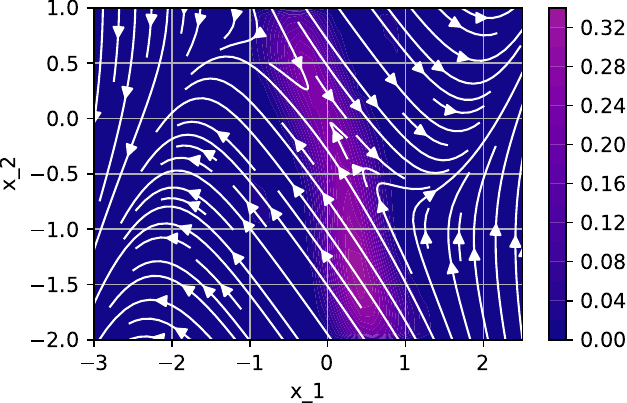}
        \caption{DP ($T=15$)}
    \end{subfigure}
    \begin{subfigure}[t]{.32\linewidth}
        \includegraphics[width=\linewidth]{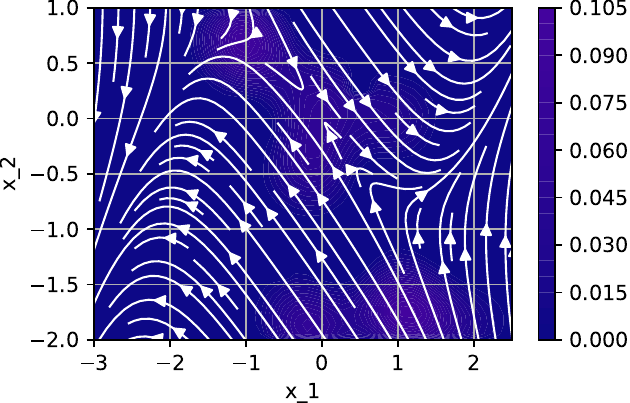}
        \caption{Direct ($T=15$)}
    \end{subfigure}
    \caption{For {\color{red}dependent} DP data and $\hat N=NT$: Safety probability estimates (in color) and unknown true dynamics (mean) vector field (white arrows; all identical) for the fully Markovian dynamics ({\color{red}$\alpha=0.95$}).}
    \label{fig:safetyprob_a0_95_diffN_deptrain}
\end{figure*}

\begin{figure*}[!ht]
    \centering
    \begin{subfigure}[t]{.32\linewidth}
        \includegraphics[width=\linewidth]{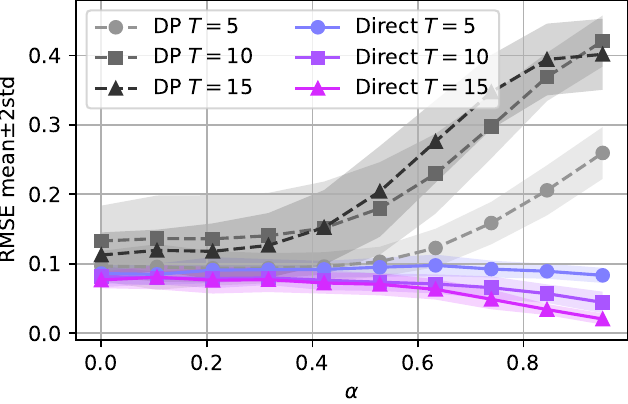}
        \caption{RMSE (lower is better)}
    \end{subfigure}
    \begin{subfigure}[t]{.32\linewidth}
        \includegraphics[width=\linewidth]{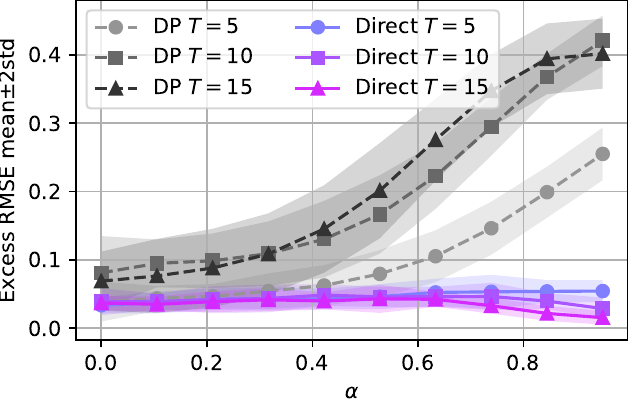}
        \caption{Excess RMSE (lower is better)}
    \end{subfigure}
    \begin{subfigure}[t]{.32\linewidth}
        \includegraphics[width=\linewidth]{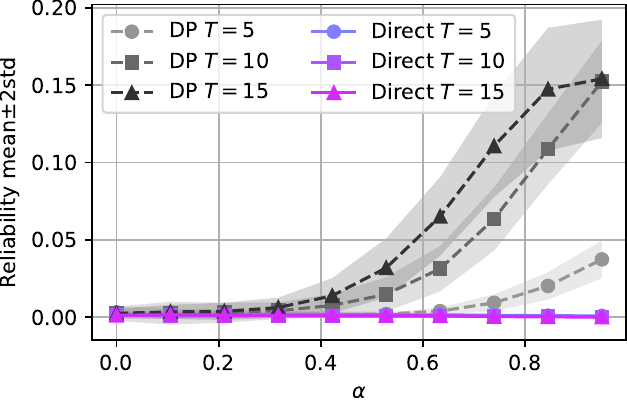}
        \caption{Reliability (lower is better)}
    \end{subfigure}
    \\[.6em]
    \begin{subfigure}[t]{.32\linewidth}
        \includegraphics[width=\linewidth]{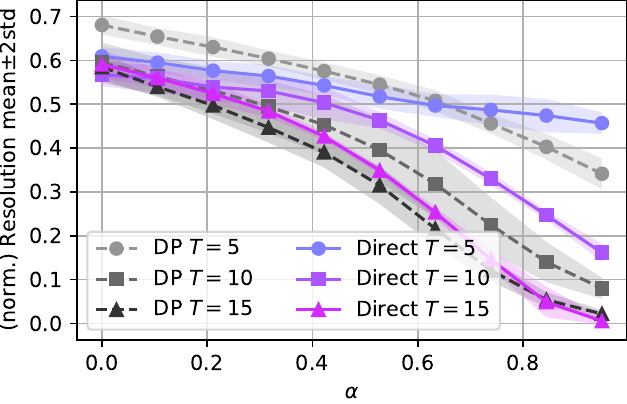}
        \caption{Resolution (normalized) (1 best/0 worst)}
    \end{subfigure}
    \begin{subfigure}[t]{.32\linewidth}
        \includegraphics[width=\linewidth]{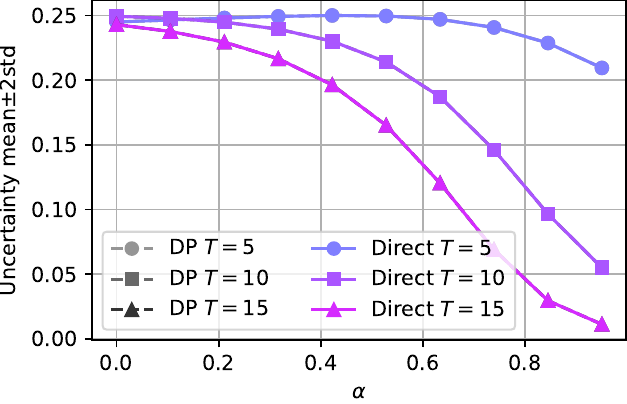}
        \caption{Uncertainty (DP behind Direct)}
    \end{subfigure}
    \begin{subfigure}[t]{.32\linewidth}
        \includegraphics[width=\linewidth]{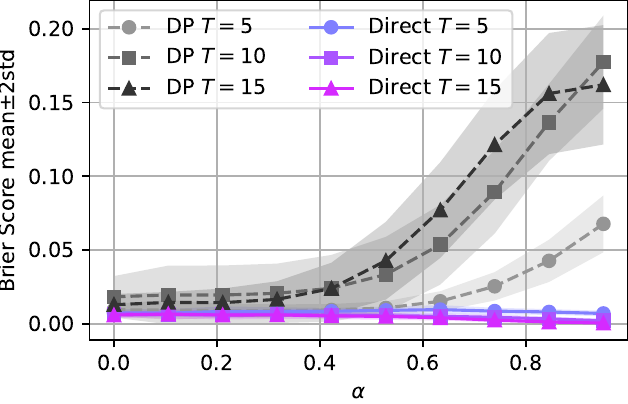}
        \caption{Brier$\downarrow$}
    \end{subfigure}
    \caption{For {\color{red}dependent} DP data and {\color{red}$\hat N=N$}: Results comparing DP and the direct approach for various $\alpha\in[0,1)$ and $T=5,10,15$ in terms of different metrics. Using same hyperparameters as for $\hat N=NT$.} 
    \label{fig:results_by_T_sameN_deptrain}
\end{figure*}

\begin{figure*}[!ht]
    \centering
    \begin{subfigure}[t]{.32\linewidth}
        \includegraphics[width=\linewidth]{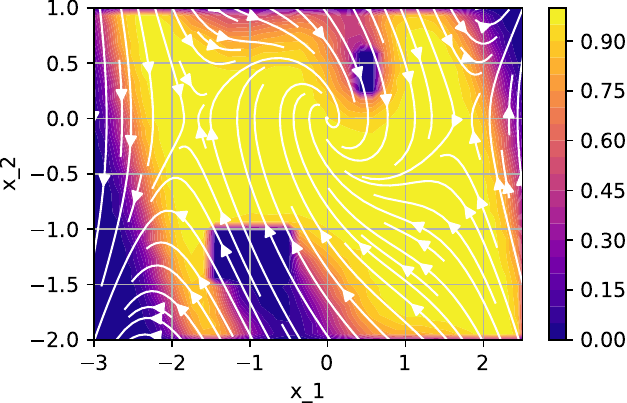}
        \caption{MC Ground Truth ($T=5$)}
    \end{subfigure}
    \begin{subfigure}[t]{.32\linewidth}
        \includegraphics[width=\linewidth]{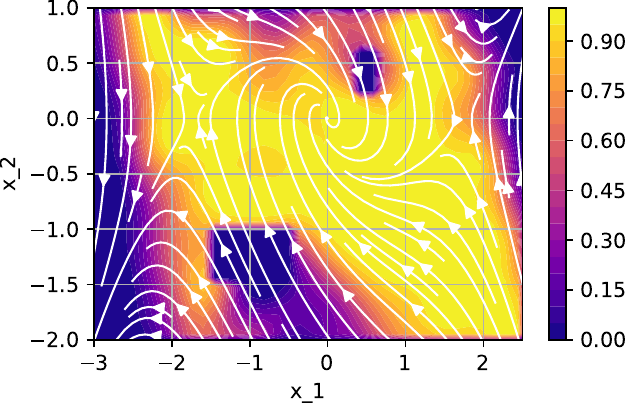}
        \caption{DP ($T=5$)}
    \end{subfigure}
    \begin{subfigure}[t]{.32\linewidth}
        \includegraphics[width=\linewidth]{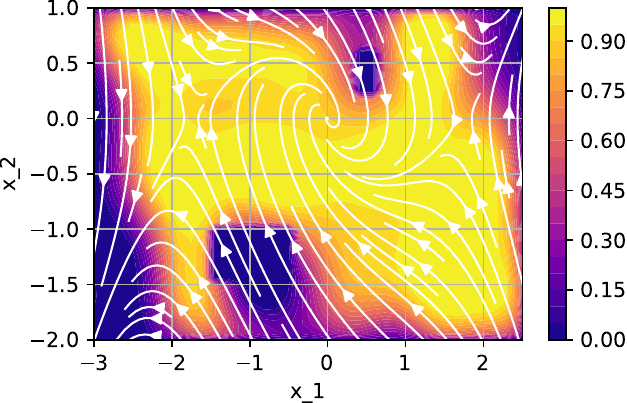}
        \caption{Direct ($T=5$)}
    \end{subfigure}
    \\[.6em]
    \begin{subfigure}[t]{.32\linewidth}
        \includegraphics[width=\linewidth]{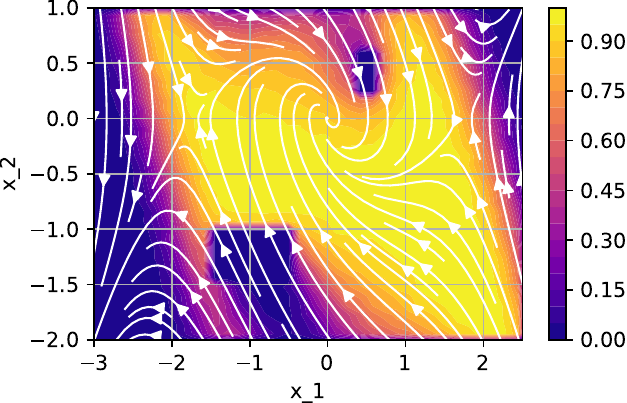}
        \caption{MC Ground Truth ($T=10$)}
    \end{subfigure}
    \begin{subfigure}[t]{.32\linewidth}
        \includegraphics[width=\linewidth]{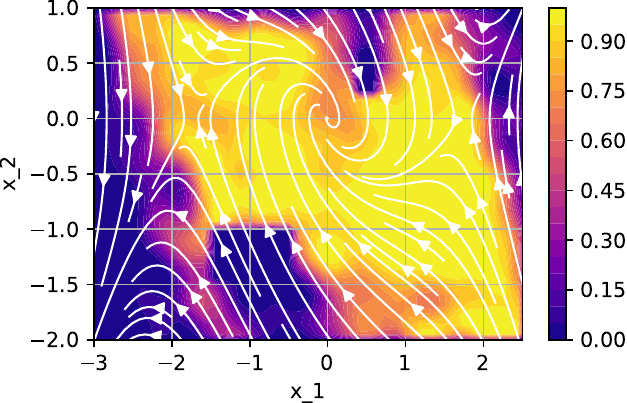}
        \caption{DP ($T=10$)}
    \end{subfigure}
    \begin{subfigure}[t]{.32\linewidth}
        \includegraphics[width=\linewidth]{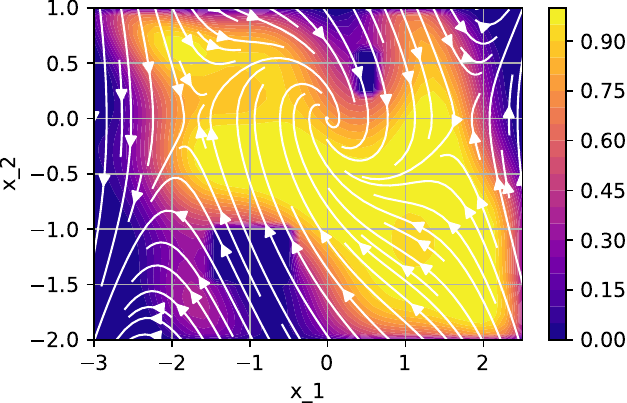}
        \caption{Direct ($T=10$)}
    \end{subfigure}
    \\[.6em]
    \begin{subfigure}[t]{.32\linewidth}
        \includegraphics[width=\linewidth]{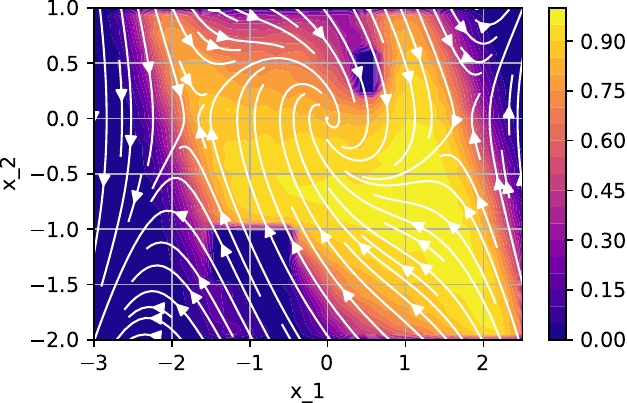}
        \caption{MC Ground Truth ($T=15$)}
    \end{subfigure}
    \begin{subfigure}[t]{.32\linewidth}
        \includegraphics[width=\linewidth]{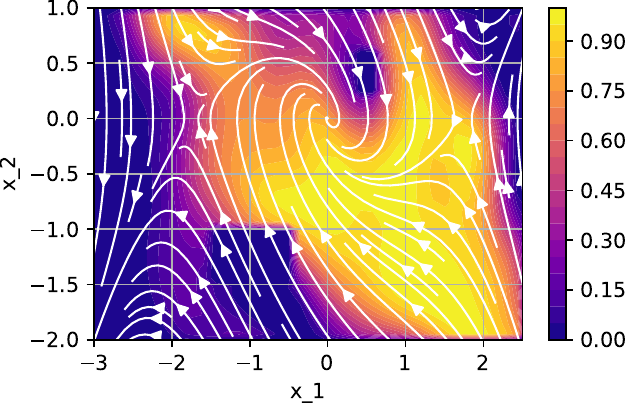}
        \caption{DP ($T=15$)}
    \end{subfigure}
    \begin{subfigure}[t]{.32\linewidth}
        \includegraphics[width=\linewidth]{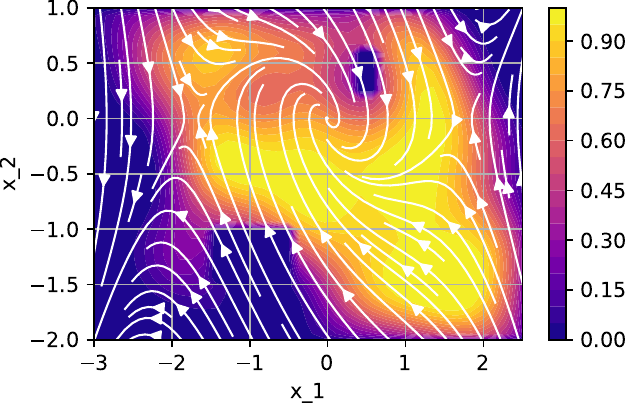}
        \caption{Direct ($T=15$)}
    \end{subfigure}
    \caption{For {\color{red}dependent} DP data and {\color{red}$\hat N=N$}: Safety probability estimates (in color) and unknown true dynamics (mean) vector field (white arrows; all identical) for the fully Markovian dynamics ($\alpha=0$). Using same hyperparameters as for $\hat N=NT$.}
    \label{fig:safetyprob_a0_00_sameN_deptrain}
\end{figure*}

\begin{figure*}[!ht]
    \centering
    \begin{subfigure}[t]{.32\linewidth}
        \includegraphics[width=\linewidth]{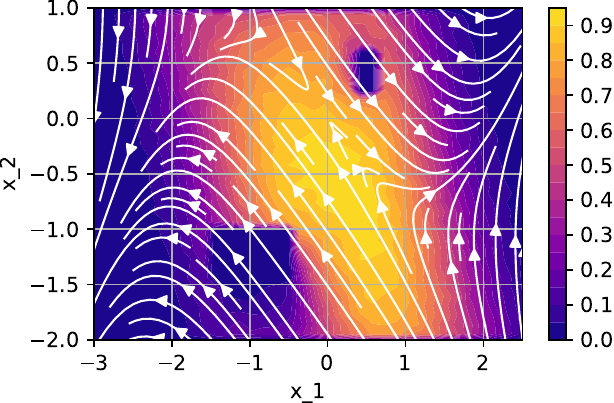}
        \caption{MC Ground Truth ($T=5$)}
    \end{subfigure}
    \begin{subfigure}[t]{.32\linewidth}
        \includegraphics[width=\linewidth]{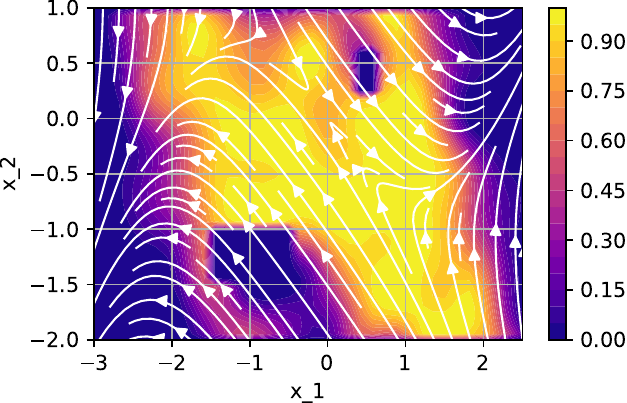}
        \caption{DP ($T=5$)}
    \end{subfigure}
    \begin{subfigure}[t]{.32\linewidth}
        \includegraphics[width=\linewidth]{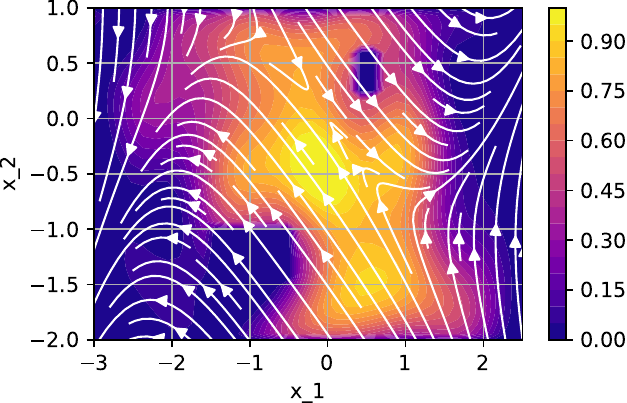}
        \caption{Direct ($T=5$)}
    \end{subfigure}
    \\[.6em]
    \begin{subfigure}[t]{.32\linewidth}
        \includegraphics[width=\linewidth]{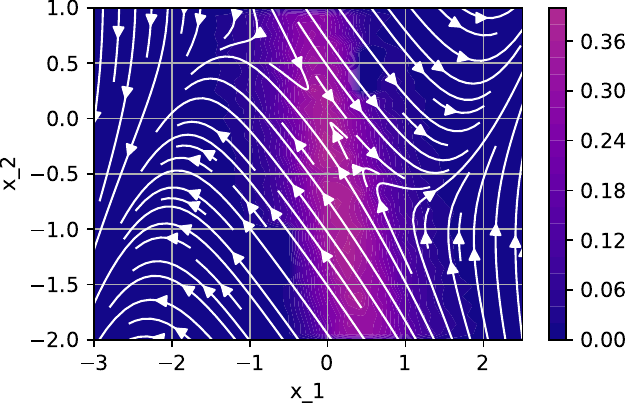}
        \caption{MC Ground Truth ($T=10$)}
    \end{subfigure}
    \begin{subfigure}[t]{.32\linewidth}
        \includegraphics[width=\linewidth]{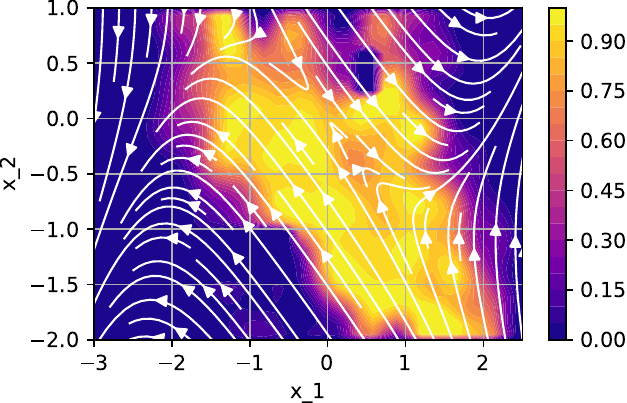}
        \caption{DP ($T=10$)}
    \end{subfigure}
    \begin{subfigure}[t]{.32\linewidth}
        \includegraphics[width=\linewidth]{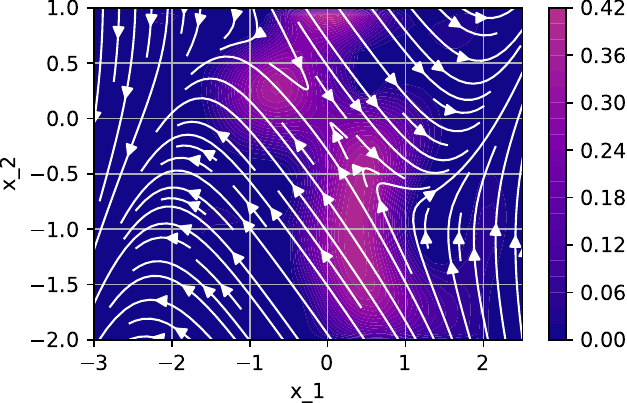}
        \caption{Direct ($T=10$)}
    \end{subfigure}
    \\[.6em]
    \begin{subfigure}[t]{.32\linewidth}
        \includegraphics[width=\linewidth]{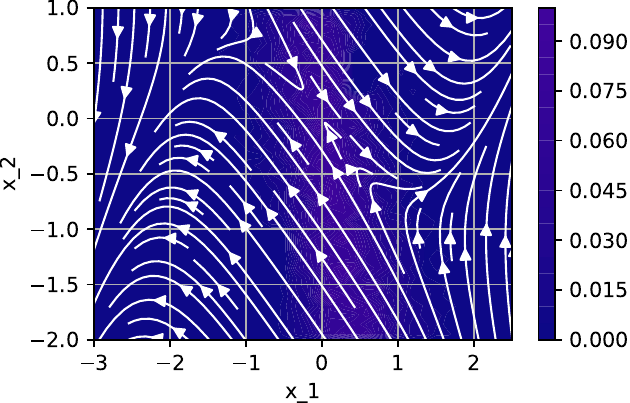}
        \caption{MC Ground Truth ($T=15$)}
    \end{subfigure}
    \begin{subfigure}[t]{.32\linewidth}
        \includegraphics[width=\linewidth]{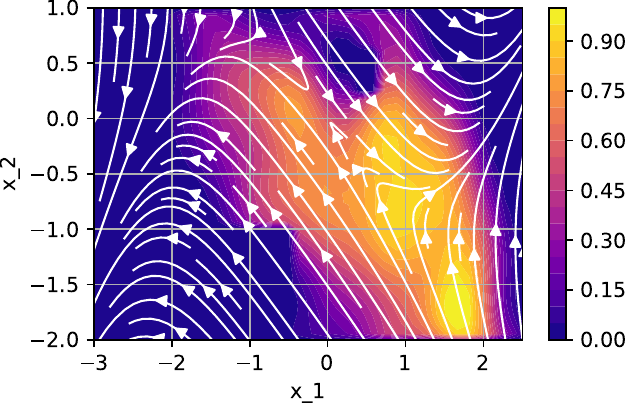}
        \caption{DP ($T=15$)}
    \end{subfigure}
    \begin{subfigure}[t]{.32\linewidth}
        \includegraphics[width=\linewidth]{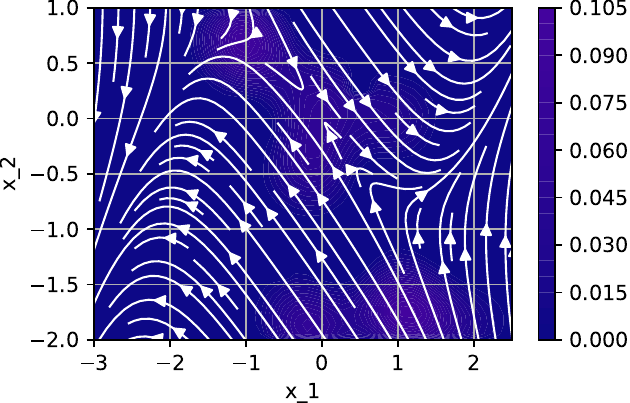}
        \caption{Direct ($T=15$)}
    \end{subfigure}
    \caption{For {\color{red}dependent} DP data and {\color{red}$\hat N=N$}: Safety probability estimates (in color) and unknown true dynamics (mean) vector field (white arrows; all identical) for the fully Markovian dynamics ({\color{red}$\alpha=0.95$}). Using same hyperparameters as for $\hat N=NT$.} 
    \label{fig:safetyprob_a0_95_sameN_deptrain}
\end{figure*}

\subsection{Real-Data Benchmark: Neural-Controlled Racing Quadrotor}\label{app:real_data_quadrotor}

We evaluate on the drone racing flight dataset\footnote{Dataset available at \url{https://doi.org/10.5281/zenodo.7955278}} of \citet{kaufmann2023champion}, recorded during a competition between the AI system \emph{Swift} and human world-champion pilots.
We consider two pilots, Swift and M.\ Schaepper, and predict whether altitude $p_z > 0.2\,\text{m}$ is maintained for $1\,\text{s}$ after each gate passage.
The initial state $x_0$ is the $15$-step (${\approx}0.23\,\text{s}$) position history $(p_x, p_y, p_z)$ immediately preceding each gate crossing; each gate passage constitutes one training sample, yielding $N{=}560$ for Swift and $N{=}536$ for Schaepper (${\approx}5\%$ unsafe).

\textbf{Data splitting.}\;
All methods are evaluated via $5$-fold cross-validation: for each fold, the model is trained on 80\% of gate passages and evaluated on the held-out 20\%.
Out-of-fold predictions are assembled in original sample order for Brier evaluation (no sample is ever evaluated on its own training fold).
Hyperparameters are selected by nested cross-validation within the training folds.
There is no data leakage.

As no MC ground truth is available, evaluation uses the Brier decomposition against observed binary safety outcomes (safe/unsafe gate passage).

With $\varrho^{100}{\approx}0.96$ the DP recursion barely contracts, yet DP is still severely miscalibrated: REL is $18\times$ and $26\times$ higher than Direct for Swift and Schaepper respectively (Table~\ref{tab:benchmark_metrics_app}).
Figure~\ref{fig:quadrotor_real_histogram} confirms the structural nature of the failure: under Direct KRR, safe approaches concentrate near predicted probability 1 while unsafe approaches spread to lower values; under Indirect DP, both distributions are nearly uniform across $[0,1]$, leaving no discriminative information in the certificate.

\begin{table}[h]
\centering
\caption{Real-data benchmark results (Appendix~\ref{app:real_data_quadrotor}).
REL$\downarrow$, RES$\uparrow$; no MC ground truth available.}
\label{tab:benchmark_metrics_app}
\setlength{\tabcolsep}{5pt}
\small
\begin{tabular}{llcccc}
\toprule
Pilot & Method & Brier$\downarrow$ & REL$\downarrow$ & RES$\uparrow$ & $\varrho^{100}$ \\
\midrule
Swift (AI)       & Direct (ours) & $\mathbf{0.041}$ & $\mathbf{0.005}$ & $0.010$ & $-$ \\
                 & Indirect (DP) & $0.131$ & $0.093$ & $0.007$ & $9.6{\times}10^{-1}$ \\
Schaepper (human)& Direct (ours) & $\mathbf{0.036}$ & $\mathbf{0.004}$ & $0.012$ & $-$ \\
                 & Indirect (DP) & $0.137$ & $0.105$ & $0.013$ & $9.6{\times}10^{-1}$ \\
\bottomrule
\end{tabular}
\end{table}

\begin{figure}[!ht]
    \centering
    \begin{subfigure}[t]{\linewidth}
        \centering
        \includegraphics[width=\linewidth]{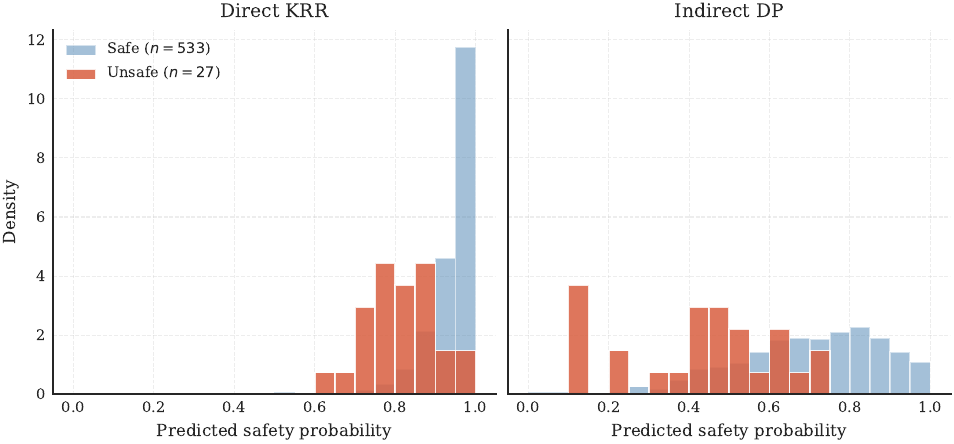}
        \caption{Swift (AI controller)}
    \end{subfigure}
    \begin{subfigure}[t]{\linewidth}
        \centering
        \includegraphics[width=\linewidth]{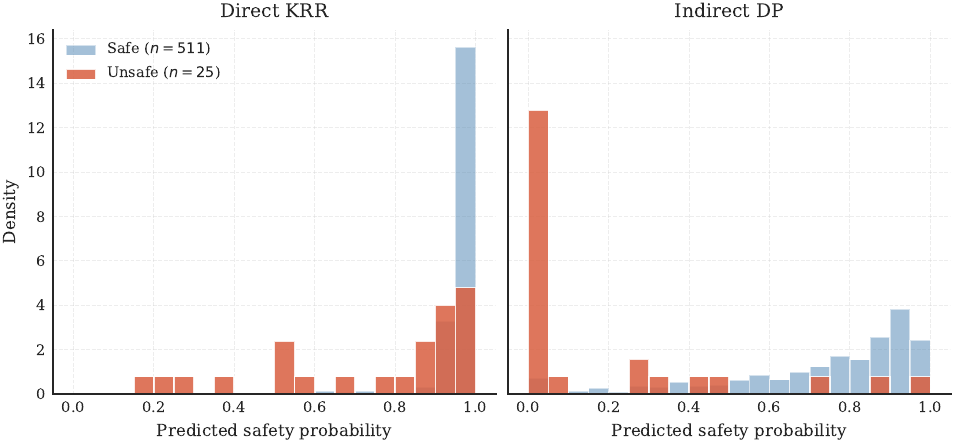}
        \caption{Schaepper (human pilot)}
    \end{subfigure}
    \caption{Distribution of predicted safety probabilities for safe (blue) and unsafe (orange) gate approaches.
    Direct KRR concentrates safe predictions near 1 and pushes unsafe ones lower.
    Indirect DP spreads both classes nearly uniformly across $[0,1]$: no discriminative information survives the violated Markov assumption.}
    \label{fig:quadrotor_real_histogram}
\end{figure}

\clearpage

\section{General Notation}\label{app:notation}
We denote the sets of positive integers and non-negative reals as $\N$ and $\R_{\geq 0}$, respectively.
Consider a Polish sample space $\Xdomain$ \citep{bogachev2007measure}.
Let $(\Xdomain,\borel{\Xdomain},\P)$ be the underlying probability space equipped with a Borel $\sigma$-algebra $\borel{\Xdomain}$ defined over $\Xdomain$, i.e., the smallest $\sigma$-algebra containing open subsets of $\Xdomain$, and a probability measure $\P$. 
For a random variable $X\colon \borel{\Xdomain} \to [0,1]$, let $p_X$ be the pushforward probability measure of $\P$ under $X$ such that $X\sim p_X(\cdotx)$.
The expected value of a function $f(X)$ on $\Xdomain$ is written as $\E_{p_X}[f(X)]$. If it is clear from the context, we abbreviate and write $\E[f(X)]$.
We denote the set of all probability measures for a given measurable space $(\Xdomain,\borel{\Xdomain})$ as $\Probs(\Xdomain)$.
The Dirac delta measure $\delta_a\colon\borel{\Xdomain}\rightarrow [0,1]$ concentrated at a point $a\in\Xdomain$ is defined as $\delta_a(A)=1$ if $a\in A$ and $\delta_a(A)=0$ otherwise, for any measurable set $A\in\borel{\Xdomain}$.
We denote the uniform distribution over $\Xdomain$ as $\mathcal{U}_{\Xdomain}$, with realizations $x\sim\mathcal{U}_{\Xdomain}$.
For two measurable spaces $(\Xdomain,\borel{\Xdomain})$ and $(\Ydomain,\borel{\Ydomain})$, a \emph{probability kernel} is a mapping $\p\colon \Xdomain \times \borel{\Ydomain}\rightarrow  [0,1]$ such that $\p(X=x,\cdotx)\colon\borel{\Ydomain}\rightarrow[0,1]$ is a probability measure for all $x\in\Xdomain$, and $\p(\cdotx, B)\colon \Xdomain\rightarrow [0,1]$ is measurable for all  $B\in\borel{\Ydomain}$ \citep[Chapter~10]{bogachev2007measure}.
A probability kernel associates to each point $x\in\Xdomain$ a measure denoted by $\p(\cdotx|X=x)$.

\section{RKHS Basics}\label{app:RKHS}
A symmetric function $k\colon\Xdomain\times\Xdomain\rightarrow\R$ is called a (positive definite) \emph{kernel} (note the distinction from \emph{probability kernels}) if for all $N\in\N_{>0}$ we have $\smash{\sum_{i=1}^{N}}\smash{\sum_{j=1}^{N}}a_i a_j\allowbreak k(x_i,x_j) \geq 0$ for $x_1,\ldots,x_N\in\Xdomain\subset{\R^n}$ and $a_1,\ldots,a_N\in\R$.
A prominent example is the Gaussian kernel \citep{Rasmussen2005GP,Kanagawa2018GPvsKernel}
$k(x,x') := \smash{\exp\left( -\frac{1}{2} (x-x')\T \Sigma^{-1} (x-x') \right),}$
where $\Sigma:=\smash{\diag(\sigma_l)^{2}}$,
with lengthscale coefficients $\sigma_l\in\R^n$.
For this work, we assume that all kernels are bounded on their domain, i.e., $\E_{}[k(x,x)]<\infty$, $x\in\Xdomain$.
Given a kernel $k$ on a non-empty set $\Xdomain$, there exists a unique corresponding
\emph{reproducing kernel Hilbert space} (RKHS) 
$\Hilbert$
of functions $f\colon\Xdomain\rightarrow\R$ equipped with an inner product $\smash{\innerH{\cdotx}{\cdotx}{\Hilbert}}$
with the celebrated \emph{reproducing property} such that for any function $f\in\Hilbert$ and $x\in\Xdomain$ we have $f(x)=\smash{\innerH{f}{k(\cdotx,x)}{\Hilbert}}$~\citep{Berlinet2004RKHSProbStat}.
Note that $k(\cdotx,x)\colon\Xdomain\rightarrow\Hilbert$ is a real-valued function,
which is also called an implicit \emph{canonical} \emph{embedding} or \emph{feature map} $\phi$
such that $k(x,x')=\smash{\innerH{\phi(x)}{\phi(x')}{\Hilbert}}$ for all $x,x'\in\Xdomain$.
For an RKHS $\Hilbert$, we use the associated feature map $\phi$ and kernel $k$ interchangeably for ease of notation and comprehensibility.
The inner product induces the norm $\smash{\norm{f}_{\Hilbert}}\!\!:=\!\!\sqrt{\smash[b]{\innerH{f}{f}{\Hilbert}}}$ of the RKHS. 
Throughout this paper, we assume that all RKHSs are \emph{separable}~\citep{Steinwart2008SVM,owhadi2015separability}.
Given $N$ i.i.d.\ samples $\smash{\{x^{(i)}\}_{i=1}^N}$ with $x_i\in\Xdomain$, the
\emph{Gram matrix} of $k$ is given by
$\smash{K_{N}}:={[k({x}^{(i)},{x}^{(j)})]_{i,j=1}^N}.$
Furthermore, we define the vector-valued function 
$\smash{k_{N}(x)} := \smash{[k(x,{x}^{(i)})]_{i=1}^N}.$

\section{Embedding-Framework Recoveries}\label{app:embedding_framework_recoveries}

\subsection{Barriers}\label{app:barriers}
Barrier certificates address safety with respect to an initial set $\Xdomain_0\subseteq\Xdomain$ \citep{prajna2006barrier}.
For stochastic systems with Markovian transitions, \cite{laurenti2025unifying} show that the barrier problem implicitly derives an upper bound $\Lambda(x)\geq 1 - V_l(x)$ on the \emph{un}safety probability\footnote{We choose to denote the value function for computing the unsafety probability as $\Lambda$ to contrast it with the standard definition of a value function $V$ to compute the \emph{safety} probability.}, computed recursively:
\begin{equation}
    \Lambda_T(x) := B(x),\qquad
    \Lambda_l(x) := \1_{\Xdomain\setminus S}(x) + \1_{S}(x) \, \E_{\law}[\Lambda_{l+1}(X_+)\mid X=x],
\label{eq:value_iteration_barriers}
\end{equation}
by finding a positive barrier function $B\colon\Xdomain\to\R_{\geq0}$,
which recovers $\Lambda_l(x)\equiv1 - V_l(x)$ for $B(x):=\1_{\Xdomain\setminus S}(x)$.
The barrier problem essentially places a bound on the maximum barrier value decrease---the maximum growth of the probability of exiting the safe set $S$ in a single time step---for all safe states $x\in S$, that is
\begin{equation}
    \beta \geq \sup_{x\in S} \big(\E_{\law}[B(X_+)\mid X=x]-B(x)\big).\label{eq:kushner}
\end{equation}
\citet[Theorem~2]{schoen2024CME} show how a bound $\beta$ can be obtained via CMEs to learn barriers from data, leading to the following proposition.
In particular, $(\beta, B)$ can be interpreted as the robust learning-based solution of the value iteration in \eqref{eq:value_iteration_barriers}.

\begin{proposition}[Barriers]\label{prop:barrier}
    Suppose Assumptions~\ref{asm:ambiguity_set} \& \ref{asm:markovianity} hold.
    If there exist constants $\gamma>\eta\geq0$ and a function $B\colon\Xdomain\to\R_{\geq0}$ satisfying $B\in\Hilbert$ with $\norm{B}_\Hilbert<\infty$ such that
    \begin{enumerate}
        \item[(a)] $B(x)\leq\eta$ for all $x\in\Xdomain_0$;
        \item[(b)] $B(x)\geq\gamma$ for all $x\in\Xdomain\setminus S$; and
        \item[(c)] $\beta \geq \sup_{x\in S} \sum_{i=1}^N w_i(x) B(x^{(i)}) - B(x) + \varepsilon\kappa\norm{B}_\Hilbert$;
    \end{enumerate}
    with $\kappa$ as in Proposition~\ref{prop:empirical_value_update},
    then $P^S_{\lawseq}(\Xdomain_0) \geq 1 - (\eta+\beta T)/\gamma$ with probability at least $1-\conf$.
\end{proposition}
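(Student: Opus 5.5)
The plan is to follow the supermartingale argument of \citet{kushner1967stochastic}-type barrier proofs, as recast through the value-function recursion \eqref{eq:value_iteration_barriers}. First, on the event of probability at least $1-\conf$ where Assumption~\ref{asm:ambiguity_set} holds, I convert the empirical condition (c) into the true one-step drift bound \eqref{eq:kushner}. Since $B\in\Hilbert$, \eqref{eq:innerProdCond} gives
\begin{equation*}
\E_{\law}[B(X_+)\mid X=x]=\innerH{B}{\me^{\law}_{\Hilbert|\Hilbert}(x)}{\Hilbert}\leq \innerH{B}{\hat\me^{\data_{\hat N}}_{\Hilbert|\Hilbert}(x)}{\Hilbert}+\norm{B}_\Hilbert\big\Vert\me^{\law}_{\Hilbert|\Hilbert}(x)-\hat\me^{\data_{\hat N}}_{\Hilbert|\Hilbert}(x)\big\Vert_\Hilbert .
\end{equation*}
The first term equals $\sum_i w_i(x)B(x_+^{(i)})$ by \eqref{eq:empirical_CME} and the reproducing property. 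For the second, I bound the pointwise $\Hilbert$-norm by $\kappa$ times the $\mathcal{G}$-norm, using the standard evaluation bound for vector-valued RKHSs with operator-valued kernel $k(\cdot,\cdot)\,\mathrm{Id}$, which gives $\Vert F(x)\Vert_\Hilbert\le\sqrt{k(x,x)}\Vert F\Vert_{\mathcal G}$. The result is $\le\varepsilon\kappa\norm{B}_\Hilbert$, so (c) implies $\E_\law[B(X_+)\mid X=x]-B(x)\le\beta$ for all $x\in S$. This is the same step as in Proposition~\ref{prop:empirical_value_update}, so I would reuse that argument.

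Second, with the drift bound in hand, I use the stopped process. Let $\tau$ be the first exit time from $S$, and consider $Y_t:=B(X_{t\wedge\tau})$. By Markovianity (Assumption~\ref{asm:markovianity}), $\E[Y_{t+1}\mid\mathcal F_t]\le Y_t+\beta$ on $\{\tau>t\}$, and $Y_{t+1}=Y_t$ otherwise. Taking expectations and iterating gives $\E[Y_T\mid X_0=x_0]\le B(x_0)+\beta T\le \eta+\beta T$ by (a). Nonnegativity of $B$ and condition (b) give $\gamma\,\1\{\tau\le T\}\le Y_T$, so Markov's inequality yields $\P(\tau\le T\mid X_0=x_0)\le(\eta+\beta T)/\gamma$. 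Hence $P^S_\lawseq(x_0)\ge 1-(\eta+\beta T)/\gamma$ for every $x_0\in\Xdomain_0$, and taking the infimum gives the claim. Equivalently, one can show by backward induction on \eqref{eq:value_iteration_barriers} that $\Lambda_l\le B+\beta(T-l)$ scaled by $1/\gamma$, but the martingale route is cleaner.

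The main obstacle is the first step. Here I must justify the pointwise evaluation bound relating $\Vert\cdot\Vert_{\mathcal G}$ to $\Vert\cdot(x)\Vert_\Hilbert$ with constant $\kappa$. I must also note the indexing subtlety in (c): the data points there should be the successors $x_+^{(i)}$. The rest is routine, and all guarantees hold simultaneously on the single high-probability event, so no union bound over $T$ is needed.
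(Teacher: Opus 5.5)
Your proof is correct and follows essentially the paper's cited route. The same CME/Cauchy--Schwarz robustification used for Proposition~\ref{prop:empirical_value_update} turns (c) into the true drift condition \eqref{eq:kushner}, and the classical Kushner-type bound then follows; your stopped-supermartingale argument is just the martingale form of the backward induction on the unsafety recursion \eqref{eq:value_iteration_barriers} that the paper appeals to, and you are right that (c) should read $B(x_+^{(i)})$.

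One small point: iterating ``$\le Y_t+\beta$ on $\{\tau>t\}$, $Y_{t+1}=Y_t$ otherwise'' up to $B(x_0)+\beta T$ tacitly uses $\beta\ge 0$. This is harmless, because on the good event $\beta<0$ would force $\E[\tau]\le B(x_0)/|\beta|<\infty$, and then Fatou gives $\gamma\le\E[B(X_\tau)]\le B(x_0)\le\eta$, contradicting $\gamma>\eta$.
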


\begin{remark}[Conservatism]
Even in the model-based setting, the bound \eqref{eq:kushner} generally prevents achieving the optimal solution of \eqref{eq:value_iteration_barriers}, attained for $B(x)=\1_{\Xdomain\setminus S}(x)$.
Moreover, for such discontinuous barriers, the corresponding RKHS norm $\norm{B}_\Hilbert$ is infinite for common kernels (e.g., Gaussian, Matérn). 
Smooth barriers yield a smaller $\varepsilon\kappa\norm{B}_\Hilbert$ term, yet also a larger per-step deviation $\sup_{x\in S}(\E_\law[B(X_+)\mid X=x]-B(x))$.
Thus, in the data-driven regime, one must trade barrier smoothness against the resulting error amplification.
As the robustified bound on $\beta$ compounds over all $T$ recursions in \eqref{eq:value_iteration_barriers}, the conservatism of barrier-based certificates grows rapidly with the horizon.
Unlike abstraction-based methods (see Subsection~\ref{sec:safety_via_finite_abstractions}), where the safety estimate converges to the true probability as $N\!\to\!\infty$ and the discretization vanishes, the barrier approach remains inherently biased, even in the model-based limit.
\end{remark}

\section{Proposition~\ref{prop:empirical_value_update}}\label{app:proof_empirical_value_update}
\begin{proof}
    Via \eqref{eq:innerProdCond} we reformulate the value update $V_l(x)$ in Proposition~\ref{prop:value_iteration} and use the ambiguity set $\amb_{\hat{N}}$ to obtain
    \begin{align*}
        V_l(x)  &= \1_{S}(x) \innerH{V_{l+1}}{\me_{\Hilbert|\Hilbert}^{\law}(x)}{\Hilbert},\\
                &\geq \inf_{\mu\in\amb_{\hat{N}}} \1_{S}(x) \innerH{V_{l+1}}{\me_{\Hilbert|\Hilbert}^{\mu}(x)}{\Hilbert}.
    \end{align*}
    Using Cauchy--Schwarz and $V_l(x)\in[0,1]$, $x\in\Xdomain$, yields
    \begin{equation*}
        V_l(x)  \geq \1_{S}(x)\Bigg[ \inf_{\mu\in\amb_{\hat{N}}}  \left( \innerH{V_{l+1}}{\hat\me_{\Hilbert|\Hilbert}^{\data_{\hat{N}}}(x)}{\Hilbert}  - \innerH{V_{l+1}}{\me_{\Hilbert|\Hilbert}^{\mu}(x) - \hat\me_{\Hilbert|\Hilbert}^{\data_{\hat{N}}}(x)}{\Hilbert} \right) \Bigg]_0^1.
    \end{equation*}
    Here, the truncation $[\cdotx]_0^1$ ensures the value remains a valid probability at each recursion.
    Via Cauchy--Schwarz and the definition of $\amb_{\hat{N}}$ in \eqref{eq:amb_set} we obtain the bound $\underline V_l(x)$, which holds for all $x\in\Xdomain_0$. 
\end{proof}

\begin{rem_}[Convergence rate at the first recursion step]
The penalty $\varepsilon\kappa\Vert\underline V_{l+1}\Vert_\Hilbert$ is finite whenever $\underline V_{l+1}$ is smooth, which holds for all steps $l < T-1$.
At the first recursion step ($l = T-1$), however, we have $\underline V_T = \1_S$, which does not lie in common RKHSs (e.g., Gaussian), so the penalty formally diverges.
Establishing a convergence rate for the ambiguity-set radius $\varepsilon$ at this step would in principle require an extension-theorem argument analogous to that used in Theorem~\ref{thm:empirical_safety_nonMarkov}, appealing to the Besov-space embedding of \cite{kanagawa2014recovering}.
We omit these details for brevity since the issue arises only at the single step $l = T-1$; for all subsequent steps the standard RKHS argument applies without modification.

The indirect methods we recover as special cases are not affected by this issue in the same way: abstraction-based methods employ a finite, inherently discontinuous kernel so RKHS membership of $\1_S$ is not required, and barrier-based methods work with smooth Lyapunov-type functions throughout.
In our experiments we set $\varepsilon = 0$ for the DP baseline, bypassing the issue entirely; we flag it here for completeness.
\end{rem_}

\section{Additional Details on Finite Embeddings}\label{app:additional_material_abstractions}
\begin{figure}[ht]
    \centering
    \begin{subfigure}[t]{0.48\linewidth}
        \centering
    \def\svgwidth{1.0\linewidth}
\begingroup%
  \makeatletter%
  \providecommand\color[2][]{%
    \errmessage{(Inkscape) Color is used for the text in Inkscape, but the package 'color.sty' is not loaded}%
    \renewcommand\color[2][]{}%
  }%
  \providecommand\transparent[1]{%
    \errmessage{(Inkscape) Transparency is used (non-zero) for the text in Inkscape, but the package 'transparent.sty' is not loaded}%
    \renewcommand\transparent[1]{}%
  }%
  \providecommand\rotatebox[2]{#2}%
  \newcommand*\fsize{\dimexpr\f@size pt\relax}%
  \newcommand*\lineheight[1]{\fontsize{\fsize}{#1\fsize}\selectfont}%
  \ifx\svgwidth\undefined%
    \setlength{\unitlength}{297.87482314bp}%
    \ifx\svgscale\undefined%
      \relax%
    \else%
      \setlength{\unitlength}{\unitlength * \real{\svgscale}}%
    \fi%
  \else%
    \setlength{\unitlength}{\svgwidth}%
  \fi%
  \global\let\svgwidth\undefined%
  \global\let\svgscale\undefined%
  \makeatother%
  \begin{picture}(1,0.4246302)%
    \lineheight{1}%
    \setlength\tabcolsep{0pt}%
    \put(0,0){\includegraphics[width=\unitlength,page=1]{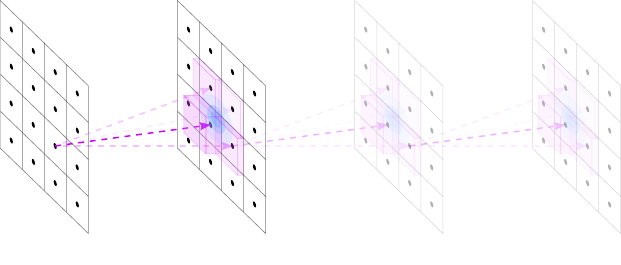}}%
    \put(0.35698053,0.00345874){\color[rgb]{0,0,0}\makebox(0,0)[t]{\smash{\begin{tabular}[t]{c}$t=1$\end{tabular}}}}%
    \put(0.35657098,0.10487653){\color[rgb]{0.78431373,0,1}\makebox(0,0)[t]{\smash{\begin{tabular}[t]{c}$\phi_f(\x_1)\in\Hilbert_{f}$\end{tabular}}}}%
    \put(0.07149672,0.00345874){\color[rgb]{0,0,0}\makebox(0,0)[t]{\smash{\begin{tabular}[t]{c}$t=0$\end{tabular}}}}%
    \put(0.64247052,0.00345874){\color[rgb]{0,0,0}\makebox(0,0)[t]{\smash{\begin{tabular}[t]{c}$t=2$\end{tabular}}}}%
    \put(0.92888024,0.00345874){\color[rgb]{0,0,0}\makebox(0,0)[t]{\smash{\begin{tabular}[t]{c}$t=3$\end{tabular}}}}%
  \end{picture}%
\endgroup%

        \caption{Finite-state Markov chain: $\Hilbert_f$ encodes continuous states and the underlying continuous transition probability distribution (in blue) by mapping them to a finite set of representative points $\hat x_i$ (in black), yielding a discrete transition probability distribution (in purple).}
        \label{fig:finite_abstractions}
    \end{subfigure}
    \hfill
    \begin{subfigure}[t]{0.48\linewidth}
        \centering
    \def\svgwidth{1.0\linewidth}
\begingroup%
  \makeatletter%
  \providecommand\color[2][]{%
    \errmessage{(Inkscape) Color is used for the text in Inkscape, but the package 'color.sty' is not loaded}%
    \renewcommand\color[2][]{}%
  }%
  \providecommand\transparent[1]{%
    \errmessage{(Inkscape) Transparency is used (non-zero) for the text in Inkscape, but the package 'transparent.sty' is not loaded}%
    \renewcommand\transparent[1]{}%
  }%
  \providecommand\rotatebox[2]{#2}%
  \newcommand*\fsize{\dimexpr\f@size pt\relax}%
  \newcommand*\lineheight[1]{\fontsize{\fsize}{#1\fsize}\selectfont}%
  \ifx\svgwidth\undefined%
    \setlength{\unitlength}{297.87482314bp}%
    \ifx\svgscale\undefined%
      \relax%
    \else%
      \setlength{\unitlength}{\unitlength * \real{\svgscale}}%
    \fi%
  \else%
    \setlength{\unitlength}{\svgwidth}%
  \fi%
  \global\let\svgwidth\undefined%
  \global\let\svgscale\undefined%
  \makeatother%
  \begin{picture}(1,0.4246302)%
    \lineheight{1}%
    \setlength\tabcolsep{0pt}%
    \put(0,0){\includegraphics[width=\unitlength,page=1]{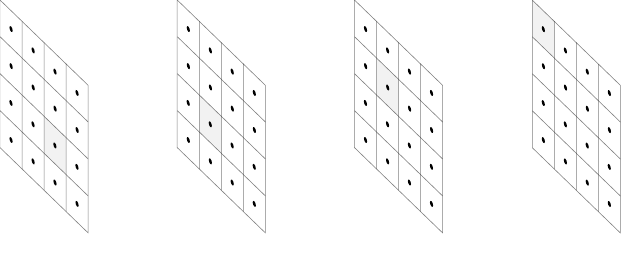}}%
    \put(0.35659078,0.00432366){\color[rgb]{0,0,0}\makebox(0,0)[t]{\smash{\begin{tabular}[t]{c}$t=1$\end{tabular}}}}%
    \put(0.07110704,0.00432366){\color[rgb]{0,0,0}\makebox(0,0)[t]{\smash{\begin{tabular}[t]{c}$t=0$\end{tabular}}}}%
    \put(0.49493058,0.21682025){\color[rgb]{0.78431373,0,1}\transparent{0.92156899}\makebox(0,0)[t]{\smash{\begin{tabular}[t]{c}$\x$\end{tabular}}}}%
    \put(0.47310915,0.28585787){\color[rgb]{1,0.69803922,0}\makebox(0,0)[t]{\smash{\begin{tabular}[t]{c}$\y$\end{tabular}}}}%
    \put(0.64208077,0.00432366){\color[rgb]{0,0,0}\makebox(0,0)[t]{\smash{\begin{tabular}[t]{c}$t=2$\end{tabular}}}}%
    \put(0.92849049,0.00432366){\color[rgb]{0,0,0}\makebox(0,0)[t]{\smash{\begin{tabular}[t]{c}$t=3$\end{tabular}}}}%
    \put(0,0){\includegraphics[width=\unitlength,page=2]{finite_markov_chain_trajectories.pdf}}%
  \end{picture}%
\endgroup%

        \caption{Two equivalent trajectories $\x,\y\in\Xdomain^4$, i.e., $\phi_f(\x)=\phi_f(\y)$.}
        \label{fig:finite_state_markov_chain_trajectories}
    \end{subfigure}
    \caption{Finite-state abstractions in the embedding framework.}
    \label{fig:finite_abstractions_combined}
\end{figure}

\begin{rem_}[Equivalence]
    The discrete embedding inherently induces \emph{equivalence classes of states} at each time step $t$ (see Fig.~\ref{fig:finite_state_markov_chain_trajectories}), so that finite abstractions can be interpreted as identifying trajectories that are indistinguishable under this partition.
    Since the kernel $k_f$ is not characteristic, its associated mean embedding does not encode probability distributions uniquely; consequently, the resulting CME characterizes not a single process, but an \emph{equivalence class of processes} consistent with the abstraction (cf. Fig.~\ref{fig:finite_state_markov_chain_trajectories}).
\end{rem_}

\subsection{Two Errors in Finite-State Abstractions}
In the Markovian setting of Asm.~\ref{asm:markovianity}, the stochastic law $\law$ is encoded as a (finite-dimensional) CME $\me_{\Hilbert_f|\Hilbert_f}^{\law}\colon\Xdomain\to\Hilbert_f$, which can be estimated empirically from data $\smash{\data_N}$ as $\smash{\hat\me_{\Hilbert_f|\Hilbert_f}^{\data_N}}$ with weights $w_j\colon\Xdomain_f\to\R$.
The corresponding robust value update in Prop.~\ref{prop:value_iteration} becomes\footnote{Note that using $\smash{\Hilbert_f}$ may require replacing the safe set $S$ with a subset $\hat S\subset\Xdomain_f$ such that $\hat S\subset S$.}
\begin{equation}
     V_l(\hat{x}_i) \geq \inf_{\bar\mu\in\amb^f_{\hat N},\,x\in A_i} \1_{S}(x)
    \innerH{V_{l+1}}{\me_{\Hilbert_f|\Hilbert_f}^{\bar\mu}(\hat{x}_i)}{\Hilbert_f}
    ,\label{eq:robust_finite_value_update}
\end{equation}
for each partition $A_i\subset\Xdomain$ represented by a symbol $\hat{x}_i$, where $V_{l+1}$ is piecewise constant over the partition elements.
The ambiguity set $\amb^f_{\hat N}$ accounts for both \emph{statistical error} (finite data) and \emph{abstraction error} (discretization). A possible construction analogous to \eqref{eq:amb_set} is
\begin{equation}
     \amb^f_{\hat N} := \left\lbrace \bar\mu\colon\Xdomain\times\borel{\Xdomain}\to[0,1] \,\Big\vert\, \Big\vert\Big\vert \hat\me_{\Hilbert_f|\Hilbert_f}^{\data_{\hat N}} - \Pi_{\mathcal{G}_f}\left(\me_{\Hilbert|\Hilbert}^{\bar\mu}\right) \Big\vert\Big\vert_{\mathcal{G}_f}\leq\varepsilon
    \right\rbrace,\label{eq:amb_set_finite}
\end{equation}
where $\Pi_{\mathcal{G}_f}$ is the orthogonal projection from $\mathcal{G}$ onto the finite vector-valued RKHS $\mathcal{G}_f$ of functions $\Xdomain_f\to\Hilbert_f$.
In this context, the projection $\Pi_{\Hilbert_f}$ captures the abstraction-induced approximation of the true continuous embedding within the partitioned space.
Both error sources enter at every step of the DP recursion in \eqref{eq:robust_finite_value_update} and therefore compound over the horizon $T$, in contrast to the one-time errors $\varepsilon_1, \varepsilon_2, \varepsilon_3$ of the direct estimator (Remark~\ref{rem:error_structure}).

\section{Simulation Relations and Interval Markov Processes}\label{app:sim_rel_imp}

\paragraph{Probabilistic Simulation Relations.}
\emph{Probabilistic simulation relations} provide a structured way to bound the mismatch between a (learned) finite abstraction and the true stochastic dynamics, allowing us to transfer (safety) guarantees between them~\citep{haesaert2017verification}. In this context, we write ${\SSR{\smash{\hat\me_{\Hilbert_f|\Hilbert_f}^{\data_{\hat N}}}}{\me_{\Hilbert_f|\Hilbert_f}^{\law}}{\epsilon}{\delta}}$ to mean that the system captured by the empirical CME $\smash{\hat\me_{\Hilbert_f|\Hilbert_f}^{\data_{\hat N}}}$ is in an \emph{$(\epsilon,\delta)$-sub-simulation relation} (SSR) \citep[Definition~6]{schoen2025bayesian} with the unknown true data-generating system governed by $\law$, where $\epsilon\geq0$ and $\delta\colon\Xdomain_f\to[0,1]$ quantify the output mismatch and unmatched step-wise transition probability, respectively.\footnote{Note that the error bound $\delta$ used exclusively in this section is distinct from the scalar confidence level $\conf\in[0,1]$ used in the main body of the paper.}

The following result shows that in the case of data-driven safety verification via the empirical CME, the SSR induces an ambiguity set in the total variation distance:
\begin{equation}
    \amb^{\text{SSR}}_{\hat{N}}(\hat x_i) := 
    \textstyle\Big\lbrace \bar\mu\colon\Xdomain\times\borel{\Xdomain}\to[0,1] \,\Big\vert\,\label{eq:amb_set_SSR}
    \Big\Vert\sum_{l=1}^{\hat{N}} w_l(\hat x_i) \1_{\cdotx}(x_+^{(l)})-\min_{x\in A_i} \bar\mu(\cdotx|x)\Big\Vert_{\text{TV}} \!\leq \delta(\hat x_i)\!
    \Big\rbrace,
\end{equation}%
where $\delta$ bounds the total variation $\norm{\hat p-\bar\mu}_{\text{TV}}:=1-\smash{\sum_{j=1}^n\min\lbrace \hat p(A_j),\,\allowbreak \bar\mu(A_j) \rbrace}$,
whilst $\epsilon$ bounds the discretization error of $\1_S(x)$ w.r.t. $x\in A_i$ in the infimum of \eqref{eq:robust_finite_value_update}.
The SSR yields a robust value update resulting directly from Proposition~1 by \citet{schoen2025bayesian}:
\begin{theorem}[SSR-Based Dynamic Programming]\label{thm:ssr}
    Suppose Asms.~\ref{asm:ambiguity_set} \& \ref{asm:markovianity} hold with the ambiguity set $\amb^{\text{SSR}}_{\hat{N}}$ in \eqref{eq:amb_set_SSR}.
    Then, we have ${\SSR{\smash{\hat\me_{\Hilbert_f|\Hilbert_f}^{\data_{\hat N}}}}{\me_{\Hilbert_f|\Hilbert_f}^{\law}}{\epsilon}{\delta}}$, $\smash{\hat\me_{\Hilbert_f|\Hilbert_f}^{\data_{\hat N}}}$ being the empirical CME with weights $w_j\colon\Xdomain_f\to\R$. 
    Furthermore, let $\underline V_l\colon\Xdomain_f\to[0,1]$, with $l=T,\ldots,0$, such that $\underline V_T(\hat{x}_i) := \1_{S}(\hat{x}_i)$ and
    \begin{equation*}
        \underline V_l(\hat{x}_i) := \!\left[\1(A_i\subset S) \sum_{j=1}^{\hat N} w_j(\hat{x}_i) \underline V_{l+1}(\Pi_{\Xdomain_f}\!(x_+^{(j)}))  - \delta\right]_0^1\!\!.
    \end{equation*}
    Then, we have $P^S_\lawseq(x_0)\geq\underline V_0(\Pi_{\Xdomain_f}\!(x_0))$ for all $x_0\in\Xdomain_0$ with probability at least $1-\conf$.
\end{theorem}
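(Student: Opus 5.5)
The plan is to prove the value bound directly by backward induction on $l$, comparing the abstract recursion with the true DP of Proposition~\ref{prop:value_iteration}. The SSR statement ${\SSR{\smash{\hat\me_{\Hilbert_f|\Hilbert_f}^{\data_{\hat N}}}}{\me_{\Hilbert_f|\Hilbert_f}^{\law}}{\epsilon}{\delta}}$ should follow by instantiating the definition of \citet[Definition~6]{schoen2025bayesian} with the total-variation bound in \eqref{eq:amb_set_SSR}, and then invoking \citet[Proposition~1]{schoen2025bayesian}. A self-contained route avoids that citation, and I outline it below.

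\textbf{Probability event and SSR.} First fix the event of probability at least $1-\conf$ on which, by Assumption~\ref{asm:ambiguity_set}, the true law satisfies $\law\in\amb^{\text{SSR}}_{\hat N}(\hat x_i)$ for every cell. Everything else is deterministic on this event. On it, the total-variation bound $\delta(\hat x_i)$ gives a (maximal) coupling between the empirical abstract transition from $\hat x_i$ and the cell-lumped law $\min_{x\in A_i}\law(A_j\mid x)$ whose mismatch probability is at most $\delta(\hat x_i)$. The output mismatch $\epsilon$ comes only from replacing $\1_S$ by the cell-wise under-approximation $\1(A_i\subset S)$, following the footnote on $\hat S\subset S$. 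This is exactly the $(\epsilon,\delta)$-sub-simulation relation.

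\textbf{Induction claim.} The core is the claim $V_l(x)\geq\underline V_l(\Pi_{\Xdomain_f}(x))$ for all $x\in\Xdomain$ and $l=T,\ldots,0$. Together with $V_0=P^S_\lawseq$ from Proposition~\ref{prop:value_iteration}, this yields the theorem.
\begin{itemize}
    \item \emph{Base case.} $V_T(x)=\1_S(x)\geq\1(A_i\subset S)$ for $x\in A_i$, reading $\1_S(\hat x_i)$ as this cell-wise under-approximation.
    \item \emph{Cells not contained in $S$.} If $A_i\not\subset S$, the clipped bracket is at most $[-\delta]_0^1=0\leq V_l(x)$.
    \item \emph{Cells contained in $S$.} If $A_i\subset S$ and $x\in A_i$, write $v_j:=\underline V_{l+1}(\hat x_j)\in[0,1]$ and $q_j:=\min_{x'\in A_i}\law(A_j\mid x')$. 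Using the induction hypothesis and monotonicity of expectation,
\begin{equation*}
    V_l(x)=\E_\law[V_{l+1}(X_+)\mid X=x]\geq\sum_{j=1}^n\law(A_j\mid x)\,v_j\geq\sum_{j=1}^n q_j v_j.
\end{equation*}
\end{itemize}

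\textbf{Comparison step.} Next, set $p_j:=\sum_{l}w_l(\hat x_i)\1_{A_j}(x_+^{(l)})$, so that $\sum_{l}w_l(\hat x_i)\underline V_{l+1}(\Pi_{\Xdomain_f}(x_+^{(l)}))=\sum_j p_j v_j$. Since $0\le v_j\le1$,
\begin{equation*}
    \sum_j p_j v_j-\sum_j q_j v_j\leq\sum_j\big(p_j-\min\{p_j,q_j\}\big)v_j\leq\sum_j p_j-\sum_j\min\{p_j,q_j\}.
\end{equation*}
When $\sum_j p_j\le 1$, the right-hand side is at most $1-\sum_j\min\{p_j,q_j\}=\norm{\hat p-\bar\mu}_{\text{TV}}\leq\delta(\hat x_i)$. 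Hence $V_l(x)\geq\sum_j p_jv_j-\delta$, and combining with $V_l\in[0,1]$ justifies the clipping $[\cdot]_0^1$.

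\textbf{Main obstacle.} The step I expect to be delicate is this last comparison. The empirical CME weights $w_l$ may be negative and need not sum to one, so $\hat p$ is only a signed measure, and the inequality above can fail if $\sum_j p_j>1$ or if some $p_j<0$. I would first try to handle this by restricting the argument to (or projecting onto) weights forming a sub-probability vector, i.e.\ nonnegative with total mass at most one, as implicitly assumed in \eqref{eq:amb_set_SSR}. Failing that, I would absorb the excess mass $\big(\sum_j p_j-1\big)_+$ and the negative parts into $\delta(\hat x_i)$, which the ambiguity set is free to do since $\delta$ is a cell-wise quantity.
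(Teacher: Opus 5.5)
Your argument is correct, but for the value bound it takes a different route from the paper. The paper first builds an explicit sub-probability coupling $\mathbf{v}$ with mass $\sum_j\min\{\hat p(\hat x_j\mid\hat x_i),\law(A_j\mid x)\}\ge\sum_j\min\{p_j,q_j\}\ge 1-\delta(\hat x_i)$, which certifies the SSR. It then obtains the recursion and the bound by invoking the $(\epsilon,\delta)$-robust Bellman operator of the cited SSR work and iterating it against Proposition~\ref{prop:value_iteration}. You bypass both the coupling and the external result. Your backward induction against the exact DP uses the same key quantity, but only through the elementary estimate $\sum_j(p_j-q_j)v_j\le\sum_j(p_j-q_j)_+\le 1-\sum_j\min\{p_j,q_j\}$. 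The paper's route yields the SSR as a reusable relation, so specifications beyond safety transfer through it, at the cost of leaving the details to the reference. Yours is self-contained and makes explicit two points the paper glosses over. First, the terminal value must be read as $\1(A_i\subset S)$ rather than $\1_S(\hat x_i)$; otherwise the base case fails on cells with $\hat x_i\in S$ but $A_i\not\subset S$, which the paper handles only implicitly via $\epsilon$. Second, the only normalization needed is $\sum_j p_j\le 1$. Your worry about negative $p_j$ is unfounded, since $p_j<0$ gives $(p_j-q_j)_+=0$. Excess mass, however, genuinely breaks the statement, which is why the paper assumes $\hat p(\cdot\mid\hat x_i)$ is clipped and renormalized to a probability measure---your first fix. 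In your sketch of the SSR part, the coupling must be with $\law(\cdot\mid x)$ for each $x\in A_i$, not with the lumped $q$. The passage between them is the monotonicity step $\min\{p_j,\law(A_j\mid x)\}\ge\min\{p_j,q_j\}$, exactly as in the paper.
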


Intuitively, SSR-based verification can be seen as a finite-state realization of the robust CME-based value iteration in Proposition~\ref{prop:empirical_value_update}, with the SSR's sub-probability coupling implicitly defining the ambiguity set $\smash{\amb^{\mathrm{SSR}}_{\hat N}}$.
In this view, $\delta$ directly mirrors the robustness penalty $\varepsilon\kappa\smash{\norm{\smash{\underline V_{l+1}}}_\Hilbert}$ appearing in the continuous-state setting of Proposition~\ref{prop:empirical_value_update}.
The proof of Theorem~\ref{thm:ssr} is given in Appendix~\ref{app:proof_ssr}.

\paragraph{Interval Markov Processes.}
Perhaps the most widely used abstraction-based verification framework in the data-driven regime is that of \emph{interval Markov processes} (IMPs)~\citep{suilen2025robust}.
IMPs are a special instance of \emph{robust Markov processes} (RMPs) that models ambiguity by assigning probabilistic upper and lower bounds to each transition; i.e., an IMP defines functions $\check{p},\hat{p}\colon\Xdomain_f\times\borel{\Xdomain_f}\to[0,1]$ such that for all $x\in\Xdomain$ and $\hat{x}_j\in\Xdomain_f$ we have
$\check{p}(\hat{x}_j|\Pi_{\Xdomain_f}\!(x))\leq\law(A_j|x)\leq\hat{p}(\hat{x}_j|\Pi_{\Xdomain_f}\!(x)).$
This induces an $\hat{x}_i$-\emph{rectangular} ambiguity set~\citep{suilen2025robust}.
In the data-driven regime, the interval bounds $[\check{p},\hat{p}]$ are often constructed symmetrically around the learned estimator, giving rise to an error function $\epsilon\colon\Xdomain_f\times\borel{\Xdomain_f}\to[0,1]$ that controls the radius of the uncertainty interval:
\begin{align}
  \begin{split}\textstyle
    &\amb_{\hat{N}}^{\text{IMP}}(\hat{x}_i) = \Big\lbrace \bar\mu\colon\Xdomain\times\borel{\Xdomain}\to[0,1] \,\Big\vert\,
    \Big(\forall x\in A_i\colon \textstyle\sum_{\hat{x}_j\in\Xdomain_f}\bar\mu(A_j|x)=1\Big)
    \\
    &\textstyle\qquad\wedge
    \Big(\forall x\in A_i,\,\hat{x}_j\in\Xdomain_f\colon \Big\vert\,\bar\mu(A_j|x)-
    \sum_{l=1}^{\hat{N}} w_l(\hat x_i)\, \1_{A_j}(x_+^{(l)})
    \Big\vert \leq \epsilon(\hat{x}_j|\hat{x}_i)
    \Big)
    \Big\rbrace.
  \end{split}\label{eq:amb_set_IMP}
\end{align}

The following result is a direct consequence of \eqref{eq:amb_set_IMP} applied to \eqref{eq:robust_finite_value_update}, characterizing the robust value update of IMP-based approaches.
\begin{proposition}[IMP-Based Dynamic Programming]\label{prop:imp}
    Suppose Asms.~\ref{asm:markovianity} \& \ref{asm:ambiguity_set} hold with the ambiguity set $\amb^{\text{IMP}}_{\hat{N}}$ in \eqref{eq:amb_set_IMP}.
    Furthermore, let $\underline V_l\colon\Xdomain_f\to[0,1]$, with $l=T,\ldots,0$, such that $\underline V_T(\hat{x}_i) := \1_{S}(\hat{x}_i)$ and
    \begin{equation}
        \underline V_l(\hat{x}_i) := \1(A_i\subset S) \,\inf_{\bar\mu\in\amb_{\hat{N}}^{\text{IMP}}(\hat{x}_i)} \sum_{j=1}^{n} \bar\mu(A_j|\hat{x}_i)\, \underline V_{l+1}(\hat{x}_j).\label{eq:value_update_imc}
    \end{equation}
    Then, we have $P^S_\lawseq(x_0)\geq\underline V_0(\Pi_{\Xdomain_f}\!(x_0))$ for all $x_0\in\Xdomain_0$ with probability at least $1-\conf$.
\end{proposition}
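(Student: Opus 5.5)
The plan is to follow the proof of Proposition~\ref{prop:empirical_value_update} step for step. The one difference is that the unknown value function is replaced by the known interval ambiguity set, and the statement then comes down to monotonicity of the Bellman operator.

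\textbf{Step 1: reduce to one robust step per cell.} Under Assumption~\ref{asm:markovianity}, Proposition~\ref{prop:value_iteration} gives $P^S_\lawseq(x_0)=V_0(x_0)$. Assumption~\ref{asm:ambiguity_set} with $\amb^{\text{IMP}}_{\hat N}$ gives, with probability at least $1-\conf$, that $\law\in\amb^{\text{IMP}}_{\hat N}(\hat x_i)$ for every cell $i$ simultaneously. All remaining steps are deterministic on this event.

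\textbf{Step 2: backward induction.} We prove the claim
\[
V_l(x)\geq \underline V_l(\Pi_{\Xdomain_f}(x))\quad\text{for all }x\in\Xdomain,\ l=T,\ldots,0.
\]
\emph{Base case.} By the footnote convention, $\1_S$ on symbols is replaced by $\hat S\subset S$. Hence $\underline V_T(\hat x_i)=1$ only if $A_i\subset S$, which gives $V_T=\1_S\geq\underline V_T\circ\Pi_{\Xdomain_f}$.

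\emph{Inductive step.} Fix $x\in A_i$. If $A_i\not\subset S$, the right-hand side is $0$ and there is nothing to show. Otherwise $\1_S(x)=1$, and the induction hypothesis together with $\underline V_{l+1}$ being piecewise constant gives
\[
\E_\law[V_{l+1}(X_+)\mid x]\geq \sum_{j=1}^n \law(A_j\mid x)\,\underline V_{l+1}(\hat x_j).
\]
Since $\law\in\amb^{\text{IMP}}_{\hat N}(\hat x_i)$ on the event, this is at least $\inf_{\bar\mu\in\amb^{\text{IMP}}_{\hat N}(\hat x_i)}\sum_j\bar\mu(A_j\mid\cdot)\,\underline V_{l+1}(\hat x_j)$, which equals $\underline V_l(\hat x_i)$.

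\textbf{Conclusion.} Taking $l=0$ and $x=x_0$ gives the statement. The argument is exactly \eqref{eq:robust_finite_value_update} instantiated with the interval set, so it is the finite analogue of the proof of Proposition~\ref{prop:empirical_value_update}. The only difference is that no Cauchy--Schwarz step is needed, because the infimum is taken explicitly.

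\textbf{Main obstacle: the evaluation point of $\bar\mu$.} In \eqref{eq:value_update_imc} the infimum evaluates $\bar\mu(A_j\mid\hat x_i)$, whereas the true law is evaluated at the continuous state $x\in A_i$. The set $\amb^{\text{IMP}}_{\hat N}(\hat x_i)$ is defined by constraints holding for all $x\in A_i$, i.e.\ it is $\hat x_i$-rectangular. I would therefore read the inner objective as
\[
\inf_{\bar\mu}\ \inf_{x\in A_i}\ \sum_j\bar\mu(A_j\mid x)\,\underline V_{l+1}(\hat x_j).
\]
Under this reading, $\law(\cdot\mid x)$ restricted to the partition is a feasible probability vector in the interval polytope $\{q:\sum_j q_j=1,\ |q_j-\hat p_j|\le\epsilon(\hat x_j\mid\hat x_i)\}$. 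The infimum of a linear functional over this polytope is therefore a valid lower bound that depends only on $\hat x_i$. Formally this is the same as minimizing over the polytope directly, which also shows that the infimum is computable, by the standard sorting/greedy IMDP solution.

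I would also check that the truncation and indicator conventions are consistent. This is automatic: every $\underline V_l$ is a convex combination of values in $[0,1]$ multiplied by an indicator, so it stays in $[0,1]$.
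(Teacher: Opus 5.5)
Your proof is correct and follows the paper's route. The paper only asserts that the result is a direct consequence of substituting $\amb^{\text{IMP}}_{\hat N}$ into the robust finite update \eqref{eq:robust_finite_value_update}, and your backward induction is exactly the argument it leaves implicit: the high-probability event holding for all cells at once, monotonicity through the piecewise-constant $\underline V_{l+1}$, and the cell-rectangular reading of the infimum, which matches the $\inf_{\bar\mu,\,x\in A_i}$ in \eqref{eq:robust_finite_value_update}. One point worth keeping explicit is that your base case needs $\underline V_T(\hat x_i)=1$ only when $A_i\subset S$. The footnote's condition $\hat S\subset S$ constrains only the representatives and does not literally guarantee this, so the terminal condition has to be read as $\1(A_i\subset S)$ for the statement to hold.
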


The inner problem in \eqref{eq:value_update_imc} is a linear program that can be computed efficiently using, e.g., order-maximization \citep{givan2000bounded,mathiesen2024intervalmdp}.
Note that in comparison to Proposition~\ref{prop:empirical_value_update} and Theorem~\ref{thm:ssr}, the robust safety probability from Proposition~\ref{prop:imp} cannot be evaluated from the empirical CME directly.
In fact, in IMP-based safety verification most of the computational effort lies in determining the interval bounds $[\check{p},\hat{p}]$ that populate $\smash{\amb_{\hat{N}}^{\text{IMP}}}$, typically by propagating uncertainty from the learned model~\citep{mathiesen2024intervalmdp,wooding2024impact,schon2024btgp}---here, the empirical CME estimator.

\section{Proof of Theorem~\ref{thm:ssr}}\label{app:proof_ssr}
\textbf{Validity of the TV identity.}
The total variation distance identity $\norm{\hat p - \mu}_{\text{TV}} = 1 - \sum_j \min\{\hat p(A_j), \mu(A_j)\}$ and the subsequent coupling construction require $\hat p$ to be a non-negative probability measure.
The empirical probabilities \smash{$\hat p(A_j|\hat x_i) = \sum_{l=1}^{\hat N} w_l(\hat x_i)\,\1_{A_j}(x_+^{(l)})$} are formed from CME weights $w_l$, which can in principle be negative; in practice we clip the estimates to $[0,1]$ and renormalize to obtain a valid probability measure, which does not affect the asymptotic consistency of the CME.
The remaining proof proceeds under the assumption that $\hat p(\cdotx|\hat x_i)$ is a valid probability measure.
\begin{proof}
    We start by showing that the ambiguity set $\amb^{\text{SSR}}_{\hat{N}}$ of the form \eqref{eq:amb_set_SSR} indeed enforces an SSR \citep[Def.~6]{schoen2025bayesian}.
    For this, we select the measurable relation
    \begin{align*}
        \mathcal{R} &:= \{(\hat{x},x)\in\borel{\Xdomain_f\times\Xdomain}\mid \hat{x}=\Pi_{\Xdomain_f}\!(x)\},\\
        &\equiv \{(\hat{x}_i,x)\in\borel{\Xdomain_f\times\Xdomain}\mid x\in A_i\},
    \end{align*}
    induced by the partitioning $\Pi_{\Xdomain_f}$, which naturally relates a symbol $\hat{x}_i\in\Xdomain_f\subset\Xdomain$ with all points in its partition $A_i\subset\Xdomain$.
    Note that $\epsilon$ bounds the associated discretization error, i.e., $\epsilon\geq\sup_{(\hat{x},x)\in\mathcal{R}}\norm{\hat{x}-x}_2$ for all $\hat{x}\in\Xdomain_f$ \citep[Def.~6(c)]{schoen2025bayesian}.
    The initial condition $(\hat{x}_0,x_0)\in\mathcal{R}$, $\forall x_0\in\Xdomain_0$ \citep[Def.~6(a)]{schoen2025bayesian} holds by selecting $\hat{x}_0\in\Xdomain_{f,0}:=\Pi_{\Xdomain_f}\!(\Xdomain_0)$.
    For the case of verification, it thus suffices to show that there exists a sub-probability kernel $\mathbf{v}\colon\Xdomain_f\times\Xdomain\times\borel{\Xdomain_f\times\Xdomain}\to[0,1]$ that is a sub-probability coupling \citep[Def.~5]{schoen2025bayesian} of $$\hat p(d\hat{x}_i|\hat{x}):=\innerH{\1_{A_i}}{\hat\me_{\Hilbert_f|\Hilbert_f}^{\data_{\hat N}}(\hat{x})}{\Hilbert_f}$$ and $\law$ over $\mathcal{R}$ w.r.t. $\delta$ for all $(\hat{x},x)\in\mathcal{R}$~\citep[Def.~6(b)]{schoen2025bayesian}.
    To this end, there always exist a sub-probability kernel of the form $$\mathbf{v}(d\hat{x}_i\times B|\hat{x},x):=\delta_{d\hat{x}_i}(\Pi_{\Xdomain_f}\!(B))\min\{\hat p(d\hat{x}_i|\hat{x}),\,\law(B|x)\},$$ 
    satisfying the conditions Def.~5(a)--(c) of \citet{schoen2025bayesian} by construction.
    It remains to show that $\mathbf{v}(\mathcal{R}|\hat{x},x)\equiv\mathbf{v}(\Xdomain_f\times\Xdomain|\hat{x},x)\geq 1-\delta(\hat{x})$ for all $(\hat{x},x)\in\mathcal{R}$, revealing
    \begin{align*}
        \mathbf{v}(\Xdomain_f\times\Xdomain|\hat{x},x) &= \int_{A_j\in\Xdomain} \!\!\!\min\!\left\lbrace\hat p(\Pi_{\Xdomain_f}\!(A_j)|\hat{x}),\,\law(A_j|x)\right\rbrace,\\
        &= \sum_{j=1}^n \min\!\left\lbrace\hat p(\hat{x}_j|\hat{x}),\,\law(A_j|x)\right\rbrace,\\
        &= \sum_{j=1}^n \min\!\left\lbrace \innerH{\1_{A_j}}{\hat\me_{\Hilbert_f|\Hilbert_f}^{\data_{\hat N}}(\hat{x})}{\Hilbert_f}\!\!,\,\law(A_j|x)\right\rbrace.
    \end{align*}
    As via $(\hat{x},x)\in\mathcal{R}$ we have $\hat{x}=\Pi_{\Xdomain_f}\!(x)$, it follows that
    \begin{align*}
        \mathbf{v}(\Xdomain_f\times\Xdomain|\hat{x}_i,x) &\geq \sum_{j=1}^n \min\left\lbrace \innerH{\1_{A_j}}{\hat\me_{\Hilbert_f|\Hilbert_f}^{\data_{\hat N}}(\hat{x}_i)}{\Hilbert_f},\, \min_{x\in A_i}\law(A_j|x)\right\rbrace.
    \end{align*}
    According to Assumption~\ref{asm:ambiguity_set}, we have $\law\in\amb^{\text{SSR}}_{\hat{N}}$ with probability at least $1-\conf$, yielding that for all $i=1,\ldots,n$:
    \begin{equation*}
        \sum_{j=1}^n\min\Big\lbrace 
        \innerH{\1_{A_j}}{\hat\me_{\Hilbert_f|\Hilbert_f}^{\data_{\hat N}}(\hat{x}_i)}{\Hilbert_f},\,
        \min_{x\in A_i} \law(A_j\mid x)
         \Big\rbrace \geq 1-\delta(\hat x_i),
    \end{equation*}
    thus proving that $\mathbf{v}$ is indeed a valid sub-probability coupling. 
    Recognizing that $\norm{\hat p-\mu}_{\text{TV}}:=1-\sum_{j=1}^n\min\lbrace \hat p(A_j),\,\allowbreak \mu(A_j) \rbrace$ is the total variation distance between two discrete probability measures $\hat p,\mu\colon\Xdomain\to[0,1]$, we obtain \eqref{eq:amb_set_SSR}.
    This concludes the first part of the proof. 
    
    The form of the robust value update $\underline V_l(\hat{x}_i)$ follows directly from the \emph{$(\epsilon,\delta)$-robust Bellman operator} associated with the SSR \citep[Eq.~(13)]{schoen2025bayesian}, where the concrete transition kernel $\hat{\mathbf{t}}$ is replaced with the empirical CME, and the minimum w.r.t. the $\varepsilon$-robustified DFA update reduces to $\1(A_i\subset S)$ for a safety problem. The robust lower bound on the safety probability follows from Proposition~\ref{prop:value_iteration}, by applying the $(\epsilon,\delta)$-robust Bellman operator recursively over the horizon $T$, concluding the proof. 
\end{proof}

\section{Proof of Theorem~\ref{thm:empirical_safety_nonMarkov}}\label{app:proof_empirical_safety_nonMarkov}
\begin{proof}    
    The proof follows the ideas of \cite{kanagawa2014recovering}.
    We rewrite \eqref{eq:safety_probability} with $\rho^S$ to 
    \begin{align*}
        P^S_{\lawseq}(x) &= \E_{\lawseq}\!\!\left[ \rho^S(\X_{0:T})\mid X_0=x\right],\\
        &\geq \sum_{i=1}^N w_i(x)\, \rho^S(\x_{0:T}^{(i)}) - \Bigg\vert \sum_{i=1}^N w_i(x)\, \rho^S(\x_{0:T}^{(i)}) - \E_{\lawseq}\!\!\left[ \rho^S(\X_{0:T})\mid X_0=x\right] \Bigg\vert.
    \end{align*}
    The second term can be bounded through three terms:
    \begin{align*}
        &\left\vert \sum_{i=1}^N w_i(x)\, \rho^S(\x_{0:T}^{(i)}) - \E_{\lawseq}\!\!\left[ \rho^S(\X_{0:T})\mid X_0=x\right] \right\vert\\
        &\quad\leq\left\vert \sum_{i=1}^N w_i(x)\! \left(\rho^S(\x_{0:T}^{(i)}) -  \tilde\rho^S_N(\x_{0:T}^{(i)})\right) \right\vert\\
        &\qquad + \left\vert \sum_{i=1}^N w_i(x)\, \tilde\rho^S_N(\x_{0:T}^{(i)}) - \E_{\lawseq}\!\!\left[ \tilde\rho^S_N(\X_{0:T})\mid X_0=x\right] \right\vert\\
        &\qquad + \left\vert \E_{\lawseq}\!\!\left[ \tilde\rho^S_N(\X_{0:T}) - \rho^S(\X_{0:T}) \mid X_0=x\right] \right\vert.
    \end{align*}
    The first term is the empirical convolution error $\varepsilon_1(x)$.
    The second term reduces as follows.
    Note that $\tilde\rho^S_N\in\Hilbert_{\gamma_N}$ by construction (it is defined as the convolution with $K_{\gamma_N}$), so the reproducing property in $\Hilbert_{\gamma_N}^{T+1}$ applies:
    \begin{align*}
        &\left\vert \sum_{i=1}^N w_i(x)\, \tilde\rho^S_N(\x_{0:T}^{(i)}) - \E_{\lawseq}\!\!\left[ \tilde\rho^S_N(\X_{0:T})\mid X_0=x\right] \right\vert\\
        &\qquad= \left\vert\innerH{\tilde\rho^S_N}{\hat\me_{\Hilbert|\Hilbert^{T+1}}^{\data_{N}}(x) \!-\! \me_{\Hilbert|\Hilbert^{T+1}}^{\lawseq}(x)}{\Hilbert_{\gamma_N}}\right\vert,\\
        &\qquad\leq \norm{\tilde\rho^S_N}_{\Hilbert_{\gamma_N}}
        \norm{\hat\me_{\Hilbert|\Hilbert^{T+1}}^{\data_{N}} \!-\! \me_{\Hilbert|\Hilbert^{T+1}}^{\lawseq}}_{\Hilbert_{\gamma_N}}
        \sqrt{k_\gamma(x,x)},\\
        &\qquad\leq \norm{\tilde\rho^S_N}_{\Hilbert_{\gamma_N}}
        \!\cdot\left(\frac{\gamma}{\gamma_N}\right)^{\frac{d(T+1)}{2}}\!\norm{\hat\me_{\Hilbert|\Hilbert^{T+1}}^{\data_{N}} \!-\! \me_{\Hilbert|\Hilbert^{T+1}}^{\lawseq}}_{\Hilbert}
        \!\cdot\kappa,
    \end{align*}
    which reduces to $\varepsilon_2$, where the factor $(\gamma/\gamma_N)^{d(T+1)/2}$ arises from the norm conversion between product Gaussian RKHSs of dimension $d(T+1)$ with different bandwidths.
    The third term is bounded as follows
    \begin{equation*}
        \left\vert \E_{\lawseq}\!\!\left[ \tilde\rho^S_N(\X_{0:T}) - \rho^S(\X_{0:T}) \mid X_0=x\right] \right\vert\leq \norm{\tilde\rho^S_N - \rho^S}_{L_1(\lawseq)} \leq \norm{\tilde\rho^S_N - \rho^S}_{L_2(\lawseq)},
    \end{equation*}
    yielding $\varepsilon_3 = \norm{\tilde\rho^S_N - \rho^S}_{L_\infty(\Xdomain^{T+1})}$, which converges to zero for $N\to\infty$ (and $T$ fixed) by the $L_\infty$ approximation properties of Gaussian mollification in Besov spaces \citep[Proof of Theorem~1]{kanagawa2014recovering}.
\end{proof}

\section{Certified Lower Bounds via Histogram Binning}\label{app:conformal}

The goal is a \emph{certified lower bound} $p(x_0) \leq P^\mathcal{S}(x_0)$ that holds with high probability over the randomness in the calibration data, without any parametric or Markov assumption.
Standard split-conformal prediction \citep{vovk2005algorithmic} with score $s_i = \hat{p}(x_0^{(i)}) - y_i$ yields a \emph{prediction-set} guarantee covering the binary outcome $Y$, not a confidence interval for $E[Y|X=x_0] = P^\mathcal{S}_{\lawseq}(x_0)$ \citep{gupta2020distribution}.
For safety certification the latter is what is needed: we want to certify a specific initial condition $x_0$, not predict whether a single random rollout will be safe.

\paragraph{Histogram binning \citep{gupta2020distribution}.}
Given a held-out calibration set $\{(x_0^{(i)}, y^{(i)})\}_{i=1}^{n_{\mathrm{cal}}}\subset\Xdomain_0\times\{0,1\}$ and a predictor $\hat{p}\colon\Xdomain_0\to[0,1]$, the procedure is as follows:
\begin{enumerate}[leftmargin=*]
    \item \textbf{Bin:} choose quantile-based edges $\tau_0 \leq \ldots \leq \tau_B$ so each bin contains approximately $n_{\mathrm{cal}}/B$ calibration points; define the bin assignment map $m\colon[0,1]\to\{1,\ldots,B\}$, $m(p) = \min\{b \colon p < \tau_b\}$, and let $\mathcal{I}_b = \{i \colon m(\hat{p}(x_0^{(i)})) = b\}$, $n_b = |\mathcal{I}_b|$.
    \item \textbf{Estimate:} for each bin $b$, compute the empirical safety rate $\hat{\pi}_b := \frac{1}{n_b}\sum_{i \in \mathcal{I}_b} y^{(i)}$.
    \item \textbf{Certify:} define the Hoeffding width $\varepsilon_b := \sqrt{\frac{\ln(B/\conf)}{2 n_b}}$ and apply a one-sided bound, Bonferroni-corrected over $B$ bins:
    \[
        p_b = \max\!\left\lbrace0,\; \hat{\pi}_b - \varepsilon_b\right\rbrace.
    \]
    \item \textbf{Assign:} define the piecewise constant certified lower bound $p^{\mathrm{lb}} \colon \Xdomain \to [0,1]$, $p^{\mathrm{lb}} := p \circ m \circ \hat{p}$, where $p\colon\{1,\ldots,B\}\to[0,1]$, $p(b)=p_b$.
\end{enumerate}
Here $\hat{V}_b := \hat\pi_b(1-\hat\pi_b) \leq 1/4$ is the empirical variance bound for binary outcomes.
The guarantee (Theorem~4 in \citet{gupta2020distribution}, specialised to binary outcomes) is
\[
    \P\!\left(P^\mathcal{S}_{\lawseq}(x_0^{\mathrm{new}}) \geq p^{\mathrm{lb}}(x_0^{\mathrm{new}})\right) \geq 1 - \conf,
\]
where the probability is over the joint draw of calibration data $\{(x_0^{(i)},y^{(i)})\}_{i=1}^{n_{\mathrm{cal}}}$ and the new test point $(x_0^{\mathrm{new}},y^{\mathrm{new}})$ from the same distribution.
This is a marginal guarantee over the evaluation distribution; when the predictor $\hat{p}$ is well-calibrated and spatially discriminative, different initial states receive different certified values, enabling effective per-state certification even though the formal guarantee is marginal.

\textbf{Discriminativeness.}\;
The quality of the certificate depends on the quality of the predictor.
If $\hat{p}$ is well-calibrated and spatially varying, different bins have different $\hat{\pi}_b$, yielding a spatially informative $p$ that varies across $x_0\in\Xdomain_0$.
If $\hat{p}$ is degenerate (all predictions identical), all calibration points fall in one bin, and $p^{\mathrm{lb}} \equiv \hat{\pi}_{\mathrm{all}} - \varepsilon_{\mathrm{all}}$ is a single constant equal to the dataset mean minus the Hoeffding correction, where $\varepsilon_{\mathrm{all}} := \sqrt{\ln(1/\conf)/(2n_{\mathrm{cal}})}$ (i.e., $B=1$).
This constant is a valid \emph{marginal} lower bound but not a \emph{pointwise} one: it will be violated wherever $P^\mathcal{S}_{\lawseq}(x_0) < \hat{\pi}_{\mathrm{all}} - \varepsilon_{\mathrm{all}}$, i.e., in every genuinely dangerous region.
We report \emph{soundness} (fraction of evaluation points where $p^{\mathrm{lb}} \leq P^\mathcal{S}_{\mathrm{MC}}\approx P^\mathcal{S}_{\lawseq}$) and \emph{discriminativeness} (std of $p^{\mathrm{lb}}$ across the grid) as diagnostics.

\textbf{Implementation Details.}\;
We use $B = 10$ bins and $\conf = 0.1$ (90\% coverage) throughout.
For the synthetic quadrotor ($N = 1000$ training trajectories, $n_{\mathrm{cal}} = 1000$ calibration trajectories), this gives $n_b = 100$ per bin and a Hoeffding width of $\varepsilon_b \approx 0.15$.
When all predictions are numerically equal (as for the indirect estimator), we collapse to $B = 1$ to avoid spurious spatial variation from floating-point noise.

\section{Quantitative Semantics}\label{app:quantitative_semantics}
Theorem~\ref{thm:empirical_safety_nonMarkov} shows that strict safety indicators introduce smoothing effects and slow convergence.
This connects directly to quantitative semantics~\citep{maler2004monitoring,fainekos2009spatiotemporalrobustness}, where smooth robustness values quantify the margin of satisfaction.
Given a robustness functional $\tilde\rho^S\in\Hilbert^{T+1}$, Theorem~\ref{thm:empirical_safety_nonMarkov} simplifies as follows.
\begin{corollary}[Quantitative Semantics]\label{cor:quantitative}
    Suppose Asm.~\ref{asm:ambiguity_set} holds and let $\tilde\rho^S\in\Hilbert^{T+1}$ be a smooth robustness functional~\citep{maler2004monitoring,fainekos2009spatiotemporalrobustness} satisfying $\tilde\rho^S(\x)\leq\rho^S(\x)$ for all $\x\in\Xdomain^{T+1}$.
    Then
    \begin{equation*}
         P^S_{\lawseq}(x)
        \geq \sum_{i=1}^N w_i(x) \, \tilde\rho^{S}(\x^{(i)}) - \varepsilon\kappa\norm{\tilde\rho^S}_{\Hilbert^{T+1}},
    \end{equation*}
    with probability at least $1-\conf$.
\end{corollary}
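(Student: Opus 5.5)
The plan is to rerun the argument behind Theorem~\ref{thm:empirical_safety_nonMarkov}. Because $\tilde\rho^S$ now lies in $\Hilbert^{T+1}$ itself, both the smoothing step (source of $\varepsilon_1$ and $\varepsilon_3$) and the bandwidth conversion (source of the factor $(\gamma/\gamma_N)^{d(T+1)/2}$ in $\varepsilon_2$) become unnecessary. Only the statistical term remains.

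\textbf{Step 1: pass to the smooth functional.} Rewrite \eqref{eq:safety_probability} as $P^S_{\lawseq}(x)=\E_{\lawseq}[\rho^S(\X_{0:T})\mid X_0=x]$. The pointwise domination $\tilde\rho^S\le\rho^S$ and monotonicity of conditional expectation give
\[
P^S_{\lawseq}(x)\ \ge\ \E_{\lawseq}[\tilde\rho^S(\X_{0:T})\mid X_0=x].
\]
No Markov assumption is used here, only the definition of the $T$-step law.

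\textbf{Step 2: apply the reproducing property.} Since $\tilde\rho^S\in\Hilbert^{T+1}$, the $T$-step analogue of \eqref{eq:innerProdCond} applied to the embedding \eqref{eq:CME_Tstep} yields
\[
\E_{\lawseq}[\tilde\rho^S(\X_{0:T})\mid X_0=x]=\innerH{\tilde\rho^S}{\me_{\Hilbert|\Hilbert^{T+1}}^{\lawseq}(x)}{\Hilbert^{T+1}}.
\]
The product kernel $k^{T+1}$ has feature map $\phi^{T+1}=\otimes_t\phi(\x_t)$, so its RKHS is the tensor product $\Hilbert^{T+1}$ and this identity is justified. We then add and subtract the empirical embedding \eqref{eq:empirical_CME_Tstep}. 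By linearity and reproducing, the empirical inner product equals $\sum_{i=1}^N w_i(x)\tilde\rho^S(\x^{(i)})$.

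\textbf{Step 3: bound the remainder.} The remaining term is $\innerH{\tilde\rho^S}{(\me^{\lawseq}-\hat\me^{\data_N})(x)}{\Hilbert^{T+1}}$, which Cauchy--Schwarz bounds below by $-\norm{\tilde\rho^S}_{\Hilbert^{T+1}}\norm{(\me^{\lawseq}-\hat\me^{\data_N})(x)}_{\Hilbert^{T+1}}$. Next we pass from the pointwise norm to the $\mathcal{G}^{T+1}$ norm. For a vector-valued RKHS with operator-valued kernel $k(x,x')\mathrm{Id}$ we have $\norm{F(x)}\le\sqrt{k(x,x)}\,\norm{F}_{\mathcal{G}^{T+1}}\le\kappa\norm{F}_{\mathcal{G}^{T+1}}$. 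This is exactly the step already used in Proposition~\ref{prop:empirical_value_update} and Theorem~\ref{thm:empirical_safety_nonMarkov}. Finally, Assumption~\ref{asm:ambiguity_set}, instantiated with the $T$-step ambiguity set $\amb_N^T$ in \eqref{eq:amb_set_nonMarkov}, gives $\norm{\me^{\lawseq}-\hat\me^{\data_N}}_{\mathcal{G}^{T+1}}\le\varepsilon$ with probability at least $1-\conf$. Combining the three steps yields the stated bound on that event.

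\textbf{Main obstacle.} The delicate point is making precise the identification of the $\Hilbert^{T+1}$ inner product with the conditional expectation. Concretely, this means checking that the $T$-step CME, which conditions only on $x_0$ and integrates over $\x_{1:T}$ while keeping $x_0$ fixed in the tensor, reproduces $\E[\tilde\rho^S(x,\X_{1:T})\mid X_0=x]$. It also means checking that the operator-valued kernel of $\mathcal{G}^{T+1}$ is bounded by $\kappa^2$; if that kernel is not scalar times identity, $\kappa$ must be redefined accordingly. I would settle the first point by Bochner integrability of $\phi^{T+1}$, using boundedness of $k$, and the second by taking $\mathcal{G}^{T+1}$ with kernel $k(x,x')\mathrm{Id}$ as in \citep{Park2020MeasureTheoretic}. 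Neither step needs the Besov machinery of \citep{kanagawa2014recovering}.
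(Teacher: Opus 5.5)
Your proposal is correct and follows the paper's proof: the domination $\tilde\rho^S\le\rho^S$, the reproducing property for the $T$-step embedding, then Cauchy--Schwarz combined with the ambiguity-set membership in $\amb_N^T$. The only difference is that you spell out the evaluation bound $\norm{F(x)}\le\kappa\norm{F}_{\mathcal{G}^{T+1}}$, which the paper folds into its one-line Cauchy--Schwarz step.
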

\begin{proof}
    Since $\tilde\rho^S\in\Hilbert^{T+1}$, the reproducing property gives $$\E_{\lawseq}[\tilde\rho^S(\X_{0:T})\mid X_0=x]=\innerH{\tilde\rho^S}{\me_{\Hilbert|\Hilbert^{T+1}}^{\lawseq}(x)}{\Hilbert^{T+1}}.$$
    Applying Cauchy--Schwarz and $\me_{\Hilbert|\Hilbert^{T+1}}^{\lawseq}\in\amb_N^T$ (Asm.~\ref{asm:ambiguity_set}) yields the bound directly, with $P^S_{\lawseq}(x)\geq\E_{\lawseq}[\tilde\rho^S(\X_{0:T})\mid X_0=x]$ following from $\tilde\rho^S\leq\rho^S$.
\end{proof}
Notably, general quantitative semantics do not admit an equivalent DP recursion, highlighting that the direct inference approach is inherently more expressive.


\end{document}